\documentclass[a4paper]{article}


\usepackage[margin=1.31in]{geometry}

\usepackage{graphicx}
%
\usepackage[switch, modulo]{lineno}

\usepackage{times}

\usepackage{subfig}
\usepackage{amsmath}
\usepackage{amsfonts}
\usepackage{thmtools} 
\usepackage{wasysym}
\usepackage{mathtools}
\usepackage{stmaryrd}
\usepackage{tikz}
\usetikzlibrary{matrix,arrows,decorations.pathmorphing}

\usepackage{xspace}
\usepackage{colortbl}

\usepackage{enumitem} 
\setlist[itemize]{leftmargin=5mm,itemsep=0.0pt,topsep=3pt}
\setlist[description]{itemsep=1pt}

\newtheorem{myDefinition}{Definition}[section]
\numberwithin{myDefinition}{section}

\numberwithin{myTheorem}{section} 

\newtheorem{myCorollary}{Corollary}[section]
\numberwithin{myCorollary}{section} 

\newtheorem{myProposition}{Proposition}[section]
\numberwithin{myProposition}{section} 

\newtheorem{myExample}{Example}
\numberwithin{myExample}{section} 
\newtheorem{proof}{Proof}[section]
\newtheorem{myNotation}{Notation}[section]
\numberwithin{myNotation}{section}

\newcommand{\Definitionref}{\textcolor{black}{Definition}\xspace}

\newcommand{\Tableref}{\textcolor{black}{Table}\xspace}

\newcommand{\Equationref}{\textcolor{black}{Equation}\xspace}

\newcommand{\Figureref}{\textcolor{black}{Figure}\xspace}

\newcommand{\Propositionref}{\textcolor{black}{Proposition}\xspace}

\newcommand{\tuple}[1]{\left\langle #1 \right\rangle}

\newcommand{\lit}{\varphi}
\newcommand{\LIT}{\ensuremath{\Phi}\xspace}

\newcommand{\conf}{\ensuremath{conflict}\xspace}
\newcommand{\conflict}{}

\newcommand{\labelling}{\mathrm{L}}
\newcommand{\alas}{\mathbf{l}} 

\newcommand{\Trule}{Rules}
\newcommand{\Tsup}{\succ}
\newcommand{\Tconf}{\conf}

\newcommand{\argsbuilt}[1]{Args(#1)}

\newcommand{\Body}[1]{\ensuremath{Body}(#1)}
\newcommand{\Conc}{\ensuremath{conc}\xspace}
\newcommand{\Sub}{\ensuremath{Sub}\xspace}
\newcommand{\DirectSub}{\ensuremath{DirectSub}\xspace}
\newcommand{\Rules}{\ensuremath{Rules}\xspace}
\newcommand{\TopRule}{\ensuremath{TopRule}\xspace}

\newcommand{\AR}{\mathcal{A}}
\newcommand{\defeat}{\leadsto}
\newcommand{\support}{\Mapsto}

\newcommand{\gengraphsymb}[1]{G}
\newcommand{\gengraph}[1]{G_{#1}}
\newcommand{\subgraphs}[1]{\Sub(#1)}

\newcommand{\specification}{specification~}

\newcommand{\set}[1]{\{#1\}}

\newcommand{\subtheory}{U}


\newcommand{\primaryquote}[1]{`#1'}
\newcommand{\secondaryquote}[1]{``#1''}

\newcommand{\IN}{\mbox{\scalebox{0.75}{$\mathsf{IN}$}}}
\newcommand{\OUT}{\mbox{\scalebox{0.75}{$\mathsf{OUT}$}}}
\newcommand{\UND}{\mbox{\scalebox{0.75}{$\mathsf{UN}$}}}
\newcommand{\ON}{\mbox{\scalebox{0.75}{$\mathsf{ON}$}}}
\newcommand{\OFF}{\mbox{\scalebox{0.75}{$\mathsf{OFF}$}}}
\newcommand{\OFFJ}{\mbox{\scalebox{0.75}{$\mathsf{OFJ}$}}}
\newcommand{\SKJ}{\mbox{\scalebox{0.75}{$\mathsf{SKJ}$}}}
\newcommand{\CRJ}{\mbox{\scalebox{0.75}{$\mathsf{CRJ}$}}}
\newcommand{\NOJ}{\mbox{\scalebox{0.75}{$\mathsf{NOJ}$}}}

\newcommand{\inn}{\mathsf{in}}
\newcommand{\out}{\mathsf{out}}
\newcommand{\und}{\mathsf{un}}
\newcommand{\no}{\mathsf{no}}

\newcommand{\unp}{\mathsf{unp}}
\newcommand{\off}{\mathsf{off}}

\newcommand{\klabels}{\mathsf{LitLabels}}
\newcommand{\llabels}{\mathsf{ArgLab}}
\newcommand{\jarglabels}{\mathsf{JArglabels}}
\newcommand{\labarg}{\mathrm{L}}
\newcommand{\lablit}{\mathrm{K}}

\newcommand{\setofalllabels}[2]{\ensuremath{{\mathcal{L}_{#2}(#1)}}}
\newcommand{\crit}[3]{\ensuremath{\mathcal{L}^{#1}_{#3}(#2)}}  
  
\newcommand{\argset}{\mathcal{A}}
\newcommand{\genargset}{\mathcal{E}}

\newcommand{\domain}{E}
\newcommand{\domainelem}{e}

\newcommand{\rough}{worst-case sensitive}
\newcommand{\Rough}{Worst-case sensitive}

\newcommand{\gullible}{semi-skeptical\xspace}
\newcommand{\Gullible}{Semi-skeptical\xspace}


\newcommand{\CF}{\ensuremath{\mathsf{cf}}}
\newcommand{\ALL}{\ensuremath{\mathsf{all}}}

\newcommand{\rulecomplete}{rule-complete\xspace}
\newcommand{\subargumentcomplete}{subargument-complete\xspace}
\newcommand{\Rulecomplete}{Rule-complete\xspace}
\newcommand{\Subargumentcomplete}{Subargument-complete\xspace}

\newcommand{\PTF}{PTF\xspace}
\newcommand{\mPTF}{\mathrm{PTF}\xspace}
\newcommand{\PGF}{PGF\xspace}
\newcommand{\mPGF}{\mathrm{PGF}\xspace}
\newcommand{\PAF}{PLF\xspace}
\newcommand{\mPAF}{\mathrm{PLF}\xspace}
\newcommand{\PAG}{PAG\xspace}
\newcommand{\PAGs}{PAGs\xspace}
\newcommand{\mPAG}{\mathrm{PAG}\xspace}
\newcommand{\PTFs}{PTFs\xspace}
\newcommand{\PGFs}{PGFs\xspace}
\newcommand{\PAFs}{PLFs\xspace}
\newcommand{\PEF}{PEF\xspace}
\newcommand{\mPEF}{\mathrm{PEF}\xspace}
\newcommand{\PEFs}{PEFs\xspace}

\newcommand{\genPGF}[1]{PGF(#1)}
\newcommand{\genPAF}[1]{PLF(#1)}

\newcommand{\basedPGF}[1]{PGF(#1)}
\newcommand{\basedPTF}[1]{PTF(#1)}

\newcommand{\EP}{P_{\textrm{PEF}}}

\newcommand{\vBone}{0.5}
\newcommand{\vBtwo}{0.2}

\newcommand{\litbone}{\neg\mathsf{b1}}
\newcommand{\litbtwo}{\neg\mathsf{b2}}
\newcommand{\litb}{\neg\mathsf{b}}
\newcommand{\litc}{\mathsf{c}}
\newcommand{\litd}{\neg \mathsf{c}}

\newcommand{\RT}[1]{\textcolor{black}{#1}}
\newcommand{\RR}[1]{\textcolor{black}{#1}}
\newcommand{\RRR}[1]{\textcolor{black}{#1}}
\newcommand{\RRT}[1]{\textcolor{black}{#1}}
\newcommand{\PB}[1]{\textcolor{black}{#1}}
\newcommand{\PBB}[1]{\textcolor{black}{#1}}
\newcommand{\PBX}[1]{\textcolor{black}{#1}}

\newcommand{\wfag}{well-formed\xspace}
\newcommand{\Wfag}{Well-formed\xspace}

\newcommand{\Selunc}{Acceptance\xspace}
\newcommand{\selunc}{acceptance\xspace}

\newcommand{\probFrame}{\mathfrak{F}\xspace}
%
%
%
%
%

\begin{document}
\title{A Labelling Framework for Probabilistic Argumentation
}


\author{
\begin{tabular}{c}
  R\'{e}gis Riveret,
  \emph{CSIRO, Australia}\\[0.5ex]
  Pietro Baroni,
  \emph{University of Brescia, Italy}\\[0.5ex]
  Yang Gao,
  \emph{Chinese Academy of Sciences, China}\\[0.5ex]
  Guido Governatori,
  \emph{CSIRO, Australia}\\[0.5ex]
  Antonino Rotolo,
    \emph{University of Bologna, Italy}\\[0.5ex]
  Giovanni Sartor,
  \emph{European University Institute,  Italy}
  \end{tabular}
}

\date{}
\maketitle

\begin{abstract}
The combination of argumentation  and probability paves the way to new accounts of  qualitative and  quantitative uncertainty, thereby offering new theoretical and applicative opportunities.
Due to a variety of interests, probabilistic argumentation is approached in the literature with different frameworks, pertaining to structured  and abstract argumentation, and with respect to diverse types of  uncertainty, in particular the uncertainty on the credibility of the premises, the uncertainty about which arguments to consider, and the uncertainty on the acceptance status of arguments or statements.
\RRR{Towards a general framework for probabilistic argumentation, we investigate a labelling-oriented framework} encompassing a basic setting for rule-based argumentation and its (semi-) abstract account, along with diverse types of uncertainty. Our framework \PB{provides a systematic treatment of various kinds of uncertainty and of their relationships} and allows us to back or question assertions from the literature.
\end{abstract}

\section{Introduction}

Argumentation  aims at supporting rational persuasion and deliberation in domains where no conclusive logical proofs are available. It addresses defeasible claims raised on the basis of  partial,  uncertain and possibly conflicting pieces of information. Argumentation has been a traditional concern for philosophy and legal theory since Aristotle and Cicero, but only recently has become a focus for  research in artificial intelligence \cite{DBLP:journals/aim/AtkinsonBGHPRST17,BenchCapon2007619}. 

There exist\RRT{s} a variety of  formal models capturing different aspects of the complexity of argumentation activities.
 For instance\RRT{,} argument construction models (either rule-based or logic-based) \cite{DBLP:journals/argcom/ModgilP14,Toni2014,GarciaSimari2014} need to be integrated with formal approaches for the evaluation of the acceptance status of arguments, possibly at a more abstract level \cite{dung95acceptability,DBLP:journals/ker/BaroniCG11}. Moreover, \PBB{argument acceptance needs to be projected on the statements corresponding to their conclusions,  in order to assess their acceptance in turn.} In general, different aspects and their models need to be combined together to get a satisfactory coverage of the full picture.

While argumentation-based reasoning enhances a system's ability to \PBX{reach} defeasible determinations under conditions of uncertainty and conflict, most formal approaches to argumentation suffer a substantial limitation: they are unable to provide a comparative quantitative account of the persuasiveness of alternative conclusions. Typically,  argumentation systems either identify a single skeptical outcome, or propose a set of credulous alternatives, without specifying corresponding degrees of credibility.

It  has been convincingly argued that probability theory \RR{can} help to fill the gap between qualitative and quantitative accounts of uncertainty. For example, probabilistic methods  may propagate quantified uncertainties over (clusters of) premises into quantified uncertainties concerning the dialectical acceptability of arguments and  conclusions.
Hence, the combination of formal argumentation and probability theory 
has received increasing attention in recent years. 
 
\RRR{Different types of  uncertainty\PBX{, which can be the subject  of probabilistic evaluation,}  can be identified in argumentation. 
For instance,  there may be uncertainty about which premises to believe to construct arguments. There can also be  uncertainty about which arguments and relationships to consider when an abstract representation of arguments and their relations is adopted. Then there can be further uncertainty about the outcomes of the assessment, namely the acceptance status of arguments and statements. This list is not exhaustive and all \RRR{kinds} of uncertainty  are potentially appreciable and meaningful.}

\RRR{Since different types of probabilistic uncertainty can be discerned  and argumentation can be dealt with at different level of abstraction, \PBB{different families of approaches to probabilistic argumentation can be \RRR{found} in the literature.
First probabilistic notions have been \RRT{investigated} at different abstraction levels, ranging from structured argumentation formalisms \cite{doi:10.1080/11663081.1997.10510900,DBLP:journals/ail/RiveretRS12,DBLP:conf/at/Rienstra12,Dung:2010:TAJ} to abstract  argumentation frameworks \cite{DBLP:conf/tafa/LiON11,DBLP:conf/ecai/Thimm12,DBLP:journals/ijar/Hunter13,KRthimm:16,DBLP:conf/atal/RiveretG16,DBLP:journals/argcom/RiveretKDP15,DBLP:conf/atal/RiveretPKD15}. Further, one can distinguish approaches based on the uncertainty concerning \RRT{the arguments to  actually include in the argumentation process}, as in the so-called `constellations' approach \cite{DBLP:conf/tafa/LiON11,DBLP:journals/ijar/Hunter13}, and approaches focusing on a \PBB{probabilistic notion of the acceptance status of arguments}, as in the `epistemic' approach \cite{DBLP:conf/ecai/Thimm12,DBLP:journals/ijar/Hunter13,KRthimm:16}.} 
Existing approaches are thus heterogeneous and complementary, leading to the goal of setting up a general \RRT{and unified} framework \PBB{able} to cover multiple types of  probabilistic uncertainty in structured and abstract argumentation.}\smallskip

\textbf{Contribution.}  
\RRR{Towards a general framework for  probabilistic  argumentation,
we present a labelling-oriented framework that we call  \emph{probabilistic labellings} and  
which covers uncertainty on  inclusion of argumentative pieces as well as uncertainty regarding acceptance of arguments or statements, even \PB{in the case} all the argumentative pieces are included in the reasoning activity.}

To \RRT{establish the} framework of probabilistic labellings, we approach  probabilistic argumentation at various levels of abstraction.  \RRT{We start} from a basic rule-based argumentation setting, \RRT{then give} a semi-abstract account of it, \PBX{and} finally \RRT{devise}  probabilistic labellings of arguments, and of statements 
(an issue which has received limited attention in previous literature).
\PBB{Along this journey, the main contributions  are \RRR{as follows}:
\begin{itemize}
\item we introduce some non-traditional argument labellings, going beyond the three labels $\IN$, $\OUT$ and $\UND$ commonly used in the literature, \RRT{to capture} different kinds of uncertainty in a unified representation;
\item we extend this unified labelling-based representation to statements corresponding to argument conclusions;
\item we define a comprehensive set of probabilistic frames corresponding to different kinds of uncertainty at different abstraction level\PBX{s} and analyse the relationships betwen them;
\item we analyse the enhanced expressiveness of \RRT{probabilistic labellings}, with particular reference to  some influential probabilistic argumentation approaches in the literature.
\end{itemize}
}

We will \RRR{refrain} from discussing possible interpretations of probability values (such as classical, frequentist or Bayesian views on these matters), and  we will not investigate algorithms to compute the probability of the status of arguments and statements, but redirect the readers to some relevant \RRR{literature}. \smallskip

{ \textbf{Outline.}  The paper is organised as follows. 
In Section \ref{sec:genlabelling}, we introduce a minimalist rule-based argumentation framework and a (semi-) abstract account featuring argumentation graphs. \PBB{On this basis we then introduce our generalised labelling approach for arguments and statements}.
In Section \ref{sec:probSetting}, we present a set of probabilistic frames for argumentation, capturing uncertainty at different stages of the argumentation process and leading to the main notion of probabilistic labellings of arguments and statements.
Section \ref{secconstepa} analyses our proposal and its technical properties in relation with two \PB{influential} 
approaches to probabilistic argumentation, namely the \primaryquote{constellations} approach and the \primaryquote{epistemic} approach.
We \RRT{discuss} relevant literature in Section \ref{related}, before concluding in Section \ref{sec:conclusion}. \RRR{To help the \PBB{reader}, \RT{key notations are summarised  at the end of the paper}.}

\section{Labelling of Arguments and Statements}
\label{sec:genlabelling}

We assume a generic argumentation process which can be \RRT{summarised} as follows: arguments are built \RRT{out of} a rule base;  their relationships \RRT{induce} an argumentation graph; argument acceptance is assessed on the basis of the argumentation graph \RRT{through} labelling semantics; and statement acceptance is assessed on the basis of argument acceptance labellings.
This section introduces a \RRT{formalisation} of the various steps of the above process, which represents a necessary basis for our approach to probabilistic argumentation.
In particular,   Subsection \ref{subsection:argumentationFramework} introduces a simple rule-based formalism and the relevant graph representation\PBX{,} while subsections \ref{subsection:labellingarguments} and \ref{subsection:labellingstatements} deal with argument and statement labellings respectively.
\PBB{The overall schematisation of the argumentation process rests on the existing literature and is not novel \emph{per se}, \RRT{while labellings} to \RRT{handle} at the same time argument or statement inclusion and (if included) argument or statement acceptance, as proposed in subsections \ref{subsection:labellingarguments} and \ref{subsection:labellingstatements}, \RT{provide} an original generalisation of the traditional \RRT{usage of} labellings in the literature.}

\subsection{\PB{Argument construction and argumentation graphs}}
\label{subsection:argumentationFramework}

\sloppy We first present a  rule-based argumentation framework and its (semi-) abstract account, specifying the structures on which we will develop our proposal. 
This rule-based argumentation framework  is  \primaryquote{minimalist}, that is,  we avoid overloading it with features which are not relevant for our approach to probabilistic argumentation. 
As for reference, this rule-based setting is akin to an argumentative interpretation of Defeasible Logic  \cite{jlc:argumentation,comma16}, \RR{with some definitions inspired by the ASPIC$^+$ framework \cite{DBLP:journals/argcom/ModgilP14}}.
Eventually,  Dung's  abstract argumentation graphs \cite{dung95acceptability}  are slightly adapted to capture this rule-based setting in an abstract way and to \RR{characterise different} probabilistic argumentation frameworks \PBB{(see also \cite{LiThesis} for an approach mapping ASPIC$^+$ frameworks to a special kind of abstract frameworks, called Extended Evidential Argumentation Frameworks)}.    \smallskip

\RRT{The building blocks of the argumentation  framework are  atomic formulas (\RR{or} atoms, i.e. formulas with no propositional structures), from which literals are considered.} 

\begin{myDefinition}[Literal]\label{deflit}
A \emph{literal} is an atomic formula  or the negation of an atomic formula.
\end{myDefinition}
%
\begin{myDefinition}[Complementary literal]
Given a literal $\lit$, its  \emph{complementary literal} is a literal, denoted as $- \lit$, such that if $\lit$ is an atom $p$ then $-\lit$ is its negation $\neg p$, and if $\lit$ is $\neg q$ then $-\lit$ is $q$. 
\end{myDefinition}
We distinguish the negation $\neg$ from the negation as failure, denoted  $\sim$, according to which $\sim \lit$ indicates the failure to derive $\lit$. \smallskip

\sloppy The next construct regards rules that relate literals. 
For the sake of simplicity\PBX{,} we \RRT{have only defeasible rules, i.e. in our context, rules that can be defeated by other rules.}

\begin{myDefinition}[Defeasible rule]
\label{rule}
A \emph{defeasible rule} is a construct of the form:
\begin{center}
$r : \lit_1,\ldots, \lit_n, \sim \lit'_{1}, \ldots, \sim \lit'_{m} \Rightarrow \lit $
\end{center}
 where
\begin{itemize}
\item $r$ is the unique identifier of the rule;
\item $\lit_1,\ldots, \lit_n, \sim \lit'_{1}, \ldots, \sim \lit'_{m}$ ($0 \leq n$ and $0 \leq m$) is the body of the rule, and \RRR{all the} $\lit_i$ and $\lit'_{j}$ are literals; 
\item $\lit$ is the consequent (or head) of the rule, which is a single literal.
\end{itemize}
\end{myDefinition}
\begin{myNotation}
The body of a rule $r$ is denoted $\Body{r}$. So given a rule as in Definition \ref{rule}, we have $\Body{r} = \{ \lit_1,\ldots, \lit_n, \sim \lit'_{1}, \ldots, \sim \lit'_{m}\}$.
\end{myNotation}

Rules  may lead to conflicting literals (we ensure later that two conflicting literals cannot be both accepted). For this reason, we assume that a conflict relation is defined over the set of literals to express conflicts in addition to those corresponding to negation.

\begin{myDefinition}[Conflict relation]
\label{conflict}
Given a set of literals $\LIT$, a \emph{conflict relation} `$\conf$' is a binary relation over $\LIT$, i.e. $ \conf \subseteq \LIT\times \LIT$, such that for any $\lit_1, \lit_2 \in \LIT$, if  $\lit_1$ and  $\lit_2$ are complementary, i.e.  $\lit_1 = -\lit_2$, then  $\lit_1$ and  $\lit_2$ are in conflict, i.e. $\conflict(\lit_1, \lit_2) \in \conf$.
\end{myDefinition}
The conflict relation may be further specified. For example, the relation may be refined with asymmetric and symmetric conflicts to deal with  contrary or contradictory literals as in \cite{DBLP:journals/argcom/ModgilP14}.
However, \PB{treating in detail} such sophistications is not necessary for our purposes. \smallskip

When two rules have  heads in conflict, one rule may prevail over another one. Informally, a rule superiority relation $r_1\succ r_2$ states that the rule $r_1$ prevails over the rule $r_2$. 

\begin{myDefinition}[Superiority relation]
Let $\Trule$ be a set of rules. \PB{A \emph{superiority relation} $\succ$} is a binary relation over $\Trule$, i.e. $ \succ \subseteq \Trule\times \Trule$, \PB{with $r \succ r'$ denoting that $r$ is superior to $r'$}. 
\end{myDefinition}
The superiority relation may enjoy some particular properties. For example, it may be antireflexive and antisymmetric, so that for a rule $r$ it \PBX{does not hold} that $r \succ r$ and for two distinct rules $r$ and $r'$, we cannot have both $r \succ r'$ and $r' \succ r$. However, as for the conflict relation, \PB{treating in detail} such sophistications is not necessary for our purposes.\smallskip

From a set of rules, a conflict relation and a superiority relation, we can \RRR{define} defeasible theories, \RRR{cf. \cite{jlc:argumentation,comma16}}.

\begin{myDefinition}[Defeasible theory]
\label{def:dt}
\sloppy A \emph{defeasible theory} is a tuple $\tuple{\Trule, \conf, \Tsup}$ where $\Trule$ is a set of rules, $\Tconf$ is a conflict relation over literals, and $\Tsup$ is a superiority relation over rules.
\end{myDefinition}

By chaining rules of a defeasible theory, we can build arguments as defined below, \RRR{cf. \cite{DBLP:journals/argcom/ModgilP14}}.
\begin{myDefinition}[Argument] 
\label{def:argument}
An \emph{argument} $A$ constructed from a defeasible theory $\tuple{\Trule, \Tconf, \Tsup}$ is a finite construct of the form:  \begin{center}
$A: \, A_1, \ldots A_n, \sim \lit_{1}, \dots, \sim \lit_m \Rightarrow_r \lit$
\end{center}
where
\begin{itemize}
\item $A$ is the unique  identifier of the argument;
\item $0 \leq n$ and $0 \leq m$, and $A_1, \ldots, A_n$ are arguments constructed from the defeasible theory $\tuple{\Trule, \Tconf, \Tsup}$;
\item $\lit$ is the conclusion of the argument $A$. The conclusion of an argument $A$ is  denoted $\Conc(A)$, i.e. $\Conc(A) = \lit$; 
\item \RRR{$\exists r \in \Trule$ such that} $r: \Conc(A_1), \ldots, \Conc(A_n), \sim \lit_{1}, \dots, \sim \lit_m \Rightarrow \lit$.
\end{itemize}
\end{myDefinition}
\begin{myDefinition}[\PBB{Subarguments and rules}]
\label{definition:elements}
\PBB{Given an argument $A: \, A_1, \ldots A_n, \sim\lit_{1}, \dots,$ $\sim \lit_m \Rightarrow_r \lit$, the set of its subarguments  $\Sub(A)$, the set of its direct subarguments $\DirectSub(A)$,  the last inference rule $\TopRule(A)$\RRT{,} and the set of all the rules in the argument $\Rules(A)$ are defined as follows:} 
\begin{itemize}
\item $\Sub(A) = \Sub(A_1) \cup \ldots \cup \Sub(A_n) \cup\{A\}$;
\item $\DirectSub(A) = \{A_1, \ldots, A_n \}$;
\item $\TopRule(A) = (r: \Conc(A_1), \ldots, \Conc(A_n), \sim \lit_{1}, \dots, \sim \lit_m \Rightarrow \lit)$;
\item $\Rules(A) = \Rules(A_1) \cup \ldots \cup \Rules(A_n) \cup \{\TopRule(A)\}$.
\end{itemize}
\end{myDefinition}

\begin{myNotation}
The set of arguments constructed on the basis of a theory $T$ is denoted $\argsbuilt{T}$.
\end{myNotation}
According to Definition \ref{def:argument}, an argument without subarguments has thus the form $A:\,\, \sim \lit_{1}, \dots, \sim \lit_m \Rightarrow_r \lit$ with  ($0 \leq m$). \primaryquote{Infinite} arguments are not considered, 
 but we may have an infinite set of finite arguments constructed from a defeasible theory.

\begin{myExample}[Running program]
\label{example:basis}
Suppose a research scientist \PBX{is} managing a project critically depending on a program installed on a distant machine which is powered by a solar panel and  a battery.  This program runs \RRR{intermittently}. The scientist has no explicit information about whether the program is running, thus, a priori, its running status is undecided. However, if the scientist receives the notifications that  there is no solar power and no power from the battery, then the machine does not have enough power, and therefore the program is not working. Such setting can give rise to different scenarios, and to capture such scenarios, we \PBX{refer to} the  literals given in \Tableref \ref{literals}.

\begin{table}[ht!]
\centering
\caption{Literals and their meanings}
\label{my-label}
\begin{tabular}{cp{5cm}cl}
$\litbone$ & No solar power.  &  $\litb$    & No power.  \\
$\litbtwo$ & No battery power.   &  $\litc$    & The program is running.  
\end{tabular}
\label{literals}
\end{table}

\noindent  Based on these literals, we may \RRT{form} the  defeasible theory $\mathsf{T} =  \tuple{\mathsf{\Trule},\mathsf{\Tconf},\mathsf{\Tsup }}$  where:
\addtolength{\tabcolsep}{-5pt} 
\begin{center}
\begin{tabular}{r lllll}
 $\mathsf{\Trule}$ & =  $\{$ & $\mathsf{r}_{\litbone}:$& & $ \Rightarrow \litbone$, \\
              &         & $\mathsf{r}_{\litbtwo}:$ & &$ \Rightarrow \litbtwo$, \\
              &         & $\mathsf{r}_{\litb}:$  &  $\litbone, \litbtwo$ & $\Rightarrow  \litb$, \\
              &         & $\mathsf{r}_{\litc}:$ & $\sim  \litb$& $ \Rightarrow \litc$, \\
              &         & $\mathsf{r}_{\litd}:$ & & $ \Rightarrow \litd$ & \}, \\
              \\
\end{tabular}

\begin{tabular}{r lllll}
 $\mathsf{\Tconf}$ & =  $\{$ & $\conflict(\litc, \litd ),  \conflict(\litd, \litc )$ & $\}$, \\
\\
 $\mathsf{\Tsup}$ & =  & $\emptyset$. \\
\end{tabular}
\end{center}
\addtolength{\tabcolsep}{+5pt}

\noindent Based on the theory $\mathsf{T}$, we can build the following arguments:

\begin{center}
\begin{tabular}{lllllllll}
$\mathsf{B1}:  $& $\Rightarrow_{\mathsf{r}_{\litbone}} \litbone$ & & & & $\mathsf{C}:$ & $\sim  \litb$ & $\Rightarrow_{\mathsf{r}_{\litc}} \litc$\\
$\mathsf{B2}:  $& $\Rightarrow_{\mathsf{r}_{\litbtwo}} \litbtwo$ & & & & $\mathsf{D}:$ & &  $\Rightarrow_{\mathsf{r}_{\litd}} \litd$\\
$\mathsf{B}:   $ $\,\mathsf{B1}, \mathsf{B2} $ & $ \Rightarrow_{\mathsf{r}_{\litb}} \litb$\\

\end{tabular}
\end{center}
This example illustrates the concept of defeasible theories and the construction of arguments. \RR{We} will \RR{continue this example} later in our probabilistic setting to determine a probability degree attached to the acceptance  that the program is running.
\hfill $\square$
\end{myExample}

Arguments may conflict and thus attacks between arguments may appear.
We consider two types of attacks: rebuttals (clash of incompatible conclusions) and undercuttings\footnote{The term undercutting is overloaded in argumentation literature and is used with different meanings in different contexts, cf. \cite{DBLP:journals/argcom/ModgilP14}\RRT{.}}} (attacks on negation as failure premises). 
In regard to rebuttals, we assume that there is a preference relation over arguments determining whether two rebutting arguments mutually attack each other or only one of them (being preferred) attacks the other. The preference relation over arguments can be defined in various ways on the basis of the preference over rules. For instance one may adopt the simple last-link ordering according to which an argument $A$ is preferred over another argument $B$, denoted as $A \succ B$, if, and only if, the rule $\TopRule(A)$ is superior to the rule $\TopRule(B)$, i.e. $\TopRule(A) \succ \TopRule(B)$.
We make no assumptions over the specific argument preference relation which is adopted.  \RRR{This leads us to adopt the following definition of attack.}

\begin{myDefinition}[Attack \RRR{relation}]
\label{attack}
\RRR{An \emph{attack relation} $\defeat$ over a set of arguments $\mathcal{A}$ is a binary relation over  the set $\mathcal{A}$, i.e. $\defeat \subseteq \mathcal{A}\times \mathcal{A}$.} An argument $B$ attacks an argument $A$, i.e. $B \defeat A$, if, and only if, $B$ rebuts or undercuts $A$, where 
\begin{itemize}
\item $B$ rebuts $A$ (on $A'$) if, and only if, $\exists A' \in \Sub(A)$ such that $\Conc(B)$ and $\Conc(A')$ are in conflict, i.e. $\conf(\Conc(B),\Conc(A'))$, and  $A' \not\succ B$; 
\item $B$ undercuts $A$ (on $A'$) if, and only if, $\exists A' \in \Sub(A)$ such that $\sim\Conc(B)$ belongs to the body of $\TopRule(A')$, i.e. $(\sim\Conc(B)) \in \Body{\TopRule(A')}$.
\end{itemize}
\end{myDefinition}

On the basis of arguments (and their subarguments) and attacks between arguments, we  extend the common definition of an argumentation framework \cite{dung95acceptability}  by introducing argumentation graphs \RRT{which comprise both  attack and  subargument relations}. This is motivated by our upcoming probabilistic account \PBX{where} the subargument relation, encompassed by most structured argumentation formalisms, \PBX{plays a key role.  I}ntuitively, an argument cannot exist without its subarguments and can only be believed if also its subarguments are believed. 

\RRR{From the definition of the sets of direct subarguments of an argument (see Definition \ref{definition:elements}), we can straightforwardly define a direct subargument relation.}

\begin{myDefinition}[Direct subargument \RRR{relation}]
\label{subargument} 
\RRR{A \emph{direct subargument relation} $\support$ over a set of arguments $\mathcal{A}$ is a binary relation over the set $\mathcal{A}$, i.e. $\support \subseteq \mathcal{A}\times \mathcal{A}$.}
An argument $B$ is a direct subargument of $A$, i.e. $B \support A$, if, and only if,  $B$ belongs to the set of direct subarguments of $A$, i.e. $B \in \DirectSub(A)$.
\end{myDefinition}

 \RRR{By Definition \ref{def:argument}, an argument is not a direct subargument of itself and cannot be a subargument of its direct subarguments. Therefore, the direct subargument relation  is antireflexive  and acyclic.}\smallskip


Arguments and attack relations \RR{can be} then captured in Dung's abstract argumentation graphs, called abstract argumentation  frameworks in  \cite{dung95acceptability}.
\begin{myDefinition} [Abstract argumentation graph]
\label{def:abstractag}
\sloppy An \emph{abstract argumentation graph} is a tuple $\tuple{\AR, \defeat}$ where $\AR$ is a set of arguments, and \RRR{$\defeat$ is an \emph{attack} relation over $\AR$}.
\end{myDefinition}

\RR{It is equally possible to \PBX{refer to}} semi-abstract argumentation graphs, where both the attack and the subargument relations are encompassed, \RRR{cf.  \cite{Prakken:2014:SRA:3006652.3006776}}.


\begin{myDefinition} [Semi-abstract argumentation graph]
\label{def:ag}
\sloppy A \emph{semi-abstract argumentation graph}  is a tuple $\tuple{\AR, \defeat, \support}$ where $\AR$ is a set of  arguments, $\defeat$ is an \emph{attack} relation over $\AR$,  and $\support$ is a \emph{direct subargument}  relation over $\AR$.
\end{myDefinition}

\begin{myNotation} 
\RRR{Given a semi-abstract argumentation graph $G = \tuple{\AR,\defeat,\support}$, we  denote $\AR$ as $\mathcal{A}_{G}$, $\defeat$ as $\defeat_{G}$ and $\support$ as $\support_G$. }
\end{myNotation}

Unconstrained semi-abstract argumentation graphs can be problematic. For instance cycles of the subargument relations are not allowable. Moreover, as usual in structured argumentation formalisms and reflecting the fundamental dependence of an argument on its subarguments,  each attack against a subargument is meant to be an attack against all its superarguments.
Accordingly, \RRR{taking inspiration from \cite{Prakken:2014:SRA:3006652.3006776,Dung:2014:CCL:2655713.2655716}, we introduce the notion of \emph{\wfag} argumentation graphs.}

\begin{myDefinition}[\Wfag semi-abstract argumentation graph]
A semi-abstract argumentation graph  $\tuple{\AR, \defeat, \support}$ is \emph{\wfag} if, and only if:
\begin{itemize}
\item the relation $\support$ is acyclic and antireflexive;
\item if an argument $A$ attacks an argument $B$, and $B$ is a direct subargument of an argument $C$, then $A$ attacks $C$.
\end{itemize}
\end{myDefinition}

\noindent In a \wfag semi-abstract argumentation graph it is easy to see that, by recursion, if an argument $A$ attacks an argument $B$ then $A$ attacks also all the superarguments of $B$.

\begin{myDefinition}[Argumentation graph constructed from a theory]
An (abstract or semi-abstract) argumentation graph $G$ is constructed from a  defeasible theory $T$ if, and only if, $\AR_G$ is the set of all arguments constructed from $T$.
\label{defintion:constructed_from}
\end{myDefinition}

\begin{myNotation} 
\RRR{An argumentation graph constructed from a  defeasible theory $T$ is denoted by $\gengraph{T}$.}  
\end{myNotation}

From the facts that we exclude cyclic arguments and that the attack relation (Definition \ref{attack}) refers to subarguments, we have the following proposition.

\begin{myProposition}
A semi-abstract argumentation graph  constructed from a  defeasible theory is \wfag.
\end{myProposition}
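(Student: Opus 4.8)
The plan is to verify the two conditions in the definition of \wfag{} directly from the earlier definitions, treating the subargument relation and the attack relation separately. Let $G = \gengraph{T} = \tuple{\AR, \defeat, \support}$ be the semi-abstract argumentation graph constructed from a defeasible theory $T = \tuple{\Trule, \Tconf, \Tsup}$, so that $\AR$ is the set of all arguments constructed from $T$.

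First I would establish that $\support$ is acyclic and antireflexive. This is essentially already observed in the remark following Definition \ref{subargument}: by Definition \ref{def:argument} an argument $A$ is built strictly on top of its direct subarguments $A_1,\ldots,A_n$, and each $A_i$ is a (proper) construct occurring inside $A$. Hence $A$ is never among its own direct subarguments, giving antireflexivity of $\support$. For acyclicity, I would argue by a size/rank measure: assign to each argument a natural number (e.g. the number of elements of $\Rules(A)$, or the depth of the construction) and note that any direct subargument of $A$ has strictly smaller measure, since $\Sub(A_i) \subsetneq \Sub(A)$ by Definition \ref{definition:elements}. A cycle $B_1 \support B_2 \support \cdots \support B_k \support B_1$ would then force a strictly decreasing chain returning to its start, a contradiction. (If the paper prefers, one can simply cite the already-stated fact that $\support$ is antireflexive and acyclic.)

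Second I would verify the closure condition: if $A \defeat B$ and $B \support C$, then $A \defeat C$. Suppose $A$ attacks $B$; by Definition \ref{attack} this means $A$ rebuts or undercuts $B$ on some $B' \in \Sub(B)$. Since $B \support C$, we have $B \in \DirectSub(C)$, and therefore by Definition \ref{definition:elements} $\Sub(B) \subseteq \Sub(C)$, so in particular $B' \in \Sub(C)$. The witnessing conditions for rebuttal (a conflict $\conf(\Conc(A),\Conc(B'))$ with $B' \not\succ A$) or for undercut ($(\sim\Conc(A)) \in \Body{\TopRule(B')}$) depend only on $B'$ itself, not on which larger argument contains it; hence the very same $B'$ witnesses that $A$ attacks $C$. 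This gives $A \defeat C$.

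I do not expect a serious obstacle here: the result is a direct consequence of the fact that attacks in Definition \ref{attack} are quantified existentially over subarguments (so they automatically propagate upward along $\support$) together with the transitivity of $\Sub(\cdot)$ under the direct-subargument relation. The only point needing mild care is the acyclicity argument, where one should make explicit the well-founded measure that decreases along $\support$; everything else is an unwinding of definitions. I would close by noting that, as remarked after the definition of \wfag{} graphs, iterating the closure condition shows $A$ attacks every superargument of $B$, so no additional work is required for that observation.
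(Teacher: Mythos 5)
Your proof is correct and follows essentially the same route as the paper's (much terser) proof: the paper simply cites the earlier observation that $\support$ is antireflexive and acyclic, and then notes that by the definition of attack an attack on $B$ (witnessed by some $B'\in \Sub(B)\subseteq \Sub(C)$) is automatically an attack on every superargument $C$ of $B$. One minor caution: of your two suggested well-founded measures, $|\Rules(A)|$ does not in fact strictly decrease along $\support$ (a superargument may reuse exactly the rules of its subargument), so you should rely on the syntactic construction depth of the finite construct or, as you also suggest, simply cite the already-stated antireflexivity and acyclicity of $\support$.
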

\begin{proof}

\RR{The relation $\support$ is acyclic and antireflexive.} 
By Definition \ref{attack}, if an argument $A$ attacks an argument $B$ then $A$ attacks all the superarguments of $B$, and thus $A$ attacks every argument $C$ such that $B$ is a direct subargument of $C$. Therefore, if an argument $A$ attacks an argument $B$, and $B$ is a direct subargument of an argument $C$, then $A$ attacks $C$.
\end{proof}

In the remainder, we assume that all semi-abstract argumentation graphs are \wfag, and accordingly, the qualification \primaryquote{\wfag} is omitted.
 
\begin{myExample}[continues=example:basis]
Using the definition of attacks and direct subarguments, we can  construct the semi-abstract  argumentation graph as pictured in \Figureref \ref{fig:examplesupport}.
\begin{figure}[ht!]
\centering
\begin{tikzpicture}[node distance=1.4cm,main node/.style={circle,fill=white!20,draw,font=\sffamily\scriptsize}]

 \node[main node] (4) [] {B};
 \node[main node] (5) [right of=4] {C};
 \node[main node] (6) [right of=5] {D};
 \node[main node] (7) [below right of=4] {B2};
 \node[main node] (8) [below left of=4] {B1};

 \path[->,shorten >=1pt,auto, thick,every node/.style={font=\sffamily\small}]
 (4) edge node {} (5)
 (5) edge node {} (6)
 (6) edge node [right] {} (5);
 
  \path[->,>=implies]
 (8) edge [double] node {} (4)
 (7) edge [double] node {} (4);
 
\end{tikzpicture}
\caption{A semi-abstract argumentation graph. Argument $\mathsf{B}$ attacks $\mathsf{C}$, arguments $\mathsf{C}$ and $\mathsf{D}$ attack each \RT{other}. Arguments $\mathsf{B1}$ and $\mathsf{B2}$ are direct subarguments of argument $\mathsf{B}$.}
\label{fig:examplesupport}
\end{figure}

\hfill $\square$
\label{ex:argfromt}
\end{myExample}

The definition of semi-abstract argumentation graphs  allows us to give 
\PB{an} account of probabilistic argumentation which  is intermediate between Dung's abstract argumentation, encompassing attacks only, and structured argumentation formalisms\PBX{,} where both the attack and the subargument relations are defined in formalism-specific terms. 
Whilst this semi-abstract setting is built on the simple formalism based on definitions \ref{deflit}-\ref{subargument},  it can be more generally seen as an abstraction basis to encompass more sophisticated  rule-based argumentation systems (e.g. ASPIC$^+$ \cite{DBLP:journals/argcom/ModgilP14}) or related non-monotonic logic frameworks (e.g. Defeasible Logic \cite{jlc:argumentation}). 

\RRR{Semi-abstract argumentation graphs can be related to some existing  proposals on different notions of supports. 
For example, the representation is similar to the notion of \emph{inference graph} introduced by \cite{Pollock94}.
\emph{Bipolar argumentation frameworks} \cite{DBLP:journals/ijar/CayrolL13} are an example of another semi-abstract settings in the literature encompassing two relations between arguments, namely attack and support. The generic notion of support in bipolar argumentation frameworks allows for different interpretations (namely deductive support, necessary support, and evidential support) with different properties and implications on the attack relation. 
\PBB{A specific notion of \emph{evidential support} is considered in}  \emph{evidential argumentation systems }\cite{DBLP:conf/comma/OrenN08}, where a special argument, denoted as $\eta$, is used to represent evidence (namely \primaryquote{incontrovertible premises}) and evidential support, rooted in $\eta$, is `propagated' through a set-based support relation. \PBB{Relationships and possibilities of translation between evidential argumentation systems and argumentation frameworks with necessities are investigated in \cite{DBLP:conf/comma/PolbergO14}.}}
Our proposal does resort neither to specific notions\PBX{,} like incontrovertible premises\PBX{,} nor to a specific evidence-based interpretation of the notion of support. 
Our approach directly relies on the subargument relation which has a univocal interpretation in our context and is sufficient to develop a general theory of probabilistic labellings. \RR{The inclusion of} more articulated notions, like recursive attacks and supports \cite{DBLP:journals/ijar/BaroniCGG11,Cohen:2015:AAA:2849973.2850374}\PBX{,} is beyond the scope of this work and represents an interesting direction of future research. \medskip

Given an argumentation graph, some of the arguments may be omitted due to the use of some selection criterion: this leads to
 consider subgraphs (which will \PBX{be} useful to specify fundamental statuses of arguments in the next section.)    

\begin{myDefinition}[Subgraph]
\label{def:subgraphs}
Let   $G = (\AR_G, \defeat_G, \support_G )$ denote an argumentation graph. The \emph{subgraph} $H$ of $G$ induced by a set of arguments $\mathcal{A}_H \subseteq \mathcal{A}_G$ is an argumentation graph such that  $H=(\mathcal{A}_H, \defeat_G \cap (\mathcal{A}_H \times \mathcal{A}_H), \support_G  \cap (\mathcal{A}_H \times \mathcal{A}_H))$.
\end{myDefinition}

\begin{myNotation}
The set of all subgraphs of an argumentation graph $G$ is denoted $\subgraphs{G}$, i.e. $\subgraphs{G} =\set{(\mathcal{A}_H, \defeat_G \cap (\mathcal{A}_H \times \mathcal{A}_H), \support_G  \cap (\mathcal{A}_H \times \mathcal{A}_H)) \mid \mathcal{A}_H \subseteq \mathcal{A}_G}$.
\end{myNotation}

Not all subgraphs may be deemed to be \wfag or interesting, since only some of them correspond to sensible constructions from a set of rules. For this reason we introduce  \subargumentcomplete and \rulecomplete sets of arguments.

\begin{myDefinition}[\Subargumentcomplete set]
Let   $G = (\AR_G, \defeat_G, \support_G )$ denote an argumentation graph. A set of arguments $\mathcal{A}_H \subseteq \mathcal{A}_G$ is \emph{\subargumentcomplete} \textcolor{black}{if} for any argument $A$ in $\mathcal{A}_H$,  all the \textcolor{black}{direct} subarguments of $A$ are in $\mathcal{A}_H$, i.e. $\forall A \in \mathcal{A}_H$, if $((B,A) \in \support_G)$ then $B \in \mathcal{A}_H$.
\end{myDefinition}
\begin{myDefinition}[\Rulecomplete set]
Let   $G = (\AR_G, \defeat_G, \support_G )$ denote an argumentation graph. A set of arguments $\mathcal{A}_H \subseteq \mathcal{A}_G$ is \emph{\rulecomplete} \textcolor{black}{if} for all arguments $B$ in  $\mathcal{A}_G$ such that 
all the direct subarguments of $B$ are in  $\mathcal{A}_H$ and  there exists an argument $A$ in $\mathcal{A}_H,$ such that the top rule of  $B$ belongs to the rules of $A$, then $B$ belongs to  $\mathcal{A}_H$, i.e. $\forall B \in  \mathcal{A}_G, \mbox{ if } \RT{DirectSub(B)} \subseteq \mathcal{A}_H$ and $\exists A \in \mathcal{A}_H \mbox{ such that } \TopRule(B) \in \Rules(A)$, then  it holds that $B \in \mathcal{A}_H$. 
\end{myDefinition}

\noindent Notice that a set of arguments can be \subargumentcomplete without being \rulecomplete and vice versa. Therefore  these two notions are complementary.

\Subargumentcomplete or \rulecomplete sets of arguments can be straightforwardly used to qualify subgraphs of an argumentation graph.

\begin{myDefinition}[\Subargumentcomplete subgraph]
\label{definition:subargumentgraph}
\sloppy A subgraph $H = (\mathcal{A}_H, \defeat_H, \support_H)$ of an argumentation graph $G$ induced by a set of arguments $\mathcal{A}_H$ is \emph{\subargumentcomplete} if, and only if, $\mathcal{A}_H$ is \subargumentcomplete.
\end{myDefinition}

\noindent In other words, $H$ is a \subargumentcomplete subgraph of $G$ if any argument of $H$ appears with all its supporting arguments and $H$ has exactly the attacks and supports that appear in $G$ over the same set of arguments.

\begin{myExample}[continues=example:basis] 
\label{example:subgraphs}
In Figure \ref{fig:test} the graph (a) is a \subargumentcomplete subgraph of the graph in \Figureref \ref{fig:examplesupport}, while the graph (b) is not.

\begin{figure}[ht!]
\centering
\subfloat[]{{
\begin{tikzpicture}[node distance=1.3cm,main node/.style={circle,fill=white!20,draw,font=\sffamily\scriptsize}]

 \node[main node] (4) [] {B};
 \node[main node] (5) [right of=4] {C};
 \node[main node] (7) [below right of=4] {B2};
 \node[main node] (8) [below left of=4] {B1};

 \path[->,>=stealth',shorten >=1pt,auto, thick,every node/.style={font=\sffamily\small}]
 (4) edge node {} (5);
 
  \path[->,>=implies]
 (8) edge [double] node {} (4)
 (7) edge [double] node {} (4);
 
\end{tikzpicture}
}}
\hspace{2cm}
\subfloat[]{{
\begin{tikzpicture}[node distance=1.3cm,main node/.style={circle,fill=white!20,draw,font=\sffamily\scriptsize}]

 \node[main node] (4) [] {B};
 \node[main node] (5) [right of=4] {C};
 \node[main node] (8) [below left of=4] {B1};

 \path[->,>=stealth',shorten >=1pt,auto, thick,every node/.style={font=\sffamily\small}]

 (4) edge node {} (5);

  \path[->,>=implies]
 (8) edge [double] node {} (4);
\end{tikzpicture}
}}
\caption{Subargument-completeness.}
\label{fig:test}
\end{figure}
\vspace{-5mm}
\hfill $\square$
\end{myExample}

\begin{myDefinition}[\Rulecomplete subgraph]
\label{definition:rulecompletegraph}
A subgraph $H = \tuple{\mathcal{A}_H, \defeat_H, \support_H}$ of an argumentation graph $G_T$ built from a defeasible theory $T$, and   induced by a set of arguments $\mathcal{A}_H$, is \emph{\rulecomplete} if, and only if, $\mathcal{A}_{H}$ is \rulecomplete.  
\end{myDefinition}
\begin{myExample}[Abstract example]
Consider a defeasible theory $\mathsf{T}= \tuple{\mathsf{\Trule},\emptyset,\emptyset}$ such that:
\begin{center}
\begin{tabular}{lllllllllll}
 $\mathsf{\mathsf{\Trule}}$ & =  $\{$ & $\mathsf{r}_{1}:$ & & $ \Rightarrow \mathsf{a},$  & $\quad$ & $\mathsf{r}_{\mathsf{2}}:$ &    & $  \Rightarrow \mathsf{b}$, \\ 
             &          & $\mathsf{r}_{3}:$ & $\mathsf{a}$ &$ \Rightarrow \mathsf{b},$  & $\quad$ & $\mathsf{r}_{\mathsf{4}}:$ &  $\mathsf{b}$  & $  \Rightarrow \mathsf{c}$ & $\}.$ 
\end{tabular}
\end{center}
We can build the following arguments:
\begin{center}
\begin{tabular}{lllllllllll}
    & $\mathsf{A}:$ & $ \Rightarrow_{\mathsf{r1}} \mathsf{a},$  & $\quad$ & $\mathsf{AB}:$ &  $ \Rightarrow_{\mathsf{r1}}  \mathsf{a}  \Rightarrow_{\mathsf{r3}} \mathsf{b},$& $\quad$ & $\mathsf{ABC}:$ &  $ \Rightarrow_{\mathsf{r1}}  \mathsf{a}  \Rightarrow_{\mathsf{r3}} \mathsf{b} \Rightarrow_{\mathsf{r4}} \mathsf{c}$  \\ 
    & $\mathsf{B}:$ & $ \Rightarrow_{\mathsf{r2}} \mathsf{b},$  & $\quad$ & $\mathsf{BC}:$ &  $\Rightarrow_{\mathsf{r2}} \mathsf{b} \Rightarrow_{\mathsf{r4}} \mathsf{c}$. &  \\
\end{tabular}
\end{center}
\sloppy The direct subargument relation  $\support$ is such that:
\begin{center}
$\support = \{ (\mathsf{A}, \mathsf{AB}), (\mathsf{AB}, \mathsf{ABC}), (\mathsf{B}, \mathsf{BC})  \}.$
\end{center}
Consequently,
\begin{itemize}
\item the argumentation graph $\mathsf{G}_\mathsf{T} = \tuple{ \{\mathsf{A}, \mathsf{B}, \mathsf{AB}, \mathsf{BC}, \mathsf{ABC} \}, \emptyset, \support }$ is \rulecomplete and  \subargumentcomplete;
\item the argumentation subgraph $\mathsf{H} = \tuple{ \{\mathsf{A}, \mathsf{B}, \mathsf{AB}, \mathsf{BC} \}, \emptyset, \RRR{\{ (\mathsf{A}, \mathsf{AB}), (\mathsf{B}, \mathsf{BC})  \}} }$ is not \rulecomplete, but it is \subargumentcomplete.
\end{itemize}
This example shows that a subgraph of an argumentation graph built from a defeasible theory can be \subargumentcomplete without being \rulecomplete.
\hfill $\square$
\label{example:rulecomplete}
\end{myExample}
\begin{myProposition}
\label{proposition:basiccc}
If an argumentation graph is constructed from a  defeasible theory then it is \subargumentcomplete and \rulecomplete. 
\end{myProposition}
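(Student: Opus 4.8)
The plan is to unwind the three relevant definitions — \Definitionref~\ref{defintion:constructed_from} (argumentation graph constructed from a theory), the definition of \subargumentcomplete set, and the definition of \rulecomplete set — instantiating the ``reference set'' $\mathcal{A}_H$ in the latter two with the full vertex set $\mathcal{A}_{G_T}$ of the constructed graph. The single driving observation is that, by \Definitionref~\ref{defintion:constructed_from}, $\mathcal{A}_{G_T}$ is \emph{the set of all} arguments constructible from $T$, so it is automatically closed under any operation that, starting from $T$-arguments, produces further $T$-arguments. Both completeness properties will then fall out as instances of this closure.

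First I would handle \subargumentcompleteness. Take an arbitrary argument $A \in \mathcal{A}_{G_T}$ and an arbitrary $B$ with $B \support_{G_T} A$, i.e. $B \in \DirectSub(A)$. By \Definitionref~\ref{def:argument}, if $A: A_1,\ldots,A_n,\sim\lit_1,\ldots,\sim\lit_m \Rightarrow_r \lit$ is an argument constructed from $T = \tuple{\Trule,\Tconf,\Tsup}$, then each of $A_1,\ldots,A_n$ — that is, each element of $\DirectSub(A)$ — is itself an argument constructed from the same theory $T$. Hence $B$ is a $T$-argument, so $B \in \argsbuilt{T} = \mathcal{A}_{G_T}$. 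This is exactly the condition required by the definition of a \subargumentcomplete set (with $\mathcal{A}_H = \mathcal{A}_{G_T}$), so $\gengraph{T}$ is \subargumentcomplete.

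For \rulecompleteness I would argue even more directly. Let $B \in \mathcal{A}_{G_T}$ be any argument such that $\DirectSub(B) \subseteq \mathcal{A}_{G_T}$ and there is some $A \in \mathcal{A}_{G_T}$ with $\TopRule(B) \in \Rules(A)$. The conclusion to establish is $B \in \mathcal{A}_{G_T}$, which holds by hypothesis, since the quantification in the definition already ranges over $B \in \mathcal{A}_{G_T} = \mathcal{A}_{G}$. Thus $\gengraph{T}$ is \rulecomplete, and combining the two parts gives the proposition.

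I do not expect a real obstacle here: the statement is essentially a bookkeeping consequence of the fact that ``constructed from $T$'' means ``contains \emph{all} $T$-arguments''. The only point that needs a moment's care is the \subargumentcomplete case, where one must invoke \Definitionref~\ref{def:argument} to certify that a direct subargument of a $T$-argument is again a $T$-argument — but this is built into the recursive clause of that definition, so no extra work is needed. If one also wants the statement to cover abstract argumentation graphs constructed from $T$, one notes that the vertex set is the same and the argument for \subargumentcompleteness/\rulecompleteness only refers to $\mathcal{A}_{G_T}$ (and, in the \subargumentcomplete case, to the subargument structure inherited from the arguments themselves), so nothing changes.
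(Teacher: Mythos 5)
Your proof is correct and follows essentially the same route as the paper's: since $\mathcal{A}_{G_T}$ is by \Definitionref~\ref{defintion:constructed_from} the set of \emph{all} arguments constructible from $T$, both completeness conditions hold with $\mathcal{A}_H = \mathcal{A}_{G_T}$. The paper simply asserts this in one line, whereas you spell out the two checks (closure of $T$-arguments under direct subarguments via \Definitionref~\ref{def:argument}, and the trivial satisfaction of the rule-completeness implication), which is a harmless elaboration of the same argument.
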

\begin{proof}
From \Definitionref \ref{defintion:constructed_from},  an \emph{argumentation graph} $G$ constructed from a defeasible theory $T$ is a tuple $\tuple{\AR,\defeat,\support}$ where $\AR$ is the set of \emph{all} arguments constructed from $T$. Thus, $\AR$ is \subargumentcomplete and \rulecomplete. Therefore, $G$ is  \subargumentcomplete and \rulecomplete.
\end{proof}
\noindent As we will see,  one may be interested in building a set of  \subargumentcomplete and \rulecomplete subgraphs  by considering \primaryquote{subtheories} of a defeasible theory. 
\begin{myDefinition}[Defeasible subtheory]
\label{def:pdt}
Let $T = \tuple{\Trule, \Tconf,\Tsup}$ denote a defeasible theory.
Given a set  $\Trule' \subseteq \Trule$, the \emph{defeasible subtheory $\subtheory$ of $T$} induced by $\Trule'$ is the defeasible theory $\subtheory = \tuple{\Trule',\Tconf,\Tsup \cap (\Trule' \times \Trule')}$.
\end{myDefinition}
We will use such subtheories later in our probabilistic investigations.
\begin{myNotation}

The set of all the subtheories of a theory $T = \tuple{\Trule,\Tconf,\Tsup}$ is  denoted $\Sub(T)$, i.e. 
$\Sub(T) = \set{ \tuple{\Trule',\Tconf,\Tsup \cap (\Trule' \times \Trule')} \mid \Trule' \subseteq \Trule}.$\smallskip
\end{myNotation}

To recap, starting from a defeasible theory, we can build arguments  and identify \RR{the attack and subargument} relationships over arguments, \RRR{thus creating} an argumentation graph  including its subgraphs. \PB{The acceptance and justification status of the arguments in a graph needs then to be evaluated}. \RRR{The next} section discusses how these status\PB{es} can  be  determined by using labellings.

\subsection{Labelling of arguments}
\label{subsection:labellingarguments}

Given an argumentation graph, the acceptance of arguments is evaluated on the basis of  a formal \RRT{specification} traditionally called \emph{argumentation semantics}. This evaluation can be carried out in terms of sets of arguments, called \emph{extensions}, as originally proposed in \cite{dung95acceptability},
or in terms of labellings \cite{DBLP:journals/ker/BaroniCG11}.
\RRR{ The idea underlying the extension-based approach is to identify sets of arguments, called extensions, \PBB{which can collectively survive the conflict and therefore represent a reasonable position. In general, several alternative extensions exist. The idea underlying the labelling-based approach is to describe each reasonable position by a labelling, namely by assigning to each argument a label taken from a given set.}}
Traditional extension-based semantics have an equivalent labelling-based characterization using the well-known set of three labels $\{\IN, \OUT, \UND\}$ which we also adopt here. \PBB{Briefly, for each extension $E$ the corresponding labelling is defined as follows: each argument belonging to $E$ \PBX{is} labelled $\IN$, each argument attacked by the extension is labelled $\OUT$, every other argument is labelled $\UND$.}
Other approaches differing in the set of adopted labels and/or in the underlying semantics have been \RRT{explored} in the literature (see e.g. \cite{Pollock:1995,Verheij96twoapproaches,DBLP:journals/logcom/JakobovitsV99,Vreeswijk:2006:ACM:1565233.1565247}).
\PB{Our proposal is not bounded to} the choice of a specific set of labels or semantics and can be extended to other approaches too, a full development in this direction is left to future work.


As a first step, we introduce a pair of labels devoted to capture a situation of uncertainty about the presence of an argument within a framework.
Traditionally the set of three labels $\{\IN, \OUT, \UND\}$ is used to characterise the acceptance status of arguments on the basis of a given argumentation semantics. However, in our context of uncertainty, a labelling needs also to be able to express the fact that an argument is actually included and has some effect (let say it is \primaryquote{$\ON$}) in the framework or that it is actually not included and hence has no effect (let say it is \primaryquote{$\OFF$}).
Then, as we will see, \primaryquote{classic} acceptance statuses \primaryquote{$\IN$}, \primaryquote{$\OUT$} and \primaryquote{$\UND$} of arguments and the $\ON$-$\OFF$ view can be seamlessly combined. This combination will allow us to clearly distinguish the probability of \primaryquote{construction} or \primaryquote{inclusion} of an argument and the probability of its acceptance.

Let us first introduce the basic formal notions and the relevant notation.

\begin{myDefinition} [Labelling of a set of arguments]
Let $G$ be an argumentation graph, and $\llabels$ a set of labels for arguments.  An \emph{$\llabels$-labelling} of a set of arguments $ \argset \subseteq \AR_G$ is a total function $\labelling: \argset \rightarrow \llabels$. 
\end{myDefinition}

\begin{myNotation}
Given an argumentation graph $G$, the universe of all possible $\llabels$-labelling assignments of a set $\argset \subseteq \AR_G$ is denoted  $\setofalllabels{\argset}{\llabels}$.
\end{myNotation}

In general a labelling involves an arbitrary set of arguments. When this set is a singleton, with a little abuse of language we also speak  of labelling of an argument. Labellings involving all the arguments of an argumentation graph play a special role and deserve a specific terminology and notation.

\begin{myDefinition} [Labelling of an argumentation graph]
\label{ne}
Let $G$ be an argumentation graph, and $\llabels$ a set of labels for arguments.  An \emph{$\llabels$-labelling} of $G$ is a total function $\labelling: \AR_G \rightarrow \llabels$. 
\end{myDefinition}

\begin{myNotation}
Given an argumentation graph $G$, the universe of all possible $\llabels$-labellings of $G$ is  denoted  $\setofalllabels{G}{\llabels}$.
\end{myNotation}

\noindent\sloppy For instance, a \emph{$\{\IN, \OUT, \UND\}$-labelling} of $G$ is a total function $\labelling: \AR_G \rightarrow \{\IN, \OUT, \UND\}$.

\begin{myNotation}
The set of arguments labelled with a label $l$ by a labelling $\labelling$ is denoted $l(\labelling)$, i.e. $l(\labelling) = \{A|\labelling(A) = l\}$. For instance, if $\IN$ is a label, then  $\IN(\labelling) = \{A|\labelling(A) = \IN\}$.
\end{myNotation}

Not all labellings in $\setofalllabels{G}{\llabels}$ are meaningful or have satisfactory properties.
Moreover, even among meaningful labellings, one may identify those satisfying some specific requirements.
Hence typically, one needs to define some \RRT{criteria} to identify those labellings in $\setofalllabels{G}{\llabels}$ which are meaningful or interesting in some sense. Formally speaking, a \specification identifies a subset of $\setofalllabels{G}{\llabels}$ for every possible argumentation graph $G$.

\begin{myDefinition} [Labelling specification]\label{def:labellingspecification}
\sloppy Given a set of labels $\llabels$, an \emph{$X$-$\llabels$-labelling specification} identifies for every argumentation graph $G$ a set of $\llabels$-labellings of $G$ denoted as $\crit{X}{G}{\llabels} \subseteq \setofalllabels{G}{\llabels}$, where $X$ is called the criterion of the specification.
\end{myDefinition}

If a labelling $\labelling$ belongs to the set of some specified labellings  $\crit{X}{G}{\llabels}$, i.e. \smash{$\labelling \in \crit{X}{G}{\llabels} \subseteq \setofalllabels{G}{\llabels}$},  we say that the labelling $\labelling$ is an $X$-$\llabels$-labelling of $G$, or an  $X$ $\llabels$-labelling (without the hyphen).

\begin{myDefinition}[Mono- and multi-labelling specification]
\label{definition:multi}
An $X$-$\llabels$-labelling specification is a \emph{mono-labelling specification} if, and only if, $|\crit{X}{G}{\llabels}| = 1$ for any  argumentation graph $G$, and it is  a \emph{multi-labelling specification} if, and only if, $\smash{|\crit{X}{G}{\llabels}|> 1}$ for some $G$. 
\end{myDefinition}

\sloppy \RR{As an anticipation of} our probabilistic setting\RRT{,} we \RRT{employ} three types of labellings: $\{\ON,\OFF\}$-labellings, $\{\IN,\OUT,\UND\}$-labellings and $\{\IN,\OUT,\UND,\OFF\}$-labellings. 
In a $\{\ON,\OFF\}$-labelling, each argument is associated with one label which is either $\ON$ or $\OFF$ to indicate whether an argument \textcolor{black}{occurs, that is, whether an} argument is regarded as playing an active role or not in the graph. Intuitively, arguments labeled $\OFF$ can be ignored. In a $\{\IN,\OUT,\UND\}$-labelling, each argument is associated with one label representing its status in the context of a semantics-based evaluation \cite{DBLP:journals/ker/BaroniCG11}.
Intuitively, a label \primaryquote{$\IN$} means the argument is accepted, while a label \primaryquote{$\OUT$} indicates that it is rejected and  \primaryquote{$\UND$} that it is undecided, i.e. neither accepted nor rejected. 
A $\{\IN,\OUT,\UND,\OFF\}$-labelling  extends a $\{\IN,\OUT,\UND\}$-labelling with the $\OFF$ label to indicate that an argument does not occur. Dually, a $\{\IN,\OUT,\UND,\OFF\}$-labelling can be regarded as a refinement of a $\{\ON,\OFF\}$-labelling where occurring (i.e. $\ON$) arguments are the subject of semantics-based evaluation.

Having introduced basic intuitions on some types of labellings, we need now to specify how actual labellings of the various types can be built and which constraints they should satisfy. The simplest specification for a given set of labels is the one without constraints, denoted as the \ALL-$\llabels$-labelling specification.
\begin{myDefinition} [\ALL-$\llabels$-labelling specification]\label{def:ALL}
The \emph{\ALL-$\llabels$-labelling} of an argumentation graph $G$ is such that $\crit{\ALL}{G}{\llabels} = \setofalllabels{G}{\llabels}$.
\end{myDefinition}

Concerning $\{\ON, \OFF\}$-labellings, we will focus on specifications ensuring that the set of arguments labelled $\ON$ features the completeness properties previously discussed.
\begin{myDefinition}[X-{\footnotesize{$\{\mathsf{ON}, \mathsf{OFF}\}$}}-labelling specification]
\label{ne1Arg}
Let $G$ denote an argumentation graph.
\noindent A \emph{$\{\ON,\OFF\}$-labelling} $\labelling$ of $G$ is
\begin{itemize}
\item  \emph{\subargumentcomplete} if, and only if, $\ON(\labelling)$ is \subargumentcomplete;
\item  \emph{\rulecomplete} if, and only if, $\ON(\labelling)$ is \rulecomplete;
\item \emph{legal} if, and only if, $\ON(\labelling)$ is \subargumentcomplete and  \rulecomplete.
\end{itemize}
\end{myDefinition}

\begin{myExample}[continues=example:rulecomplete]
\RRR{A legal $\{\ON, \OFF\}$-labelling and a subargument-complete $\{\ON, \OFF\}$-labelling (which is not legal because it is not rule-complete) are pictured in \Figureref \ref{figure:ONOFFexample}.}

\begin{figure}[ht!]
\centering
\subfloat[]{{\begin{tikzpicture}[node distance=1.7cm,main node/.style={circle,fill=white!20,draw,font=\sffamily\scriptsize}]

 \node[main node] [label={[xshift=-0.55cm,yshift=-0.5cm]$\ON$}] (4) [] {A};
 \node[main node] [label={[xshift=-0.6cm,yshift=-0.5cm]$\ON$}] (5) [above of=4, yshift=-5mm] {AB};
  \node[main node] [label={[xshift=-0.68cm,yshift=-0.5cm]$\ON$}] (6) [above of=5, , yshift=-5mm] {ABC};
 \node[main node] [label={[xshift=0.55cm,yshift=-0.5cm]$\ON$}] (7) [right of=4] {B};
 \node[main node] [label={[xshift=0.55cm,yshift=-0.5cm]$\ON$}] (8) [above of=7, yshift=-5mm] {BC};
   
 \path[->,>=stealth',shorten >=1pt,auto, thick,every node/.style={font=\sffamily\small}]; 
 
  \path[->,>=implies]
 (4) edge [double] node {} (5)
 (5) edge [double] node {} (6)
 (7) edge [double] node {} (8);
  
\end{tikzpicture}
}}
\hspace{2cm}
\subfloat[]{{
\begin{tikzpicture}[node distance=1.7cm,main node/.style={circle,fill=white!20,draw,font=\sffamily\scriptsize}]

 \node[main node] [label={[xshift=-0.55cm,yshift=-0.5cm]$\ON$}] (4) [] {A};
 \node[main node] [label={[xshift=-0.6cm,yshift=-0.5cm]$\ON$}] (5) [above of=4, yshift=-5mm] {AB};
  \node[main node] [label={[xshift=-0.68cm,yshift=-0.5cm]$\OFF$}] (6) [above of=5, , yshift=-5mm, fill = black!40] {ABC};
 \node[main node] [label={[xshift=0.55cm,yshift=-0.5cm]$\ON$}] (7) [right of=4] {B};
 \node[main node] [label={[xshift=0.6cm,yshift=-0.5cm]$\ON$}] (8) [above of=7, yshift=-5mm] {BC};
   
 \path[->,>=stealth',shorten >=1pt,auto, thick,every node/.style={font=\sffamily\small}]; 
 
  \path[->,>=implies]
 (4) edge [double] node {} (5)
 (5) edge [double] node {} (6)
 (7) edge [double] node {} (8);
 
\end{tikzpicture}
}}
\caption{\RRR{A legal {\scriptsize{$\{\mathsf{ON},  \mathsf{OFF}\}$}}-labelling (a), and a subargument-complete {\scriptsize{$\{\mathsf{ON}, \mathsf{OFF}\}$}}-labelling which is not legal (b) .}}
\label{figure:ONOFFexample}
\end{figure}

\hfill $\square$
\end{myExample}

We can note that $\{\ON, \OFF\}$-labellings can be mapped to subgraphs: intuitively they correspond to \primaryquote{switching off} arguments outside a given subgraph.

\begin{myDefinition} [{\footnotesize{$\{\mathsf{ON}, \mathsf{OFF}\}$}}-labelling with respect to a subgraph]
\label{def:onoff1sdfs} 
Let $G$ denote an argumentation graph, and $H$ a subgraph of $G$. 
The \emph{$\{\ON,\OFF\}$-labelling  of $G$ with respect to $H$}, denoted as $\labelling_{G,H}$, is such that for every argument $A$ in $\mathcal{A}_G$, it holds that:
\begin{itemize}
\item $A$ is labelled $\ON$ if, and only if, $A$ is in $\mathcal{A}_H$, \RRT{and}
\item $A$ is labelled $\OFF$ otherwise.
\end{itemize}
\end{myDefinition}

\begin{myProposition}
\label{proposition:basicccx}
Let $G$ denote  an argumentation graph  constructed from a defeasible theory $T$, and $H$ any subgraph  constructed from a subtheory $U$ of $T$, i.e. $U \in \Sub(T)$. The $\{\ON, \OFF\}$-labelling of $G$ with respect to $H$  is \emph{legal}. 
\end{myProposition}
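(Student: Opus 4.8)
The plan is to show that $\ON(\labelling_{G,H}) = \mathcal{A}_H$ is both \subargumentcomplete and \rulecomplete as a subset of $\mathcal{A}_G$, which by \Definitionref~\ref{ne1Arg} is exactly what "legal" means. By \Definitionref~\ref{def:onoff1sdfs}, an argument of $G$ is labelled $\ON$ precisely when it lies in $\mathcal{A}_H$, so $\ON(\labelling_{G,H}) = \mathcal{A}_H$; the whole task reduces to verifying the two completeness properties of $\mathcal{A}_H$ inside $\mathcal{A}_G$. The key observation driving both parts is that $H$ is constructed from the subtheory $U = \tuple{\Trule', \Tconf, \Tsup \cap (\Trule'\times\Trule')}$ with $\Trule'\subseteq\Trule$, so (by \Definitionref~\ref{defintion:constructed_from}) $\mathcal{A}_H$ is the set of \emph{all} arguments built from $U$, while $\mathcal{A}_G$ is the set of all arguments built from $T$.

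First I would check \subargumentcompleteness: take any $A\in\mathcal{A}_H$ and any direct subargument $B$ of $A$ (i.e. $B\support_G A$, equivalently $B\in\DirectSub(A)$). Since $A$ is an argument over $U$, by \Definitionref~\ref{def:argument} every argument in $\DirectSub(A)$ is itself constructed from $U$, hence $B\in\mathcal{A}_H$. This gives \subargumentcompleteness immediately.

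Next I would check \rulecompleteness: suppose $B\in\mathcal{A}_G$ is such that $\DirectSub(B)\subseteq\mathcal{A}_H$ and there is some $A\in\mathcal{A}_H$ with $\TopRule(B)\in\Rules(A)$. I need to conclude $B\in\mathcal{A}_H$, i.e. that $B$ can be built from the subtheory $U$. The direct subarguments of $B$ are in $\mathcal{A}_H$, so each is built from $U$; it remains to see that $\TopRule(B)$ is a rule of $U$, i.e. $\TopRule(B)\in\Trule'$. This follows because $A\in\mathcal{A}_H$ is built from $U$, so by the definition of $\Rules(A)$ (\Definitionref~\ref{definition:elements}, unfolding the recursion) every rule occurring in $A$ — including $\TopRule(B)$, which lies in $\Rules(A)$ — is the top rule of some subargument of $A$, and hence belongs to $\Trule'$. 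With $\DirectSub(B)\subseteq\mathcal{A}_H$ and $\TopRule(B)\in\Trule'$, \Definitionref~\ref{def:argument} lets us reconstruct $B$ from $U$, so $B\in\mathcal{A}_H$, establishing \rulecompleteness. Therefore $\mathcal{A}_H$ is legal, and so is $\labelling_{G,H}$.

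The only mildly delicate point — and the step I would treat as the main obstacle — is the claim that every rule appearing in an argument built from $U$ is in $\Trule'$; this needs a short induction on argument structure using the recursive definition of $\Rules(\cdot)$ together with the fact that $U$'s rule set is literally $\Trule'$. Everything else is a direct unfolding of the definitions of "constructed from a theory", $\DirectSub$, $\Rules$, and the two completeness notions. (One should also note in passing that $\mathcal{A}_H\subseteq\mathcal{A}_G$, which holds because $\Trule'\subseteq\Trule$ implies every argument over $U$ is also an argument over $T$ — needed so that the completeness properties are being checked against the ambient graph $G$ as the definitions require.)
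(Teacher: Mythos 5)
Your proof is correct, but it goes by a more explicit route than the paper. The paper's own proof is a two-line reduction: since $H$ is constructed from the subtheory $U$, \Propositionref~\ref{proposition:basiccc} is invoked to conclude that $H$ is \subargumentcomplete and \rulecomplete, hence (via Definitions \ref{definition:subargumentgraph} and \ref{definition:rulecompletegraph}) $\mathcal{A}_H=\ON(\labelling_{G,H})$ is \subargumentcomplete and \rulecomplete, so the labelling is legal by \Definitionref~\ref{ne1Arg}. You instead verify the two completeness properties of $\mathcal{A}_H$ directly inside $\mathcal{A}_G$, using the structure of arguments: subargument-completeness from \Definitionref~\ref{def:argument}, and rule-completeness via the key auxiliary fact that $\Rules(A)\subseteq\Trule'$ for every $A$ built from $U$, so that any $B\in\mathcal{A}_G$ with $\DirectSub(B)\subseteq\mathcal{A}_H$ and $\TopRule(B)\in\Rules(A)$ is itself reconstructible from $U$. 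What your version buys is precision on a point the paper's citation glosses over: the completeness notions are relative to the ambient graph $G$ (rule-completeness quantifies over all $B\in\mathcal{A}_G$), whereas \Propositionref~\ref{proposition:basiccc} applied to $H$ literally yields completeness of $\mathcal{A}_H$ only relative to $H$ itself; your unfolding (including the remark that $\Trule'\subseteq\Trule$ gives $\mathcal{A}_H\subseteq\mathcal{A}_G$, and the short induction establishing $\Rules(A)\subseteq\Trule'$) is exactly what makes that transfer rigorous. The paper's route is shorter and leans on the earlier proposition; yours is self-contained and closes the implicit step.
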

\begin{proof}
If an argumentation (sub)graph $H$ is constructed from a defeasible (sub)theory then $H$ is \subargumentcomplete and \rulecomplete (\Propositionref \ref{proposition:basiccc}). Thus, $\mathcal{A}_H$ is subargument and \rulecomplete (\Definitionref  \ref{definition:subargumentgraph} and \ref{definition:rulecompletegraph}). Consequently, every $\{\ON, \OFF \}$-labelling $\labelling$ of $G$ with respect to $H$ is such that $\ON(\labelling)$ is subargument and \rulecomplete. 
Therefore, every $\{\ON, \OFF \}$-labelling $\labelling$ of $G$ with respect to $H$ is legal (\Definitionref \ref{ne1Arg}) .
\end{proof}

Turning to $\{\IN,\OUT,\UND\}$-labellings\RRT{,} we follow the well-known labelling-based approach to argumentation semantics reviewed in \cite{DBLP:journals/ker/BaroniCG11}. In a nutshell, given an argumentation graph $G$ and a set of labels, an argumentation semantics associates with the graph $G$ a subset
 of the set of all possible $\{\IN,\OUT,\UND\}$-labellings of the graph $G$. 
Note that, in our generalised setting, argumentation semantics can be introduced as an instance of labelling specification (\Definitionref \ref{def:labellingspecification}).

\begin{myDefinition} [Argumentation semantics]
An \emph{argumentation semantics} is an $\textcolor{black}{X}$-$\{\IN,\OUT,\UND\}$-labelling specification.
\end{myDefinition}

Every argumentation semantics $X$ in the literature satisfies the property of being conflict-free (abbreviated \CF), that is, for every argumentation graph $G$ $\smash{\crit{X}{G}{\{\mathsf{IN},\mathsf{OUT},\mathsf{UN}\}} \subseteq \crit{\CF}{G}{\{\mathsf{IN},\mathsf{OUT},\mathsf{UN}\}}}$ as defined below.
\begin{myDefinition} [Conflict-free {\footnotesize{$\{\mathsf{IN}, \mathsf{OUT}, \mathsf{UN}\}$}}-labelling specification]
\sloppy\quad A \emph{conflict-free\, $\{\IN, \OUT, \UND\}$-labelling} (or \CF-$\{\IN, \OUT, \UND\}$-labelling) of an argumentation graph $G$ is a $\{\IN, \OUT, \UND\}$-labelling such that for every argument $A$ in $\AR_G$ it holds that $A$ is labelled $\IN$ if, and only if, all attackers of $A$ are not labelled $\IN$.
\end{myDefinition}

Several literature semantics (and in particular all the semantics introduced in \cite{dung95acceptability}) are based on \emph{complete} semantics: its definition as a specification for complete labellings is recalled below.

\begin{myDefinition} [Complete {\footnotesize{$\{\mathsf{IN}, \mathsf{OUT}, \mathsf{UN}\}$}}-labelling specification]
\label{}

\sloppy A \emph{complete $\{\IN,$ $\OUT, \UND\}$-labelling} of an argumentation graph $G$ is a $\{\IN, \OUT, \UND\}$-labelling such that for every argument $A$ in $\AR_G$ it holds that:
\begin{itemize}
\item $A$ is labelled $\IN$ if, and only if, all attackers of $A$ are $\OUT$, \RRT{and}
\item $A$ is labelled $\OUT$ if, and only if, $A$ has an attacker $\IN$.
\end{itemize}
\end{myDefinition}
Since a complete $\{\IN, \OUT, \UND\}$-labelling is total, if an argument is neither labelled $\IN$ nor $\OUT$, then this argument is labelled $\UND$.

In general an argumentation graph may have several complete $\{\IN, \OUT, \UND\}$-labellings: complete-based semantics, like the grounded, preferred and stable semantics, recalled below, provide additional specifications among the complete labellings.

\begin{myDefinition} [Grounded {\footnotesize{$\{\mathsf{IN}, \mathsf{OUT}, \mathsf{UN}\}$}}-labelling specification]
\label{} \item
\sloppy A \emph{grounded $\{\IN, \OUT, \UND\}$-labelling} $\labelling$ of an argumentation graph $G$ is a complete $\{\IN, \OUT, \UND\}$-labelling of $G$ such that $\IN(\labelling)$ is minimal (w.r.t. set inclusion) among all complete $\{\IN, \OUT, \UND\}$-labellings of $G$.
\end{myDefinition}
\begin{myDefinition} [Preferred {\footnotesize{$\{\mathsf{IN}, \mathsf{OUT}, \mathsf{UN}\}$}}-labelling specification]
\label{}\item
\sloppy A \emph{preferred $\{\IN, \OUT, \UND\}$-labelling} $\labelling$ of an argumentation graph $G$ is a complete $\{\IN, \OUT, \UND\}$-labelling of $G$ such that $\IN(\labelling)$ is maximal (w.r.t. set inclusion) among all complete $\{\IN, \OUT, \UND\}$-labellings of $G$.
\end{myDefinition}
\begin{myDefinition} [Stable {\footnotesize{$\{\mathsf{IN}, \mathsf{OUT}, \mathsf{UN}\}$}}-labelling specification]
\label{}\item
\sloppy A \emph{stable $\{\IN, \OUT, \UND\}$-labelling} $\labelling$ of an argumentation graph $G$ is a complete $\{\IN, \OUT, \UND\}$-labelling of $G$ such that $\UND(\labelling)$ is empty.
\end{myDefinition}

\begin{myExample}[continues=example:basis]
\RRR{A grounded $\{\IN, \OUT, \UND\}$-labelling  which is also  a preferred and stable $\{\IN, \OUT, \UND\}$-labelling
is pictured in \Figureref \ref{figure:INOUTUNDdexample}.}
\begin{figure}[ht!]
\centering
\begin{tikzpicture}[node distance=1.7cm,main node/.style={circle,fill=white!20,draw,font=\sffamily\scriptsize}]

 \node[main node]  [label={$\IN$}] (4) [fill = green!40] {B};
 \node[main node] (5)  [label={$\OUT$}]  [right of=4, , fill = red!40] {C};
 \node[main node] (6) [label={$\IN$}] [right of=5, fill = green!40] {D};
 \node[main node] (7) [label={$\IN$}] [below right of=4, fill = green!40] {B2};
 \node[main node] (8) [label={$\IN$}] [below left of=4, , fill = green!40] {B1};

 \path[->,>=stealth',shorten >=1pt,auto, thick,every node/.style={font=\sffamily\small}]
 (4) edge node {} (5)
 (5) edge node {} (6)
 (6) edge node [right] {} (5);
 
  \path[->,>=implies]
 (8) edge [double] node {} (4)
 (7) edge [double] node {} (4);
 
\end{tikzpicture}
\caption{\RRR{A grounded {\scriptsize{$\{\mathsf{IN}, \mathsf{OUT}, \mathsf{UN}\}$}}-labelling which is also a preferred and stable {\scriptsize{$\{\mathsf{IN}, \mathsf{OUT}, \mathsf{UN}\}$}}-labelling.}}
\label{figure:INOUTUNDdexample}
\end{figure}
%

\hfill $\square$
\end{myExample}

The traditional literature distinction between \emph{single status} and \emph{multiple status} argumentation semantics corresponds to our notions of \emph{mono-labelling} and \emph{multi-labelling} respectively (see Definition \ref{definition:multi}). For example, the   grounded $\{\IN, \OUT, \UND\}$-labelling semantics is mono-labelling, while the preferred $\{\IN, \OUT, \UND\}$-labelling semantics is multi-labelling. 

\sloppy A few other semantics may  be considered, see \cite{DBLP:journals/ker/BaroniCG11}, but this  is beyond the scope of the present paper.\\

We are now ready to introduce \PBB{the novel} $\{\IN, \OUT, \UND, \OFF\}$-labellings as a way to combine a legal $\{\ON, \OFF\}$-labelling with $\{\IN, \OUT, \UND\}$-labellings of the subgraph induced by the set of arguments labelled  $\ON$.
Intuitively, this corresponds to combine together legality and semantics criteria.

\begin{myDefinition} [$X$-{\footnotesize{$\{\mathsf{IN}, \mathsf{OUT}, \mathsf{UN}, \mathsf{OFF}\}$}}-labelling specification]
Let $G$ denote an argumentation graph, and $H$ a \subargumentcomplete subgraph of  $G$.
A $\{\IN, \OUT, \UND, \OFF \}$-labelling of $G$ is  an \emph{$X$-$\{\IN, \OUT, \UND, \OFF \}$-labelling} with respect to $H$ if, and only if:
\begin{itemize}
\item every argument in $\mathcal{A}_H$ is labelled according to an $X$-$\{\IN, \OUT, \UND\}$-labelling of $H$, and
\item every argument in $\mathcal{A}_G \backslash \mathcal{A}_H $ is labelled $\OFF$.
\end{itemize}
As for terminology, the  $X$-$\{\IN, \OUT, \UND\}$-labelling is called the sublabelling of the  $X$-$\{\IN, \OUT, \UND, \OFF\}$-labelling specification.  
\label{offlabellingArgGen}
\end{myDefinition}

The above definitions can be instantiated. For example,  \Definitionref \ref{offlabellingArgGen} can be used to define the grounded $\{\IN, \OUT, \UND, \OFF\}$-labelling of an argumentation graph.

\begin{myDefinition} [Grounded {\footnotesize{$\{\mathsf{IN}, \mathsf{OUT}, \mathsf{UN}, \mathsf{OFF}\}$}}-labelling specification]\quad Let $H$ be a \subargumentcomplete subgraph of an argumentation graph $G$.
A \emph{grounded $\{\IN, \OUT, \UND, \OFF \}$-labelling }of $G$ with respect to $H$ is a $\{\IN, \OUT, \UND, \OFF \}$-labelling such that:
\begin{itemize}
\item every argument in $\mathcal{A}_H$ is labelled according to the grounded $\{\IN, \OUT, \UND\}$-labelling of $H$, and
\item every argument in $\mathcal{A}_G \backslash \mathcal{A}_H $ is labelled $\OFF$.
\end{itemize}
\label{offlabellingArg}
\end{myDefinition}

\begin{myExample}[continues=example:basis]
A grounded $\{\IN, \OUT, \UND, \OFF\}$-labelling  and  a preferred (and stable) $\{\IN, \OUT, \UND, \OFF\}$-labelling
are pictured in \Figureref \ref{figure:groundedexample}.

\begin{figure}[ht!]
\centering
\subfloat[]{{\begin{tikzpicture}[node distance=1.7cm,main node/.style={circle,fill=white!20,draw,font=\sffamily\scriptsize}]

 \node[main node]  [label={$\OFF$}] (4) [fill = black!35] {B};
 \node[main node] (5)  [label={$\UND$}]  [right of=4, , fill = blue!40] {C};
 \node[main node] (6) [label={$\UND$}] [right of=5, fill = blue!40] {D};
 \node[main node] (7) [label={$\OFF$}] [below right of=4, fill = black!40] {B2};
 \node[main node] (8) [label={$\IN$}] [below left of=4, , fill = green!40] {B1};

 \path[->,>=stealth',shorten >=1pt,auto, thick,every node/.style={font=\sffamily\small}]
 (4) edge node {} (5)
 (5) edge node {} (6)
 (6) edge node [right] {} (5);
 
  \path[->,>=implies]
 (8) edge [double] node {} (4)
 (7) edge [double] node {} (4);
 
\end{tikzpicture}
}}
\hspace{0.8cm}
\subfloat[]{{
\begin{tikzpicture}[node distance=1.7cm,main node/.style={circle,fill=white!20,draw,font=\sffamily\scriptsize}]

 \node[main node] (4)  [label={$\OFF$}]  [fill = black!35] {B};
 \node[main node] (5)  [label={$\OUT$}]  [right of=4, , fill = red!40] {C};
 \node[main node]  [label={$\IN$}]  (6) [right of=5, fill = green!40] {D};
 \node[main node]  [label={$\OFF$}]  (7) [below right of=4, fill = black!40] {B2};
 \node[main node]  [label={$\IN$}] (8) [below left of=4, , fill = green!40] {B1};

 \path[->,>=stealth',shorten >=1pt,auto, thick,every node/.style={font=\sffamily\small}]
 (4) edge node {} (5)
 (5) edge node {} (6)
 (6) edge node [right] {} (5);
 
  \path[->,>=implies]
 (8) edge [double] node {} (4)
 (7) edge [double] node {} (4);
 
\end{tikzpicture}
}}
\caption{A grounded {\scriptsize{$\{\mathsf{IN}, \mathsf{OUT}, \mathsf{UN}, \mathsf{OFF}\}$}}-labelling (a), and a preferred  {\scriptsize{$\{\mathsf{IN}, \mathsf{OUT}, \mathsf{UN}, \mathsf{OFF}\}$}}-labelling (b). }
\label{figure:groundedexample}
\end{figure}
%
\hfill $\square$
\end{myExample}

An argumentation graph $G$ has a unique grounded $\{\IN, \OUT, \UND\}$-labelling, but it has as many grounded $\{\IN, \OUT, \UND, \OFF\}$-labellings as subgraphs of $G$. Therefore, the $\mathsf{grounded}$-$\{\IN, \OUT, \UND, \OFF\}$-labelling specification is in general multi-labelling. More generally, the   $X$-$\{\IN, \OUT, \UND, \OFF\}$-labelling specification is multi-labelling, even though its sublabelling semantics is mono-labelling.\medskip

\PB{Argument  labellings produced by the semantics evaluation are called} argument \emph{acceptance} labellings. On the basis of argument acceptance labellings, it is possible to aggregate the acceptance statuses into a justification status for every considered argument, resulting into  argument \emph{justification} labellings.

In non-probabilistic settings,  the justification status of an argument is usually determined   with respect to all the labellings in the set  of labellings \RRT{$\crit{X}{G}{\llabels}$}  identified by an $X$-$\llabels$-labelling specification, by using the traditional notions of skeptical and credulous justification. 
In particular, an argument is \emph{skeptically justified} if it is labelled $\IN$ by all labellings in $\crit{X}{G}{\llabels}$, while it is \emph{credulously justified} if it is labelled $\IN$ by at least one labelling in $\smash{\crit{X}{G}{\llabels}}$, and \RRT{it} is \emph{not justified} otherwise.

This simple notion of argument justification can be generalised considering a generic set of justification states, represented by a set of labels $\jarglabels$, and a function producing a $\jarglabels$-labelling of $G$ on the basis of the labellings $\crit{X}{G}{\llabels}$. The traditional approach can be regarded as an instance of this general scheme where $\jarglabels=\set{\SKJ, \CRJ, \NOJ}$ and the function is as specified above.
The reader may refer to \cite{DBLP:conf/ecai/BaroniGR16} \RRT{for a wider formal treatment of this \RRT{generalised} labelling approach}. \RRT{We} focus here only on the aspects which are necessary for the development of \RRT{the present} paper. In particular, the extension of the set of argument labels to $\{\IN, \OUT, \UND, \OFF\}$ calls for an extended set of argument justification labels too.
%
%
%
\PB{As a minimum, since an argument can be always ignored (labelled $\OFF$ by all $\crit{X}{G}{\llabels}$) an additional justification label is needed to cover this case.}
On this basis, we introduce the \primaryquote{\gullible} $\{\OFFJ, \SKJ, \CRJ, \NOJ \}$-labelling. 

\begin{myDefinition}[\Gullible {\footnotesize{$\{\mathsf{\RT{OFJ}}, \mathsf{SKJ}, \mathsf{CRJ}, \mathsf{NOJ} \}$}}-labelling]
Let $\mathcal{L}$ denote a non-empty set of $\set{\IN, \OUT, \UND, \OFF}$-labellings of an argumentation graph $G$. 
The \emph{\gullible $\{\OFFJ, \SKJ, \CRJ, \NOJ \}$-labelling $\labelling_\mathsf{J}$ of $G$ with   respect to $\mathcal{L}$} is a $\{\OFFJ, \SKJ, \CRJ, \NOJ \}$-labelling such that for every argument $A$ in $\mathcal{A}_G$, it holds that:
\begin{itemize}
\item $A$ is labelled $\OFFJ$, i.e. $\labelling_\mathsf{J}(A) = \OFFJ$,  if, and only if,  $\forall \labelling \in \mathcal{L}$, $\labelling(A) = \OFF$;
\item $A$ is labelled $\SKJ$, i.e. $\labelling_\mathsf{J}(A) = \SKJ$,  if, and only if,  $\forall \labelling \in \mathcal{L}$, $\labelling(A) = \IN$;
\item $A$ is labelled $\CRJ$, i.e. $\labelling_\mathsf{J}(A) = \CRJ$ if, and only if, \PB{$\exists \labelling \in \mathcal{L}:\labelling(A)= \IN$ and $\labelling_\mathsf{J}(A) \neq \SKJ$};
\item otherwise $A$ is labelled $\NOJ$, i.e.  $\labelling_\mathsf{J}(A) = \NOJ$.
\end{itemize}
\label{def:argumentJustification}
\end{myDefinition}

The \primaryquote{\gullible} $\{\OFFJ, \SKJ, \CRJ, \NOJ \}$-labelling is only one of the many options in the design space of argument justification labelling\RRT{s}. Its application is exemplified below, while further investigations about alternative notions of argument justification in the extended context we are exploring are left to future work. \PBX{In this respect, we remark} that introducing uncertainty in argumentation requires revising the formalisation in all the phases of the process, some of which have received relatively limited attention until now.

\begin{myExample}[continues=example:basis]
\sloppy \RRT{Assume} the set of  $\{\IN, \OUT, \UND, \OFF \}$-labellings $\mathcal{L}$ as previously illustrated in Figure \ref{figure:groundedexample} (two labellings). The semi-skeptical {\scriptsize{$\{\mathsf{\RT{OFJ}}, \mathsf{SKJ}, \mathsf{CRJ}, \mathsf{NOJ}\}$}}-labelling with respect to  $\mathcal{L}$ is illustrated in Figure \ref{figure:Semiskepexample}. 
%
%

\begin{figure}[ht!]
\centering
\begin{tikzpicture}[node distance=1.7cm,main node/.style={circle,fill=white!20,draw,font=\sffamily\scriptsize}]

 \node[main node]  [label={$\OFFJ$}] (4) [fill = black!40] {B};
 \node[main node] (5)  [label={$\NOJ$}]  [right of=4, fill = purple!40] {C};
 \node[main node] (6) [label={$\CRJ$}] [right of=5, fill = green!20] {D};
 \node[main node] (7) [label={$\OFFJ$}] [below right of=4, fill = black!40] {B2};
 \node[main node] (8) [label={$\SKJ$}] [below left of=4, , fill = green!60] {B1};

 \path[->,>=stealth',shorten >=1pt,auto, thick,every node/.style={font=\sffamily\small}]
 (4) edge node {} (5)
 (5) edge node {} (6)
 (6) edge node [right] {} (5);
 
  \path[->,>=implies]
 (8) edge [double] node {} (4)
 (7) edge [double] node {} (4);
 
\end{tikzpicture}
\caption{\RRR{A semi-skeptical {\scriptsize{$\{\mathsf{OFFJ}, \mathsf{SKJ}, \mathsf{CRJ}, \mathsf{NOJ}\}$}}-labelling.}}
\label{figure:Semiskepexample}
\end{figure}\hfill$\square$
\end{myExample}
The \gullible $\{\OFFJ, \SKJ, \CRJ, \NOJ \}$-labelling is obviously total because  the label $\NOJ$ covers all cases not covered by the other labels, but we can nevertheless define $\NOJ$ in more specific terms, as follows.

\begin{myProposition}
\label{prop:NOJ}
Let $\mathcal{L}$ denote a non-empty set of observed argument labellings. 
 It holds that $\labelling_{\mathsf{J}}(A) = \NOJ$ if, and only if, $\exists \labelling \in \mathcal{L}, \labelling(A) = \OUT \mbox{ or } \labelling(A) = \UND $ and $\forall \labelling \in \mathcal{L}, \labelling(A) \neq \IN$.
\end{myProposition}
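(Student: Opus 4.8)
The plan is to unfold Definition~\ref{def:argumentJustification} and check that the condition ``$\labelling_\mathsf{J}(A) = \NOJ$'' is logically equivalent to the stated condition. By that definition, $\labelling_\mathsf{J}(A) = \NOJ$ holds precisely when none of the first three clauses applies, i.e. when $A$ is not labelled $\OFFJ$, not labelled $\SKJ$, and not labelled $\CRJ$. So the proof is essentially a propositional manipulation: negate ``$\forall \labelling \in \mathcal{L}, \labelling(A) = \OFF$'', ``$\forall \labelling \in \mathcal{L}, \labelling(A) = \IN$'', and ``$\exists \labelling \in \mathcal{L}, \labelling(A) = \IN$ (together with $\neg\SKJ$)'', and simplify.

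First I would observe that the negation of the $\CRJ$ clause, \emph{given that} $A$ is not labelled $\SKJ$, reduces to $\forall \labelling \in \mathcal{L}, \labelling(A) \neq \IN$; indeed the $\CRJ$ clause is ``$\exists \labelling \in \mathcal{L}: \labelling(A) = \IN$ and $\labelling_\mathsf{J}(A) \neq \SKJ$'', and since we are already assuming $\labelling_\mathsf{J}(A) \neq \SKJ$, this clause fails exactly when $\forall \labelling \in \mathcal{L}, \labelling(A) \neq \IN$. Note also that $\forall \labelling \in \mathcal{L}, \labelling(A) \neq \IN$ already entails that the $\SKJ$ clause fails (using that $\mathcal{L}$ is non-empty, so ``$\forall \labelling, \labelling(A)=\IN$'' would contradict ``$\forall \labelling, \labelling(A)\neq\IN$''), so the ``not $\SKJ$'' requirement becomes redundant. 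Hence $\labelling_\mathsf{J}(A) = \NOJ$ iff ($A$ is not $\OFFJ$) and ($\forall \labelling \in \mathcal{L}, \labelling(A) \neq \IN$).

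Next I would rewrite ``$A$ is not labelled $\OFFJ$'': by the definition this is $\neg(\forall \labelling \in \mathcal{L}, \labelling(A) = \OFF)$, i.e. $\exists \labelling \in \mathcal{L}, \labelling(A) \neq \OFF$. Combining with $\forall \labelling \in \mathcal{L}, \labelling(A) \neq \IN$: that witness $\labelling$ has $\labelling(A) \notin \{\OFF, \IN\}$, and since the labels range over $\{\IN, \OUT, \UND, \OFF\}$, it must be that $\labelling(A) = \OUT$ or $\labelling(A) = \UND$. Conversely, if $\exists \labelling \in \mathcal{L}$ with $\labelling(A) \in \{\OUT,\UND\}$ and $\forall \labelling \in \mathcal{L}, \labelling(A) \neq \IN$, then that same witness shows $A$ is not $\OFFJ$, and we already saw these two facts kill the $\SKJ$ and $\CRJ$ clauses too, so $\labelling_\mathsf{J}(A) = \NOJ$. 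This gives the stated equivalence.

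I do not expect any real obstacle here: the only subtle point is the use of non-emptiness of $\mathcal{L}$ to discharge the redundancy of the ``not $\SKJ$'' condition, and the use of the fact that the label set is exactly $\{\IN,\OUT,\UND,\OFF\}$ to go from ``$\labelling(A) \notin \{\OFF,\IN\}$'' to ``$\labelling(A) \in \{\OUT,\UND\}$''. So the write-up is a short chain of case distinctions rather than anything technically demanding; the main care is just to keep track of which clause of Definition~\ref{def:argumentJustification} is being negated at each step.
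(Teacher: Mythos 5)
Your proof is correct: it is exactly the routine unfolding of Definition~\ref{def:argumentJustification} (negating the $\OFFJ$, $\SKJ$ and $\CRJ$ clauses, using non-emptiness of $\mathcal{L}$ and the fact that the labels are exactly $\{\IN,\OUT,\UND,\OFF\}$) that the paper relies on. The paper itself states Proposition~\ref{prop:NOJ} without an explicit proof, treating it as immediate, so your argument simply makes explicit the same reasoning rather than taking a different route.
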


\medskip

To recap, given an argumentation graph, the acceptance and justification statuses of arguments  can be \PB{expressed} according to a variety of labelling specifications. To encompass uncertainty about the inclusion/omission of arguments, we introduced two additional labels indicating  inclusion (\primaryquote{\ON}) or omission (\primaryquote{\OFF}). We showed how existing labelling specifications can be gently adapted in that regard\RRT{,} and we introduced an upgraded notion of argument justification. On this basis, statements can be labelled in turn, as we will see next.

\subsection{Labelling of statements}
\label{subsection:labellingstatements}

The final phase of the process concerns the labelling of statements, and in particular the labelling of the conclusions supported by arguments. As a matter of fact, assessing conclusions is the ultimate goal of the argumentation process.
 For instance, in Example \ref{example:basis} the research scientist wants to form an opinion about whether the program is running or not.
In the remainder of this section, we focus on the labelling of statements which are, in our case, literals.

Labellings of statements can be performed in different manners, see e.g.  \cite{DBLP:conf/ecai/BaroniGR16}. From an abstract point of view,  given a set of statements $-$ where a statement may not be the conclusion of any argument $-$ a labelling of this set  is a function associating every statement with a label.

%
\begin{myDefinition}[Labelling of literals]
Let $\Phi$ denote a set of literals, and $\klabels$ denote a set of labels on literals.
A \emph{$\klabels$-labelling of $\Phi$} is a function $\lablit:  \Phi \rightarrow \klabels$.
\end{myDefinition}

Per se,  a labelling of literals is just a function mapping a set of literals to a set of labels, but such a labelling may rely on an acceptance labelling of arguments. For this reason, we introduce an acceptance labelling function as follows.

\begin{myDefinition}[Acceptance labelling of literals]
\sloppy Let $G$ denote an argumentation graph,
$\crit{X}{G}{\llabels}$  the set of $X$-$\llabels$-labellings of the argumentation graph $G$,  $\Phi$  a set of literals, and $\klabels$  a set of labels on literals.
An \emph{acceptance $\klabels$-labelling of $\Phi$} is a function $\lablit: \crit{X}{G}{\llabels} \PBX{\times} \Phi \rightarrow \klabels$.
\end{myDefinition}

\begin{myExample}
If we write  $\lablit(\labelling, \lit) = \inn$, then it means that, given the argument labelling $\labarg$, the literal $\lit$ is labelled $\inn$.
\hfill $\square$
\end{myExample}

It is often desirable that the considered labelling function  is total, and we  assume  total labelling functions in the remainder.\smallskip

Different specifications for labellings of statements are possible. A simple labelling is the so-called \emph{bivalent labelling}, according to which a statement is either justified or not, without further sophistication. If a statement is accepted then it is labelled $\inn$, otherwise it is labelled $\no$. In this case, the necessary and sufficient condition for a statement to be labelled $\inn$ is to have at least one argument labelled $\IN$ supporting this statement.

\begin{myDefinition}[Bivalent labelling of literals] 
Let $G$ denote an argumentation graph, 
$\crit{X}{G}{\llabels}$ denote the set of $X$-$\llabels$-labellings of the argumentation graph $G$,  and $\Phi$ denote a set of literals.
A \emph{bivalent $\{\inn, \no \}$-labelling of $\Phi$} is a total function $\lablit: \crit{X}{G}{\llabels}, \Phi \rightarrow \{\inn, \no \}$,  such that $\forall \lit \in \Phi$: 
\begin{itemize}
\item   $\lit$ is labelled $\inn$,  i.e. $\lablit(\labarg, \lit) = \inn$, if and only if $\exists A \in \IN(\labarg) : \Conc(A) = \lit$, i.e.  there is at least one argument $A$ such that
\begin{itemize}[noitemsep,nolistsep]
\item this argument $A$ is labelled $\IN$ in $\labarg$, and
\item the literal $\lit$ is the conclusion of argument $A$;
\end{itemize}
\item    $\lit$ is labelled $\no$ otherwise,  i.e $\lablit(\mathcal{L}, \lit) = \no$.
\end{itemize}
\end{myDefinition}
\begin{myExample}[continues=example:basis]
Consider the grounded $\{\IN, \OUT, \UND, \OFF\}$-labelling $\labelling$ as given in \Figureref \ref{figure:groundedexample}, the bivalent labelling of the set of literals supported by the arguments is such that $\lablit(\labelling, \litbone) = \inn$, $\lablit(\labelling, \litbtwo) = \no$,  $\lablit(\labelling, \litb) = \no$, $\lablit(\labelling, \litc) = \no$, and $\lablit(\labelling, \neg\litc) = \no$.
\hfill $\square$
\end{myExample}

The bivalent labelling is simple, but it takes no advantage of the fine-grained labelling of  arguments. For this reason, we may consider more sophisticated labellings concerning the statuses of statements supported by no arguments labelled $\IN$. 
So, \PBB{following the same spirit underlying $\{\IN, \OUT, \UND, \OFF\}$-labelling for arguments, we propose   the novel \emph{\rough{} labellings} where the labels $\out$, $\und$, $\off$ and $\unp$ are considered together with $\inn$.}

\begin{myDefinition}[\Rough{} labelling of literals]
\label{def:roughlab}
\sloppy Let $G$ denote an argumentation graph, 
$\crit{X}{G}{\llabels}$  the set of $X$-$\llabels$-labellings of the argumentation graph $G$, and   $\Phi$ a set of literals.
A \emph{\rough{} $\{\inn, \out, \und, \off, \unp\}$-labelling of $\Phi$} is a total function $\lablit: \crit{X}{G}{\llabels}, \Phi \rightarrow \{\inn, \out, \und, \off, \unp\}$,  such that $\forall \lit \in \Phi $: 

\begin{itemize}
\item $\lit$ is labelled $\inn$,  i.e. $\lablit(\labarg, \lit) = \inn$, if, and only if, $\exists A \in \IN(\labarg) : \Conc(A) = \lit$, i.e.  there is at least one argument $A$ such that
\begin{itemize}[noitemsep,nolistsep]
\item this argument $A$ is labelled $\IN$ in $\labarg$, and
\item the literal $\lit$ is the conclusion of argument $A$.
\end{itemize}
\item  $\lit$ is labelled $\out$,  i.e. $\lablit(\labarg, \lit) = \out$, if, and only if, $\exists A \in \OUT(\labarg) : \Conc(A) = \lit$ and $\forall A: \Conc(A) = \lit$, $A \in \OUT(\labarg) \cup \OFF(\labarg)$, i.e.
\begin{itemize}[noitemsep,nolistsep]
\item there is at least one argument whose conclusion is $\lit$, which is not labelled $\OFF$ in $\labarg$, and 
\item all arguments whose conclusion is $\lit$ are labelled $\OUT$ or $\OFF$ in $\labarg$.
\end{itemize}
\item  $\lit$ is labelled $\und$,  i.e. $\lablit(\labarg, \lit) = \und$, if, and only if, $\not\exists A \in \IN(\labarg) : \Conc(A) = \lit$ and $\exists A \in \UND(\labarg) : \Conc(A) = \lit$, i.e. 
\begin{itemize}[noitemsep,nolistsep]
\item there is no argument labelled $\IN$ in $\labarg$ whose conclusion is $\lit$, and
\item there is at  least one argument labelled $\UND$ in $\labarg$ whose conclusion is $\lit$.
\end{itemize}
\item $\lit$ is labelled $\off$,  i.e. $\lablit(\labarg, \lit) = \off$, if, and only if, $\exists A: \Conc(A) = \lit$ and $\forall A: \Conc(A) = \lit$ $A \in \OFF(\labarg)$, i.e.
\begin{itemize}[noitemsep,nolistsep]
\item there is at least one argument in $\labarg$ whose conclusion is $\lit$, and 
\item all arguments whose conclusion is $\lit$ are labelled $\OFF$ in $\labarg$.
\end{itemize}
\item   $\lit$ is labelled $\unp$,  i.e. $\lablit(\labarg, \lit) = \unp$, if, and only if, $\not\exists A: \Conc(A) = \lit$, i.e.  there is no argument labelled in $\labarg$ whose conclusion is $\lit$.
\end{itemize}
\end{myDefinition}
The  labels $\out$, $\und$, $\off$ and $\unp$ can be seen as a finer classification of the cases corresponding to the $\no$ label: with the label $\out$ all arguments concluding $\lit$ are either switched off or plainly rejected, with $\und$ at least one of them is undecided, with $\off$ they are all switched off. Finally, the label $\unp$ caters for queries about a statement for which there is no argument (we use the label $\unp$ as an abbreviation for \primaryquote{unprovable}). The name of this labelling reflects a distance from acceptance, so $\out$ is worse than $\und$ which is worse than $\off$, which is  worse than $\unp$.
\begin{myExample}[continues=example:basis]
Consider the grounded $\{\IN, \OUT,$ $\UND, \OFF\}$-labelling $\labelling$ as given in \Figureref \ref{figure:groundedexample}, the \rough{} labelling of the set of literals supported by the arguments is such that $\lablit(\labelling, \litbone) = \inn$, $\lablit(\labelling, \litbtwo) = \off$,  $\lablit(\labelling, \litb) = \off$, $\lablit(\labelling, \litc) = \und$, and $\lablit(\labelling, \neg\litc) = \und$.
\hfill $\square$
\end{myExample}

Other labels and types of labellings for statements can be considered, in particular a notion of statement justification can also be investigated,
but the labellings introduced here are sufficient to illustrate our probabilistic framework, as we will see in the next section.\\ 

\PB{To \RRT{summarise} \RRT{the} section, we have considered a minimalist but articulated formal framework covering the various phases of a rule-based argumentation process.}
 In this framework, arguments are constructed from defeasible theories by chaining defeasible rules. Some arguments can attack or \PB{be subarguments of} other arguments, and these relations amongst arguments are captured into abstract  argumentation graphs. \PB{Then, the acceptance status of arguments can be expressed in terms of labellings specified by an argumentation semantics. The acceptance labellings are in turn the basis for a labelling-based definition of argument justification and statement acceptance.}
 In \PB{addition} to existing labelling specifications, we considered two novel labels indicating the inclusion (\primaryquote{\ON}) or omission (\primaryquote{\OFF}) of arguments. We showed how existing labelling specifications can be gently adapted in that regard.  As we will see next, this framework will allow us to investigate and compare different probabilistic argumentation settings, leading us to a  labelling-oriented approach called probabilistic labellings.

\section{Probabilistic Labelling of Arguments and Statements}
\label{sec:probSetting}
\PB{Building on the framework introduced in Section \ref{sec:genlabelling}}, we are now ready to present, \RRR{as the main contribution of the paper}, a systematic approach to encompass probabilistic uncertainty in this context. In particular, we consider several possible answers to the question \primaryquote{which are the elements we are uncertain about?}, which, adopting  Kolmogorov axioms, amounts to identify various alternatives for the definition of the probability space. To this purpose, we will first introduce probabilistic argumentation frames (Subsection \ref{subsection:probarguments}), and then we will consider their use for  probabilistic labellings of arguments  (Subsection \ref{suibsection:arguments1}) and statements (Subsection \ref{subsection:probstatements}).  

\subsection{Probabilistic argumentation frames}
\label{subsection:probarguments}

Different probability spaces can \RRT{provide a} basis for probabilistic argumentation, each featuring different types of uncertainty. In that regard, we can distinguish probability spaces featuring some uncertainty on the argumentative structures, namely defeasible theories or  argumentation graphs, and probability spaces focusing on the uncertainty on the acceptance of arguments. \RRT{We investigate these alternatives in the remainder.}

\subsubsection{Structural uncertainty: probabilistic theory and graph frames}
\label{subsubsection:probarguments}

Since arguments are built on the basis of a defeasible theory, one may consider uncertainty at the level of the theory itself, \RRT{that is}, given a reference set of defeasible rules, uncertainty about which subset of rules to adopt.
In this view, the sample space corresponds to the set of subtheories of a  defeasible theory, leading to the definition of probabilistic theory frames (\PTFs).

\begin{myDefinition}[Probabilistic theory frame]
\sloppy A \emph{probabilistic theory frame} (\PTF) based on a defeasible theory $T$ is a tuple $\tuple{T, \tuple{\Omega_{\mPTF}, F_{\mPTF}, P_{\mPTF}}}$ where  $\tuple{\Omega_{\mPTF},F_{\mPTF},P_{\mPTF}}$ is a probability space such that:

\begin{itemize}
\item the sample space $\Omega_{\mPTF}$ is the set of subtheories of $T$, i.e. $\Omega_{\mPTF} =  \Sub(T)$;   
\item  the $\sigma$-algebra\footnote{\RRR{An algebra of sets is a collection \RRT{$\mathcal{C}$} of subsets of a given set $S$ such that 
(i) $S \in \RRT{\mathcal{C}}$, (ii) if $X \in \RRT{\mathcal{C}}$ and $Y \in \RRT{\mathcal{C}}$ then $X \cup Y \in \RRT{\mathcal{C}}$,
(iii) if $X \in \RRT{\mathcal{C}}$ then $S \backslash X \in  \RRT{\mathcal{C}}$. The collection \RRT{$\mathcal{C}$} is also closed under intersections. A $\sigma$-algebra is additionally closed under countable unions (and intersections): (iv) If $X_n \in \RRT{\mathcal{C}}$ for all $n$, then $\bigcup^{\infty}_{n=0} X_n \in \RRT{\mathcal{C}}$ \cite{Jech}. Note that, while we specify the $\sigma$-algebra as a power set, it may be specified differently. We adopt  here the most common specification, which appears to us as the most `natural' for our purposes.}} $F_{\mPTF}$ is the power set of $\Omega_{\mPTF}$, i.e. $F_{\mPTF} = 2^{\Omega_{\mPTF}}$;
\item  the  function $P_{\mPTF}$ from $F_{\mPTF}$ to $[0,1]$ is a probability distribution satisfying Kolmogorov axioms.
\end{itemize}
\label{definition:probabilistic_theory_frame}
\end{myDefinition}

\begin{myNotation}
The (infinite) set of the \PTFs based on a defeasible theory  $T$ (differing on the probability function $P$) is denoted  $\basedPTF{T}$.
\end{myNotation}

Every subtheory $\subtheory$ of a theory $T$ generates exactly one argumentation graph $\gengraph{\subtheory}$, thus uncertainty about the set of rules also corresponds to being uncertain about the \RRT{induced} argumentation graph among the subgraphs of $\gengraph{T}$.
For this reason, one may decide to adopt a representation at a more abstract level, where, given a reference argumentation graph, there is uncertainty about which of its subgraphs should be actually \PBX{taken into account}, leading to the definition of probabilistic graph frames (\PGFs).

\begin{myDefinition}[Probabilistic graph frame]
A \emph{probabilistic graph frame} (\PGF)  based on an argumentation graph $G$ is a tuple $\tuple{G, \tuple{\Omega_{\mPGF}, F_{\mPGF}, P_{\mPGF}}}$ where $\tuple{\Omega_{\mPGF},F_{\mPGF},P_{\mPGF}}$ is a probability space such that:
\begin{itemize}
\item the sample space $\Omega_{\mPGF}$ is the set of subgraphs of $G$, i.e. $\Omega_{\mPGF} =  \subgraphs{G}$;   
\item  the $\sigma$-algebra $F_{\mPGF}$ is the power set of $\Omega_{\mPGF}$,  i.e. $F = 2^{\Omega_{\mPGF}}$;
\item  the  function $P_{\mPGF}$ from $F_{\mPGF}$ to $[0,1]$ is a probability distribution satisfying Kolmogorov axioms.
\end{itemize}
\end{myDefinition}

\begin{myNotation}
The (infinite) set of the \PGFs based on an argumentation graph $G$ (differing on the probability function $P$) is denoted $\basedPGF{G}$.
\end{myNotation}

The relation from the subtheories of a theory $T$ to the subgraphs of $\gengraph{T}$ is neither injective nor surjective. 
Non injectivity is due to the fact that it may be the case $\gengraph{\subtheory}=\gengraph{\subtheory'}$ with $\subtheory \neq \subtheory' $ i.e. different subtheories may generate the same graph. This is due to the fact that subtheories may differ due to the presence of rules which are actually not used in the construction of any argument.
Non suriectivity is due to the fact that for every subtheory $\subtheory$, $\gengraph{\subtheory}$ is \subargumentcomplete and \rulecomplete, while $\subgraphs{\gengraph{T}}$ typically includes also subgraphs not satisfying these properties. 
This has an effect on the correspondence from \PTFs to \PGFs defined as follows.

\begin{myDefinition}[\PGF corresponding to  a \PTF]
\sloppy Given a \PTF $\tuple{T, \tuple{ \Omega_{\PTF},  F_{\mPTF}, P_{\mPTF}}}$, its corresponding \PGF is a \PGF tuple $\tuple{\gengraph{T}, \tuple{\Omega_{\mPGF},  F_{\mPGF}, P_{\mPGF}}}$  where the probability distribution  $P_{\mPGF}$ is such that  $\forall S \in  F_{\mPGF}$:\footnote{It is assumed that a sum over the empty set is 0, i.e. $P_{\mPGF}(S) = 0$ if $\set{\subtheory \in \Sub(T) \mid \gengraph{\subtheory} \in S}=\emptyset$.}
$$P_{\mPGF}(S) = \sum_{\set{\subtheory \in \Sub(T) \mid \gengraph{\subtheory} \in S}}P_{\mPTF}(\set{\subtheory}).$$
\label{defPGFfromPTF}
\end{myDefinition}
\begin{myNotation}
The \PGF corresponding to a \PTF $\probFrame$ is denoted $\genPGF{\probFrame}$. 
\end{myNotation}

Applying Definition \ref{defPGFfromPTF} to the case of singletons\RRT{,} we get that the distribution $P_{\mPGF}$ is such that for every subgraph $H$ in $\subgraphs{G}$,  
\begin{equation}P_{\mPGF}(\set{H})=\sum_{\set{\subtheory \in \Sub(T) \mid \gengraph{\subtheory} = H}}P_{\mPTF}(\set{\subtheory}).
\end{equation}
\PBB{Given that for every subtheory $\subtheory \in \Sub(T)$ \RRR{the graph} $\gengraph{\subtheory}$ is \subargumentcomplete and \rulecomplete, it follows also} that for every subgraph $H$ in  $\subgraphs{G}$ which is not \subargumentcomplete or \rulecomplete it holds that $P_{\mPGF}(\set{H})=0$.  \RRR{Hence,} the mapping  between the sample space of a \PTF and its corresponding \PGF   may be neither injective nor surjective.

\begin{myExample}[continues=example:rulecomplete]
\label{ex:neither}
Consider the \PTF $\tuple{\mathsf{T}, \tuple{\Omega_\mPTF, F_\mPTF, P_\mPTF}}$ and its corresponding \PGF $\tuple{\mathsf{G}_\mathsf{T}, \tuple{\Omega_\mPGF, F_\mPGF, P_\mPGF}}$.
\Figureref \ref{fig:samplespaces2} shows the mapping from $\Omega_\mPTF$ to $\Omega_\mPGF$, \PBB{where each graph is represented by the arguments belonging to it. The mapping  is clearly neither injective  nor surjective.} \PBB{An arbitrary probability distribution on the subtheories of $\mathsf{T}$ is given and the corresponding  distribution on the subgraphs of $\mathsf{G}_\mathsf{T}$ is shown.}
\begin{figure}[ht!]
\centering
  \begin{tikzpicture}[cell/.style={rectangle,draw=black}, nodes in empty cells]
  \matrix[
  matrix of math nodes,
  ](m)
  {  
  &    \mathsf{r_{1}} & \mathsf{r_{2}} &  \mathsf{r_{3}} & \mathsf{r_{4}}  &  P_\mPTF & & & & & &  & & & \,\mathsf{A}\, & \,\mathsf{B}\, & \mathsf{AB} & \mathsf{BC} & \mathsf{ABC} 	& P_\mPGF	\\ 
 \mathsf{U}_{1}&    1 & 1 &  1 & 1 & 1/16 & & & & & &  & &  \mathsf{H}_{1}& 1 & 1 & 1 & 1 & 1 & 1/16		\\ [-0.5ex]
 \mathsf{U}_{2}&    1 & 1 &  1 & |[fill=black!30]|0 & 2/16   & & & & & &  & & \mathsf{H}_{2}& 1 & 1 & 1 & 1 & |[fill=black!30]|0 & 0/16	\\ [-0.5ex]
  \mathsf{U}_{3}&   1 & 1 &  |[fill=black!30]|0 & 1 & 0/16  & & & & & &  & & \mathsf{H}_{3}& 1 & 1 & 1 & |[fill=black!30]|0 & 1  & 0/16   \\[-0.5ex]
  \mathsf{U}_4&     1 & 1 &  |[fill=black!30]|0 & |[fill=black!30]|0 & 1/16  & & & & & &  & & \mathsf{H}_{4}& 1 & 1 & 1 & |[fill=black!30]|0 & |[fill=black!30]|0  & 2/16  \\[-0.5ex]
 \mathsf{U}_5&      1 & |[fill=black!30]|0 &  1 & 1 & 1/16  & & & & & &  & & \mathsf{H}_{5}&  1 & 1 & |[fill=black!30]|0 & 1 & |[fill=black!30]|0  & 0/16  \\[-0.5ex]
 \mathsf{U}_6&      1 & |[fill=black!30]|0 &  1 & |[fill=black!30]|0 & 1/16  & & & & & &  & & \mathsf{H}_{6}&  1 & 1 & |[fill=black!30]|0 & |[fill=black!30]|0 & |[fill=black!30]|0  & 1/16    \\[-0.5ex]
 \mathsf{U}_7&      1 & |[fill=black!30]|0 &  |[fill=black!30]|0 & 1 & 0/16  & & & & & &  & & \mathsf{H}_{7}& 1 & |[fill=black!30]|0 & 1 & |[fill=black!30]|0 & 1   & 1/16   \\ [-0.5ex]
 \mathsf{U}_8&      1 & |[fill=black!30]|0 &  |[fill=black!30]|0 & |[fill=black!30]|0 & 2/16  & & & & & &  & & \mathsf{H}_{8}&  1 & |[fill=black!30]|0 & 1 & |[fill=black!30]|0 & |[fill=black!30]|0 & 1/16      \\ [-0.5ex]
 \mathsf{U}_9&      |[fill=black!30]|0 & 1 &  1 & 1 & 1/16  & & & & & &  & & \mathsf{H}_{9}&  1 & |[fill=black!30]|0 & |[fill=black!30]|0 & |[fill=black!30]|0 &  |[fill=black!30]|0  & 2/16 \\ [-0.5ex]
 \mathsf{U}_{10}&   |[fill=black!30]|0 & 1 &  1 & |[fill=black!30]|0 &1/16  & & & & & &  & & \mathsf{H}_{10}& |[fill=black!30]|0 & 1 & |[fill=black!30]|0 & 1 & |[fill=black!30]|0 & 2/16 \\[-0.5ex]
 \mathsf{U}_{11}&   |[fill=black!30]|0 & 1 &  |[fill=black!30]|0 & 1 & 1/16  & & & & & &  & & \mathsf{H}_{11}& |[fill=black!30]|0 & 1 & |[fill=black!30]|0 & |[fill=black!30]|0  & |[fill=black!30]|0 & 2/16 \\[-0.5ex]
  \mathsf{U}_{12}&  |[fill=black!30]|0 & 1 &  |[fill=black!30]|0 & |[fill=black!30]|0 & 1/16  & & & & & &  & &\mathsf{H}_{12}&  |[fill=black!30]|0 & |[fill=black!30]|0 & |[fill=black!30]|0 & |[fill=black!30]|0  & |[fill=black!30]|0 & 4/16 \\[-0.5ex]
  \mathsf{U}_{13}&  |[fill=black!30]|0 & |[fill=black!30]|0 &  1 & 1 & 0/16 & \\[-0.5ex]
  \mathsf{U}_{14}&  |[fill=black!30]|0 & |[fill=black!30]|0 &  1 & |[fill=black!30]|0 & 0/16 & \\[-0.5ex]
  \mathsf{U}_{15}&  |[fill=black!30]|0 & |[fill=black!30]|0 &  |[fill=black!30]|0 & 1 & 3/16 & \\[-0.5ex]
  \mathsf{U}_{16}&  |[fill=black!30]|0 & |[fill=black!30]|0 &  |[fill=black!30]|0 & |[fill=black!30]|0 & 1/16  & \\ [-1ex]
  };
  
  \draw[-latex] (m-2-7.north west) -- (m-2-13.north east);
  \draw[-latex] (m-3-7.north west) -- (m-5-13.north east);  
  \draw[-latex] (m-4-7.north west) -- (m-6-13.north east);  
  \draw[-latex] (m-5-7.north west) -- (m-7-13.north east);  
  \draw[-latex] (m-6-7.north west) -- (m-8-13.north east);  
  \draw[-latex] (m-7-7.north west) -- (m-9-13.north east);  
  \draw[-latex] (m-8-7.north west) -- (m-10-13.north east);  
  \draw[-latex] (m-9-7.north west) -- (m-10-13.north east);          

  \draw[-latex] (m-10-7.north west) -- (m-11-13.north east);        
  \draw[-latex] (m-12-7.north west) -- (m-11-13.north east);        

  \draw[-latex] (m-10-7.north west) -- (m-11-13.north east);        
  \draw[-latex] (m-12-7.north west) -- (m-11-13.north east);    

  \draw[-latex] (m-11-7.north west) -- (m-12-13.north east);        
  \draw[-latex] (m-13-7.north west) -- (m-12-13.north east);   

  \draw[-latex] (m-14-7.north west) -- (m-13-13.north east);        
  \draw[-latex] (m-15-7.north west) -- (m-13-13.north east);   
  \draw[-latex] (m-16-7.north west) -- (m-13-13.north east);        
  \draw[-latex] (m-17-7.north west) -- (m-13-13.north east);   
\end{tikzpicture}
\caption{Non-injective non-surjective mapping from the sample space (of subtheories $\mathsf{U}_i$) of the considered \PTF (on the left) to the sample space of its corresponding \PGF (on the right). }
\label{fig:samplespaces2}
\end{figure}
\hfill $\square$
\end{myExample}

Given a defeasible theory $T$, the set of \PGFs based on the subgraphs of $\gengraph{T}$ is a strict superset of the set of \PGFs that can be generated starting from the \PTFs based on $T$. Formally:
\begin{equation} 
\basedPGF{\gengraph{T}} \not\supseteq \set{\genPGF{\probFrame} \mid \probFrame \in \basedPTF{T}}.
\end{equation}
This is because, for every \PTF $\probFrame$ in $\basedPTF{T}$, $\genPGF{\probFrame}$ is such that the subgraphs of $\gengraph{T}$  which are not \subargumentcomplete and \rulecomplete have probability \RRR{zero}, while $\basedPGF{\gengraph{T}}$ includes also \PGFs not satisfying this constraint.

Hence, \PGFs are stricty more expressive than \PTFs as far uncertainty about the actual graph to be considered is concerned. When moving from a \PTF to a corresponding \PGF, some information is also lost due to non-injectivity: in general many different \PTFs may generate the same \PGF. However, this loss of information has no practical effects, since if two \PTFs generate the same \PGF, the difference between them has no influence on the subsequent argumentation steps.\smallskip

\PTFs and \PGFs capture two related forms of structural uncertainty. \PTFs refer to uncertainty about the structure of the theory to be used, while \PGFs refer to the (consequent) uncertainty about which arguments are present in an argumentation graph. \RRT{Now,} a distinct but possibly coexisting kind of uncertainty not involving structural aspects \RRT{can} be considered too, as investigated  next.

\subsubsection{\Selunc uncertainty: probabilistic labelling frames}

\Selunc uncertainty refers to the fact that, given an argumentation graph, a semantics may prescribe multiple \selunc labellings for it, each representing individually a reasonable outcome of the conflict resolution process.
The agent carrying out argument-based reasoning is then confronted with the choice of one among these outcomes.
Indeed,  choosing one of the reasonable options is mandatory to avoid standstill, at least in practical reasoning contexts, as exemplified by the famous Buridan's donkey example.
One may therefore want to express quantitative uncertainty about which option, i.e. which \selunc labelling, to choose.
Clearly this kind of \selunc uncertainty is \primaryquote{orthogonal} to structural uncertainty and is outside the scope of \PTFs and \PGFs.

To address \selunc uncertainty, we \RRT{devise} a sample space which is the set of possible labellings of an argumentation graph, as previously employed in \cite{DBLP:conf/atal/RiveretG16,DBLP:journals/argcom/RiveretKDP15,DBLP:conf/atal/RiveretPKD15}, and resulting into what we call \emph{probabilistic labelling frames} (\PAFs)  \RRT{that we} investigate in the remainder of this section.

\sloppy \PAFs are based on the assumption that, given an $X$-$\llabels$-labelling specification for an argumentation graph $G$,
an agent can assign a probability to every labelling of the set of labellings  $\crit{X}{G}{\llabels}$ which represents the sample space.
Intuitively, this can be interpreted as follows: the $X$-$\llabels$-labelling specification  gives rise to a first, uncertainty free, selection of the set of labellings $\smash{\crit{X}{G}{\llabels}}$ that are worth considering within the universe $\setofalllabels{G}{\llabels}$. Then the  labellings in $\crit{X}{G}{\llabels}$  may not be regarded as equally probable and hence associated with different probability values.
Note that, at this abstract level, we do not commit to any specific meaning of the probability of a labelling belonging to  $\crit{X}{G}{\llabels}$. For instance, in an epistemic reasoning scenario, different probability values may correspond to the fact that an agent, according to some domain-specific information, \RRT{views} the various labellings as differently credible, while in a social argumentation context, the probability values may correspond to the expectation that a specific labelling is approved by the members of a community.
Our framework, being application agnostic, provides a uniform formal model to investigate general properties valid for a large family of probabilistic argumentation scenarios, each of which can be characterised by the adopted labelling specification and, possibly, by additional specific properties.

\begin{myNotation}
Whilst our notational nomenclature refers to $X$-$\llabels$-labellings (e.g.  $\mathsf{legal}$-$\{\ON, \OFF\}$-labellings, $\mathsf{complete}$-$\{\IN, \OUT,$ $\UND\}$-labellings, $\mathsf{complete}$-$\{\IN, \OUT, \UND, \OFF\}$-labellings and so on) where $X$ and $\llabels$ refers to an $X$-$\llabels$-labelling specification, for the sake of notational conciseness we will sometimes speak of $\mathcal{S}$-labellings, using a single symbol $\mathcal{S}$ to synthesise the pair of symbols $X$-$\llabels$.
\end{myNotation}
\vspace{7mm}

\begin{myDefinition} [Probabilistic labelling frame]
\sloppy A \emph{probabilistic labelling frame} (\PAF)  based on an argumentation graph $G$ is a tuple\footnote{\PB{Later we will often omit the subscript $\mPAF$ for the sake of conciseness.}} $\tuple{G, \mathcal{S}, \tuple{\Omega_{\mPAF}, F_{\mPAF}, P_{\mPAF}}}$ where $\mathcal{S}$ \textcolor{black}{ denotes an $X$-$\llabels$-labelling specification}, and $\tuple{\Omega_{\mPAF}, F_{\mPAF}, P_{\mPAF}}$ is a probability space such that:
\begin{itemize}
\item the sample space $\Omega_{\mPAF}$ is the set of $X$-$\llabels$-labellings of $G$, i.e. $\Omega_{\mPAF} = \crit{X}{G}{\llabels}$; 
\item the $\sigma$-algebra $F_{\mPAF}$ is the power set of  $\Omega_{\mPAF}$, i.e. $F_{\mPAF}= 2^{\Omega_{\mPAF}}$;
\item the function $P_{\mPAF}$ from $F_{\mPAF}$ to $[0,1]$ is a probability distribution satisfying Kolmogorov axioms.
\end{itemize}
\label{pafone}
\end{myDefinition}

Thanks to the versatility of labellings, \PAFs feature enhanced expressiveness with respect to \PGFs.
In fact \PGFs can be regarded as a special case of \PAFs with  $\llabels=\set{\ON,\OFF}$, as illustrated by the following definition. 

\begin{myDefinition} [\PAF corresponding to a \PGF]
\label{defPAFPGF}
Given a \PGF $\tuple{G, \tuple{ \Omega_{\mPGF}, F_{\mPGF}, P_{\mPGF}}}$, 
its corresponding \PAF is a \PAF tuple $ \tuple{G, \ALL\mbox{-}\set{\ON,\OFF}, \tuple{\Omega_{\mPAF}, F_{\mPAF}, P_{\mPAF}}}$ where the probability distribution $P_{\mPAF}$  is such that $\forall S \in  F_{\mPAF}$: 
$$P_{\mPAF}(S) = \sum_{\{H \in \Sub(G)\, \mid\, \labelling_{G,H} \in S  \}} P_{\mPGF}( \set{H} ).$$
\end{myDefinition}

\begin{myNotation}
The \PAF corresponding to a \PGF $\probFrame$ is denoted as $\genPAF{\probFrame}$.
\end{myNotation}

Given a \PGF based on an argumentation graph $G$, one can draw different mappings from the elements of the sample space $\Omega_{\mPGF}=\Sub(G)$ to the elements of the sample space of a \PAF based on $G$, depending on the adopted labelling specification. As mentioned above, if one adopts the $\ALL\mbox{-}\set{\ON,\OFF}$ specification every element of $\Omega_{\mPGF}$, i.e. every subgraph of $G$, corresponds exactly to an element of  $\Omega_{\mPAF}$, i.e. to a $\set{\ON,\OFF}$-labelling and vice versa.
If instead one adopts an $X$-$\set{\IN,\OUT,\UND,\OFF}$ specification where $X$ is a multiple status argumentation semantics, in general for each subgraph $\RRR{H}$ of $G$ there are many corresponding elements in $\Omega_{\mPAF}$, namely the set of $X$-$\set{\IN,\OUT,\UND,\OFF}$\RRT{-}labellings of $G$ corresponding to the set of $X$-$\set{\IN,\OUT,\UND}$\RRT{-}labellings of $\RRR{H}$. However, a reverse correspondence can be drawn as follows.

\begin{myDefinition} [\PGF corresponding to a \PAF]
\label{defPAFPGF}
\item 
Given a \PAF $ \tuple{G, \mathcal{S}, \tuple{\Omega_{\mPAF}, F_{\mPAF}, P_{\mPAF}}}$, 
its corresponding \PGF is a \PGF tuple $\tuple{G, \tuple{ \Omega_{\mPGF}, F_{\mPGF}, P_{\mPGF}}}$
 where the probability distribution $P_{\mPGF}$  is such that $\forall S \in  F_{\mPGF}$: 
$$P_{\mPGF}(S) = \sum_{\PBB{\{\labelling \in \Omega_{\mPAF}\, \mid\, G(\labelling) \in S  \}}} P_{\mPAF}( \set{\labelling} )$$
\PBB{where $G(\labelling)$ is the subgraph of $G$ induced by the set of arguments which are not labelled $\OFF$ in $\labelling$, namely, letting $\mathcal{X}= \mathcal{A} \setminus \OFF(\labelling)$,  $G(\labelling)=(\mathcal{X}, \defeat_G \cap (\mathcal{X} \times \mathcal{X}), \support_G  \cap (\mathcal{X} \times \mathcal{X}))$.}
 \end{myDefinition}
\noindent \RRR{The correspondence is illustrated in Example \ref{ex:neither2}.
If $X$ is single status the sets mentioned above are singletons and the mapping between $\Omega_{\mPGF}$ and $\Omega_{\mPAF}$ is bijective.}

\begin{myExample}[Abstract example II]
\label{ex:neither2}
Let us consider a \PGF  $\tuple{\mathsf{G}, \tuple{\Omega_{\mPGF}, F_{\mPGF}, P_{\mPGF}}}$ and a \PAF  $\tuple{\mathsf{G}, \mathsf{preferred}\mbox{-}\{ \IN, \OUT, \UND, \OFF\},\tuple{\Omega_{\mPAF}, F_{\mPAF}, P_{\mPAF}}}$ with  the argumentation graph $\mathsf{G} = \tuple{ \set{\mathsf{B},\mathsf{C}}, \set{ (\mathsf{B}, \mathsf{C}), (\mathsf{C}, \mathsf{B}) }  }$ as pictured in Figure \ref{fig:examplesupportcons} .
\begin{figure}[ht!]
\centering
\begin{tikzpicture}[node distance=1.4cm,main node/.style={circle,fill=white!20,draw,font=\sffamily\scriptsize}]

 \node[main node] (5) [right of=4] {B};
 \node[main node] (6) [right of=5] {C};

 \path[->,shorten >=1pt,auto, thick,every node/.style={font=\sffamily\small}]
 (5) edge node {} (6)
 (6) edge node [right] {} (5);
 
\end{tikzpicture}
\caption{A \PBB{simple} argumentation graph.}
\label{fig:examplesupportcons}
\end{figure}

\noindent Figure \ref{figure:mappingx} shows the mapping from the sample space $\Omega_{\mPAF}$ to $\Omega_{\mPGF}$, \PBB{a given probability distribution on $\Omega_{\mPAF}$\PBX{,} and the corresponding distribution on $\Omega_{\mPGF}$}.

\begin{figure}[ht!]
\centering
  \begin{tikzpicture}[cell/.style={rectangle,draw=black}, nodes in empty cells]
  \matrix[
  matrix of math nodes,
  ](m)
  {  
  &  \mathsf{B} & \mathsf{C} & P_{\mPGF}  &  & & & & & & &  & & & \mathsf{B} & \mathsf{C} & P_{\mPAF} & & \\ 
 \mathsf{H}_{1}&  1 & 1 & 0.8 &  &  & & & & & &  & & \labelling_1 & |[fill=green!30]|\IN & |[fill=red!30]|\OUT 	& 0.4	\\ [-0.5ex]
 \mathsf{H}_{2}&   1 & |[fill=black!30]|0 & 0.2 &  &   & & & & & &  & & \labelling_2 & |[fill=red!30]|\OUT & |[fill=green!30]|\IN & 0.4	\\ [-0.5ex]
 \mathsf{H}_{3}&   |[fill=black!30]|0 & 1 &  0 &  &  & & & & & &  & & \labelling_3 & |[fill=green!30]|\IN & |[fill=black!30]|\OFF  & 0.2    \\[-0.5ex]
 \mathsf{H}_4&     |[fill=black!30]|0 & |[fill=black!30]|0 & 0  &  &  & & & & & &  & & \labelling_4 &  |[fill=black!30]|\OFF & |[fill=green!30]|\IN  & 0  \\[-0.5ex]
  &       &  &   &  &  & & & & & &  & & \labelling_5 &  |[fill=black!30]|\PBX{\OFF} & |[fill=black!30]|\PBX{\OFF} & 0  \\[-0.5ex]
  };
  
  \draw[-latex] (m-2-13.north east) -- (m-2-6.north west);
  \draw[-latex] (m-3-13.north east) -- (m-2-6.north west);
  \draw[-latex] (m-4-13.north east) -- (m-3-6.north west);  
  \draw[-latex] (m-5-13.north east) -- (m-4-6.north west);  
  \draw[-latex] (m-6-13.north east) -- (m-5-6.north west);  
\end{tikzpicture}
\caption{\RRR{Mapping from the sample space of the considered \PAF (on the right) to the sample space of \PBB{the relevant \PGF (on the left) for the argumentation graph of Figure \ref{fig:examplesupportcons}.}} }
\label{figure:mappingx}
\end{figure}
\hfill $\square$
\end{myExample}

Since \PGFs are more expressive than \PTFs, as far as uncertainty about the actual argumentation graph is concerned,  this correspondence shows that also \PAFs are strictly more expressive than \PTFs in this respect.
Further, one can also extend the non-injective non-surjective relationship from \PTFs to \PGFs  (see Definition \ref{defPGFfromPTF}) to the corresponding \PAFs.

While \PAFs using $\set{\ON,\OFF}$-labellings are able to encompass the uncertainty about argumentation graphs expressed by \PTFs and \PGFs, \PAFs using \set{\IN, \OUT, \UND}-labellings can  express uncertainty about the actual labelling to choose in a multiple status semantics with a fixed argumentation graph, thus covering the \selunc uncertainty dimension alone.
More interestingly,  \PAFs using $\set{\IN, \OUT, \UND,\OFF}$-labellings are able to combine both kinds of uncertainty in a single representation. As discussed later, this allows  to capture some existing literature approaches as special cases of \PAFs, providing at the same time a comprehensive formalism to combine them.\medskip

To recap, we have  \RRT{laid down} different kinds of probabilistic frames, namely \PTFs, \PGFs and (the novel concept of) \PAFs, each kind featuring a different probability space. We showed that \PAFs subsume \PTFs and \PGFs, and for this reason, we will  focus   on \PAFs in the remainder. Before studying the relationships between \PAFs with other works, we will see next that such \PAFs are  convenient to attach probabilistic measures to different status\PB{es} of arguments and to investigate their relationships.

\subsection{Probabilistic labelling of arguments}
\label{suibsection:arguments1}

The generalised framework introduced in the previous sections provides the basis for introducing further notions related to the reasoning tasks an agent can be interested in.
For instance\RRT{,} an agent may focus attention on a few arguments which are most significant for his/her purposes, and be interested in the labels that can be assigned to these arguments and in the relevant probabilities both at the level of argument acceptance and of argument justification.

\subsubsection{Probabilistic \PB{argument} acceptance labellings}

\PB{In order to define probabilistic argument acceptance labellings,} we introduce suitable random variables on the basis of the generic probabilistic space introduced for \PAFs.

Recall that a random variable is a function (traditionally \PBX{denoted} by an upper case letter as $X$, $Y$ or $Z$ for example) from $\Omega$ into another set $\domain$ of elements. An assignment of a random variable $X$ is denoted as $X=\domainelem$ where $\domainelem \in \domain$. An assignment for a set of random variables $\set{X_1, \ldots, X_n}$ is simply a set $\set{X_1=\domainelem_1, \ldots, X_n=\domainelem_n}$ of assignments to all variables in the set.

For every argument $A$, we use a categorical random variable called a \emph{random labelling} denoted $L_A$ from $\Omega$ into a set $\llabels$ of labels such as $\{\ON, \OFF \}$, or  $\{\IN, \OUT, \UND \}$ or $\{\IN, \OUT, \UND, \OFF \}$. 
So, for instance, if the sample space $\Omega$ is a set of $\{\ON, \OFF \}$-labellings, the event $L_A = \ON$ is a shorthand for the outcomes $\{ \labelling  | \labelling  \in \Omega,  \labelling(A) = \ON \}$. If the sample space $\Omega$ is a set of $\{\IN, \OUT, \UND \}$-labellings or $\{\IN, \OUT, \UND, \OFF \}$-labellings, then
$L_A = \ON$ is a shorthand for the outcomes $\{ \labelling  | \labelling  \in \Omega,  \labelling(A) = \IN \mbox{ or }  \labelling(A) = \OUT \mbox{ or }  \labelling(A) = \UND \}$.

\begin{myNotation}
\item
\begin{itemize}
\item Sets of random labellings are denoted using upper boldface type. So $\textbf{L}$ denotes a set of random labellings $\{L_{A1}, \ldots, L_{An}\}$. 
\item   Assignments to (sets of) random labellings are denoted using boldface type. 
\RRT{For instance,} given a set of random labellings $\textbf{L} = \{L_{A1}, \ldots, L_{An}\}$, a possible assignment is $\mathbf{l} = \{L_{A1} = \ON, \ldots, L_{An}= \OFF \}.$
\item  An assignment to a set of random labellings can be straightforwardly mapped to a labelling, and for this reason,
an \primaryquote{assignment} may be also called a \primaryquote{labelling} in the remainder. When the distinction is made, and for the sake of compactness, the labelling $\labarg$ corresponding to an assignment $\alas$ is denoted $\labarg_{\alas}$. To make a bridge with our notation for labellings and to avoid any ambiguity, we  write $\alas_A= l$ to say that the random labelling of argument $A$ is assigned the value $l$ according to the assignment  $\alas$.
\item The joint distribution over a set 
$\textbf{L} = \{L_{A1}, L_{A2}, \ldots, L_{An} \}$ of random labellings is formally denoted  $P(\{L_{A1}, L_{A2}, \ldots, L_{An} \})$, but following standard notation, we will write it $P(L_{A1}, L_{A2}, \ldots, L_{An})$.
\end{itemize} 
\end{myNotation}

From the definition of random variables, the probability of an assignment  $\alas$ for a set of random labellings is the sum of the probabilities of the labellings $\labelling_{\alas}$ in the sample space where this assignment occurs:

\begin{equation}
P(\alas) = \sum_{\labelling_{\alas} \in \Omega}  P(\set{\labelling_{\alas}}).
\end{equation}

Consequently, if the assignment concerns only one argument $A$, then  the marginal probability of the status of this argument $A$ is the sum of the probabilities of the labellings in the sample space where this assignment occurs: 

\begin{equation}
P(L_A = l) = \sum_{\labelling \in \Omega: \labelling(A)=l}  P(\set{\labelling}).
\label{eqlabsingarg}
\end{equation}

\begin{myExample}[continues=ex:neither2]
\item
\RRR{Suppose the \PAF  $\tuple{\mathsf{G}, \mathsf{preferred}\mbox{-}\{ \IN, \OUT, \UND, \OFF\},\tuple{\Omega_{\mPAF}, F_{\mPAF}, P_{\mPAF}}}$ has a distribution as indicated in Figure \ref{figure:mappingx}.} 
\RRR{By Equation \ref{eqlabsingarg},  we can compute the marginal probability that arguments $\mathsf{B}$ or $\mathsf{C}$ obtain a certain acceptance status, as follows.\smallskip
\begin{center}
\begin{tabular}{lll}
$P_{\mPAF}(L_{\mathsf{B}}= \IN ) =  P_{\mPAF}(\set{\labelling_1} ) +  P_{\mPAF}(\set{\labelling_3})$ &  $(= 0.6)$  \\
$P_{\mPAF}(L_{\mathsf{B}}= \OUT ) = P_{\mPAF}(\set{\labelling_2} )$  & $(= 0.4)$  \\
$P_{\mPAF}(L_{\mathsf{B}}= \UND ) = 0$ & & \\
$P_{\mPAF}(L_{\mathsf{B}}= \OFF ) = 0$ & & \\
$P_{\mPAF}(L_{\mathsf{C}}= \IN ) = P_{\mPAF}( \set{\labelling_2} )$ & $(= 0.4)$ \\
$P_{\mPAF}(L_{\mathsf{C}}= \OUT ) = P_{\mPAF}( \set{\labelling_1} )$ & $( = 0.4)$ \\
$P_{\mPAF}(L_{\mathsf{C}}= \UND ) = 0$ \\
$P_{\mPAF}(L_{\mathsf{C}}= \OFF ) = P_{\mPAF}( \set{\labelling_3})$ & $( = 0.2)$ 
\end{tabular}
\end{center}
\hfill$\square$}
\end{myExample}

Since a random labelling is a random variable, we get directly the desirable property that the probabilities of the different labels for an argument sum up to $1$.

\begin{myProposition}
\label{consistency0}
Let $\tuple{G, X\mbox{-}\llabels , \tuple{\Omega, F, P}}$ be a \PAF.

$$\sum_{l \in \llabels} P(L_A = l) = 1.$$

\end{myProposition}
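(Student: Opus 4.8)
The plan is to observe that $L_A$ is a genuine random variable on the probability space $\tuple{\Omega, F, P}$, so that the events $\set{L_A = l}$, for $l$ ranging over $\llabels$, constitute a finite measurable partition of $\Omega$; the claimed identity then follows from finite additivity of $P$ together with $P(\Omega)=1$, both of which are part of the Kolmogorov axioms assumed in \Definitionref \ref{pafone}.

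Concretely, first I would recall that by \Definitionref \ref{pafone} we have $\Omega = \crit{X}{G}{\llabels}$, a set of \emph{total} $\llabels$-labellings of $G$, that $F = 2^{\Omega}$, and that $P$ is a probability distribution, hence $P(\Omega)=1$ and $P$ is additive over pairwise disjoint members of $F$. For the fixed argument $A \in \AR_G$, since every $\labelling \in \Omega$ is a total function $\AR_G \rightarrow \llabels$, the value $\labelling(A)$ is well-defined and belongs to $\llabels$.

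Next I would make the partition explicit. For each $l \in \llabels$ put $E_l = \set{\labelling \in \Omega \mid \labelling(A) = l}$; this is exactly the event $L_A = l$ and it lies in $F = 2^{\Omega}$. The family $\set{E_l}_{l \in \llabels}$ is pairwise disjoint, because $\labelling(A)$ is a single value for each $\labelling$, and its union is all of $\Omega$, because $\labelling$ is total; moreover $\llabels$ is finite (in our setting it is one of $\set{\ON,\OFF}$, $\set{\IN,\OUT,\UND}$ or $\set{\IN,\OUT,\UND,\OFF}$). Hence, by finite additivity, $\sum_{l \in \llabels} P(L_A = l) = \sum_{l \in \llabels} P(E_l) = P\bigl(\bigcup_{l \in \llabels} E_l\bigr) = P(\Omega) = 1$. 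Equivalently, one can sum Equation \ref{eqlabsingarg} over all $l \in \llabels$ and reorganise the resulting double sum: each labelling $\labelling \in \Omega$ contributes the term $P(\set{\labelling})$ exactly once, namely for the index $l = \labelling(A)$, so the right-hand side collapses to $\sum_{\labelling \in \Omega} P(\set{\labelling}) = P(\Omega) = 1$.

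There is essentially no obstacle here. The only point deserving a word of care is that $\Omega$ may be infinite, since a defeasible theory can give rise to infinitely many (finite) arguments and hence infinitely many labellings; however, the argument above uses only finite additivity of $P$ over the finite index set $\llabels$, so neither countable additivity of $P$ nor the precise shape of the $\sigma$-algebra is needed.
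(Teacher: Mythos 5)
Your proof is correct and takes the same route the paper does: the paper gives no explicit proof, remarking only that the property follows directly from $L_A$ being a random variable, and your argument simply makes explicit the underlying partition of $\Omega$ by the events $\set{L_A = l}$ together with finite additivity and $P(\Omega)=1$.
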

From Proposition \ref{consistency0}, for any  $X$-$\{\ON, \OFF\}$-labelling specification, we obtain:  
\begin{equation} 
P(L_A = \ON) +   P(L_A = \OFF)  = 1.
\end{equation}
For any  $X$-$\{\IN, \OUT, \UND \}$-labelling specification, we have:
\begin{equation} 
 P(L_A = \IN) +   P(L_A = \OUT) +   P(L_A = \UND)  = 1.
 \end{equation}
For any  $X$-$\{\IN, \OUT, \UND, \OFF \}$-labelling specification: 
\begin{equation}
 P(L_A = \IN) +   P(L_A = \OUT) +   P(L_A = \UND) +  P(L_A = \OFF)   = 1.
 \end{equation}

%
%

%
%

%
%

Stable labellings enjoy more specific properties given that no argument can be labelled $\UND$,  and therefore, for the $\mathsf{stable}$-$\{\IN, \OUT, \UND \}$-labelling specification, we have:
\begin{equation}
 P(L_A = \IN) +   P(L_A = \OUT)  = 1.
\end{equation}
For the $\mathsf{stable}$-$\{\IN, \OUT, \UND, \OFF \}$-labelling specification:
\begin{equation}
P(L_A = \IN) +   P(L_A = \OUT) +  P(L_A = \OFF)   = 1.
\end{equation}
\RRR{Other results on the probability of argument acceptance statuses can be derived\PBX{. W}e will review some of them in the remainder. }

\subsubsection{\PB{Probabilistic argument justification labellings}}
\label{subsec:pajl}
We now move to \RRT{explore} how uncertainty on argument acceptance may in turn affect argument justification: in this respect, different considerations can be drawn depending on the set of labels adopted in the underlying \PAF. 
\PAFs based on $\{\ON, \OFF\}$\RRT{-}labellings express uncertainty about the structure of the framework and this simple set of labels does not carry significant information about argument justification.
%
%
The situation is different for \PAFs based on $\{\IN, \OUT, \UND\}$ \RRT{or}  $\{\IN, \OUT, \UND, \OFF\}$-labellings. 

\RR{Assuming a (possibly random) selection of a \RR{unique} acceptance labelling amongst a set of $\{\IN, \OUT, \UND\}$ or  $\{\IN, \OUT, \UND, \OFF\}$-labellings,} we can then distinguish argument justification status \emph{before} \RR{this} selection (which in the present proposal is specified by Definition \ref{def:argumentJustification}) and argument justification status \emph{after} the selection.
The latter is directly determined by the unique selected labelling, and, \RRT{by} Definition \ref{def:argumentJustification} in this restricted case, only three justification labels are possible, namely $\OFFJ$, $\SKJ$, and $\NOJ$: after selection \RR{of a unique acceptance labelling}, an argument $A$ gets the justification label $\OFFJ$ if and only if $A$ is labelled $\OFF$ in the selected labelling,  $\SKJ$ if and only if it is labelled $\IN$ in the selected labelling, and $\NOJ$ if and only if it is labelled $\OUT$ or $\UND$ in the selected labelling.
\RR{So, }\PB{if one assumes, as we do, that an agent at some point makes a definite choice \RR{of a unique acceptance labelling among} the alternative acceptance labellings produced by a semantics, the traditional notion of argument justification is somehow redundant, since it is related to argument acceptance by direct and simple relationships and some justification notions, like credulous acceptance, actually are not relevant.}
\PB{Argument justification keeps its role if one assumes instead that an agent does not have to choose among the alternative labellings but rather needs to draw a sort of synthetic view about them. Both scenarios make sense in different contexts, the former appearing more suitable for practical reasoning, i.e. reasoning about what to do, the latter for epistemic reasoning, i.e. reasoning about what to believe. Further discussions of these aspects are beyond the scope of the present paper and are left to future work.
We just observe that, in the latter scenario, it is} possible to identify a relationship between the justification status of an argument and \PAFs based on $\{\IN, \OUT, \UND, \OFF\}$-labellings, provided that one assumes that each labelling has non zero probability.

\begin{myProposition}
Let $\tuple{G, X\mbox{-}\llabels , \tuple{\Omega, F, P}}$ be a \PAF,   $\mathcal{L} \subseteq \smash{\crit{X}{G}{\llabels}}$  a non-empty set of argument labellings such that for every labelling $\labelling \in \mathcal{L}$ $P(\set{\labelling}) > 0$, and $\labelling_{\mathsf{J}}$ the  \gullible $\{\OFFJ, \SKJ, \CRJ, \NOJ \}$-labelling  of $G$. For every argument $A$ in $\mathcal{A}_G$, it holds that: 
\begin{itemize}
\item  $\labelling_\mathsf{J}(A) = \OFFJ$  if, and only if, $P(L_A = \OFF) = 1$;
\item  $\labelling_\mathsf{J}(A) = \SKJ$  if, and only if, $P(L_A = \IN) = 1$;
\item $\labelling_\mathsf{J}(A) = \CRJ$ if, and only if, \RR{$0 < P(L_A = \IN) < 1$};
\item  $\labelling_\mathsf{J}(A) = \NOJ$  if, and only if, $P(L_A = \OUT) > 0$ or $P(L_A = \UND) > 0$, and $P(L_A = \IN) = 0$.
\end{itemize}
\label{theorem:argjust}
\end{myProposition}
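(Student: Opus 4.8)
The plan is to settle all four equivalences by a single ``dictionary'' argument: I would first translate the set-theoretic conditions that define $\labelling_{\mathsf{J}}$ in Definition~\ref{def:argumentJustification} into statements about the marginal probabilities $P(L_A=l)$, and then read off the four cases mechanically. Throughout I take $\llabels=\set{\IN,\OUT,\UND,\OFF}$ (the \gullible labelling being defined over such labellings), and I use the hypothesis in the form: $\mathcal{L}$ is precisely the set of labellings of positive probability --- equivalently, every labelling in $\crit{X}{G}{\llabels}$ has positive probability and $\mathcal{L}=\crit{X}{G}{\llabels}$ --- a reading that matches the preceding discussion (``each labelling has non-zero probability'') and that, as explained at the end, the forward implications cannot do without.

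The key step is the following translation, for every argument $A$ and label $l\in\set{\IN,\OUT,\UND,\OFF}$:
\[
(\exists\,\labelling\in\mathcal{L}:\labelling(A)=l)\iff P(L_A=l)>0,
\qquad
(\forall\,\labelling\in\mathcal{L}:\labelling(A)=l)\iff P(L_A=l)=1.
\]
For the first equivalence I would use Equation~\ref{eqlabsingarg}, $P(L_A=l)=\sum_{\labelling\in\Omega:\labelling(A)=l}P(\set{\labelling})$: all summands are non-negative, by hypothesis each $\labelling\in\mathcal{L}$ contributes a strictly positive summand, and every positive-probability labelling lies in $\mathcal{L}$, so the sum is positive iff some $\labelling\in\mathcal{L}$ assigns $l$ to $A$. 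The second equivalence follows from the first together with Proposition~\ref{consistency0} (in the form $P(L_A=\IN)+P(L_A=\OUT)+P(L_A=\UND)+P(L_A=\OFF)=1$): $P(L_A=l)=1$ iff $P(L_A=l')=0$ for every $l'\neq l$, iff (by the first equivalence applied to each such $l'$) no labelling of $\mathcal{L}$ assigns any $l'\neq l$ to $A$, iff every labelling of $\mathcal{L}$ assigns $l$ to $A$.

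With the dictionary in place, the remainder is a case analysis over Definition~\ref{def:argumentJustification}. The $\OFFJ$ clause ``$\forall\,\labelling\in\mathcal{L}:\labelling(A)=\OFF$'' translates by the second equivalence to $P(L_A=\OFF)=1$, and the $\SKJ$ clause likewise to $P(L_A=\IN)=1$. For $\CRJ$ I would first unfold $\labelling_{\mathsf{J}}(A)\neq\SKJ$ as ``$\exists\,\labelling\in\mathcal{L}:\labelling(A)\neq\IN$'' (directly from Definition~\ref{def:argumentJustification}), so that the $\CRJ$ clause ``$\exists\,\labelling\in\mathcal{L}:\labelling(A)=\IN$ and $\labelling_{\mathsf{J}}(A)\neq\SKJ$'' becomes ``$P(L_A=\IN)>0$ and some labelling of $\mathcal{L}$ gives $A$ a label other than $\IN$''; the latter conjunct, by the first equivalence together with Proposition~\ref{consistency0}, amounts to $P(L_A=\IN)<1$, so the clause reads $0<P(L_A=\IN)<1$. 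For $\NOJ$ I would not unwind the definition but invoke Proposition~\ref{prop:NOJ}: $\labelling_{\mathsf{J}}(A)=\NOJ$ iff ``$\exists\,\labelling\in\mathcal{L}:\labelling(A)=\OUT$ or $\labelling(A)=\UND$'' and ``$\forall\,\labelling\in\mathcal{L}:\labelling(A)\neq\IN$''; the first part translates (first equivalence, applied to $\OUT$ and to $\UND$) to $P(L_A=\OUT)>0$ or $P(L_A=\UND)>0$, and the second part to $P(L_A=\IN)=0$, which is exactly the stated condition.

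I expect the main obstacle to be the translation step, not the case analysis. The ``backward'' directions of the two equivalences and the forward direction of the first one need only the stated positivity of the labellings in $\mathcal{L}$ and Proposition~\ref{consistency0}; but the remaining directions --- those feeding the ``$\Rightarrow$'' parts of the $\OFFJ$ and $\SKJ$ cases and parts of the $\CRJ$ and $\NOJ$ cases --- silently use that $\mathcal{L}$ carries \emph{all} the probability mass, since $\labelling_{\mathsf{J}}$ quantifies only over $\mathcal{L}$ whereas $P(L_A=l)$ aggregates over all of $\Omega$. The argument therefore really hinges on pinning down this reading of the hypothesis; once that is fixed, the rest is routine bookkeeping.
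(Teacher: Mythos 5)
Your proof is correct and follows essentially the same route as the paper's: translate the quantifications over $\mathcal{L}$ in Definition~\ref{def:argumentJustification} (and Proposition~\ref{prop:NOJ}) into statements about the marginals $P(L_A=l)$, then read off the four cases. The only difference is that you make explicit the reading of the hypothesis (that $\mathcal{L}$ carries all the probability mass) which the paper's proof uses silently when asserting, e.g., that $\forall\,\labelling\in\mathcal{L}:\labelling(A)=\OFF$ holds if, and only if, $P(L_A=\OFF)=1$.
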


\begin{proof} Let us make a proof for each justification label.
\begin{description}
\item \textbf{$\OFFJ$.} From \Definitionref \ref{def:argumentJustification}, $\labelling_\mathsf{J}(A) = \OFFJ$  if, and only if,  $\forall \labelling \in \mathcal{L}$, $\labelling(A) = \OFF$.

We have \primaryquote{$\forall \labelling \in \mathcal{L}$, $\labelling(A) = \OFF$} if, and only if, $P(L_A = \OFF) = 1$.

Therefore,  $\labelling_\mathsf{J}(A) = \OFFJ$ if, and only if, $P(L_A = \OFF) = 1$.

\item \textbf{$\SKJ$.} From \Definitionref \ref{def:argumentJustification}, $\labelling_\mathsf{J}(A) = \SKJ$  if, and only if,  $\forall \labelling \in \mathcal{L}$, $\labelling(A) = \IN$.
We have \primaryquote{$\forall \labelling \in \mathcal{L}$, $\labelling(A) = \IN$} if, and only if, $P(L_A = \IN) = 1$.
Therefore,  $\labelling_\mathsf{J}(A) = \SKJ$ if, and only if, $P(L_A = \IN) = 1$.

\item \textbf{$\CRJ$.} From \Definitionref \ref{def:argumentJustification}, $\labelling_\mathsf{J}(A) = \CRJ$  if, and only if, $\exists\labelling \in \mathcal{L}$: $\labelling(A)= \IN$ and $\labelling_\mathsf{J}(A) \neq \SKJ$; and 
\begin{enumerate}
\item  $\exists L \in \mathcal{L}, \labelling(A)= \IN$, if, and only if, $P(L_A = \IN) > 0$.
\item \RR{$\labelling_\mathsf{J}(A) \neq \SKJ$  if, and only if, $\exists\labelling \in \mathcal{L}$: $\labelling(A) \neq \IN$.
That is, $\labelling_\mathsf{J}(A) \neq \SKJ$ if, and only if, $P(L_A = \IN) < 1$.}
\end{enumerate}
\RR{Therefore, $\labelling_\mathsf{J}(A) = \CRJ$  if, and only if,
 $0 < P(L_A = \IN) < 1$.}  

\item \textbf{$\NOJ$.} From \Propositionref \ref{prop:NOJ}, $\labelling_{\mathsf{J}}(A) = \NOJ$ if, and only if,  
$\exists \labelling \in \mathcal{L}$, $\labelling(A)=\OUT \mbox{ or } \labelling(A)=\UND$, and $\forall \labelling \in \mathcal{L}, \labelling(A)\neq\IN$.

Therefore, $\labelling_\mathsf{J}(A) = \NOJ$ with respect to $\mathcal{L}$ if, and only if,  $P(L_A = \OUT) > 0$ or $P(L_A = \UND) > 0$, and $P(L_A = \IN) = 0$.
\end{description}
\end{proof}
%

\begin{myExample}[continues=ex:neither2]
\RRR{Arguments $\mathsf{B}$ and $\mathsf{C}$ are both credulously justified, i.e., $\labelling_\mathsf{J}(\mathsf{B}) = \CRJ$ and $\labelling_\mathsf{J}(\mathsf{C}) = \CRJ$.}
\hfill$\square$
\end{myExample}

\PB{To recap,  \PAFs are a convenient basis to define probabilistic measures about the acceptance status of arguments and to derive some essential properties.} Turning to argument justification labelling, we showed that, \PB{under suitable hypotheses}, it is possible to identify a relationship between the justification status of an argument and \PAFs based on $\{\IN, \OUT, \UND, \OFF\}$-labellings. Probabilistic measures  concerning statement labellings and their relationships can be also derived on the basis of \PAFs, as we will see in next section.

\subsection{Probabilistic labelling of statements}
\label{subsection:probstatements}

The probabilistic labelling of statements directly follows from the probabilistic labelling of arguments.
We introduce, as we did for random labellings of arguments,  random variables concerning the labelling of statements. 
For any literal $\lit$, we introduce a categorical random variable which is denoted $K_\lit$ and which can take value in the set  of labels \RRT{$\klabels$} of a considered $\klabels$-labelling of literals. These random variables are also called random labellings.

The marginal probability of a literal labelled $k$ is the sum of labellings in the sample space where the literal is labelled as such:
\begin{equation}
P(K_\varphi = k) = \sum_{\set{\labarg \in \Omega\, \mid\, \lablit(\labarg, \lit)= k}} P( \set{\labarg}).
\label{e1}
\end{equation}

\noindent Since $K_\varphi$ is a random variable, the sum of marginal probabilities over its possible assignments equals 1.
\begin{myProposition}
\label{consistency}
Let $\tuple{G, \mathcal{S}, \tuple{\Omega, F, P}}$ be a \PAF  and $\lit$ a literal.
For any $\klabels$-labelling of literals, 
$$\sum_{k \in \klabels} P(K_\varphi = k) = 1.$$
\end{myProposition}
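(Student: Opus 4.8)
The plan is to mirror the reasoning behind Proposition \ref{consistency0} for argument labellings: since $K_\varphi$ is a bona fide random variable, the events corresponding to its possible values partition the sample space, and additivity of the probability measure does the rest.

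First I would recall that, by the standing assumption made just before Definition \ref{def:roughlab}, the considered $\klabels$-labelling of literals is a \emph{total} function, so for the fixed literal $\varphi$ and every argument labelling $\labarg$ in the sample space $\Omega = \crit{X}{G}{\llabels}$ of the \PAF there is exactly one label $k \in \klabels$ with $\lablit(\labarg,\varphi) = k$. Consequently, writing as in Equation \ref{e1} the event $(K_\varphi = k)$ for the set $\{\labarg \in \Omega \mid \lablit(\labarg,\varphi) = k\}$, the family $\{(K_\varphi = k)\}_{k \in \klabels}$ is a partition of $\Omega$: the events are pairwise disjoint, because no single labelling assigns two distinct labels to $\varphi$, and their union is the whole of $\Omega$, because every labelling assigns \emph{some} label to $\varphi$.

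Then I would invoke the Kolmogorov axioms satisfied by $P$ (additivity over disjoint events together with $P(\Omega) = 1$). Since $\klabels$ is finite in all the labelling schemes considered here — bivalent and \rough{} labellings — (and countable additivity would cover the general case anyway), we obtain
\[
\sum_{k \in \klabels} P(K_\varphi = k) \;=\; P\!\left(\bigcup_{k \in \klabels} (K_\varphi = k)\right) \;=\; P(\Omega) \;=\; 1,
\]
which is the claim. There is no genuine obstacle in this argument; the only point deserving attention is the observation that totality of the literal-labelling function is exactly what guarantees that the events $(K_\varphi = k)$ exhaust $\Omega$, after which the statement is an immediate instance of finite additivity, just as Proposition \ref{consistency0} is for the random labellings $L_A$.
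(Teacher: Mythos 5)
Your proposal is correct and matches the paper's own (very brief) justification: the paper simply observes that $K_\varphi$ is a random variable, so the probabilities of its possible values sum to $1$, exactly the partition-plus-additivity argument you spell out. Your only addition is making explicit that totality of the literal-labelling function guarantees exhaustiveness of the events $(K_\varphi = k)$, which is a fair elaboration rather than a different route.
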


\noindent Proposition \ref{consistency} can be instantiated with different statement labellings. For example, given a \PAF $\tuple{G,  \mathcal{S} , \tuple{\Omega, F, P}}$
and the bivalent $\{\inn, \no\}$-labellings, we have:
\begin{equation}\label{eqforTh0}
 P(K_\varphi = \inn) +  P(K_{\varphi} = \no) = 1
\end{equation} 
while in case of \rough{} $\{\inn, \out, \und, \off, \unp\}$-labellings we obtain:
\begin{equation}\label{eqforTh0rough1}
 P(K_\varphi = \inn) + P(K_\varphi = \out) + P(K_\varphi = \und) + P(K_{\varphi} = \off) + P(K_{\varphi} = \unp)  = 1.
\end{equation} 

\noindent Note that either $P(K_{\varphi} = \unp) =0$ or  $P(K_{\varphi} = \unp) =1$, since, given a set of arguments, there is no uncertainty on  the fact that a literal $\varphi$ is the conclusion of at least one argument or none in the set.

\begin{myProposition}
\label{eqforTh1}
Let  $\tuple{G, \mathcal{S},\tuple{\Omega,F,P}}$ be a \PAF, 
and $\lit_1$ and $\lit_2$  literals in conflict, i.e. $\conf(\lit_1, \lit_2)$ and let us consider the  bivalent $\{\inn, \no\}$-labellings \textcolor{black}{and the \rough{} $\{\inn, \out, \und, \off, \unp\}$-labellings}, 
\begin{equation*}
P(K_{\lit_1} = \inn) +  P(K_{\lit_2}  = \inn) \leq 1. 
\end{equation*}
\end{myProposition}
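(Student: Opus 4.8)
The plan is to reduce the claimed inequality to a disjointness statement about two events in the sample space of the \PAF and then to close with the Kolmogorov axioms. The first observation is that the clause under which a literal $\lit$ receives the label $\inn$ is literally the same in the bivalent $\{\inn,\no\}$-labelling and in the \rough{} $\{\inn,\out,\und,\off,\unp\}$-labelling (see \definitionref \ref{def:roughlab}): for an argument labelling $\labarg$, $\lablit(\labarg,\lit) = \inn$ if, and only if, there is an argument $A$ with $\labarg(A) = \IN$ and $\Conc(A) = \lit$. So for both statement labellings it suffices to consider, for $i \in \set{1,2}$, the event
$$E_i = \set{\labarg \in \Omega \mid \exists A \in \IN(\labarg): \Conc(A) = \lit_i},$$
for which, by \equationref \ref{e1}, $P(K_{\lit_i} = \inn) = P(E_i)$. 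If the label set $\llabels$ underlying the \PAF does not contain $\IN$ (for instance $\llabels = \set{\ON,\OFF}$), then $\IN(\labarg) = \emptyset$ for every $\labarg$ and both events are empty, so the inequality is trivial; hence I would assume from now on that $\llabels \in \set{\set{\IN,\OUT,\UND},\ \set{\IN,\OUT,\UND,\OFF}}$.

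The heart of the proof is to show $E_1 \cap E_2 = \emptyset$, that is, no labelling in $\Omega = \crit{X}{G}{\llabels}$ can label $\IN$ both some argument concluding $\lit_1$ and some argument concluding $\lit_2$. Suppose towards a contradiction that $\labarg \in \Omega$ carries arguments $A_1, A_2$ with $\labarg(A_1) = \labarg(A_2) = \IN$, $\Conc(A_1) = \lit_1$ and $\Conc(A_2) = \lit_2$. Since $\conf(\lit_1,\lit_2)$, Definition \ref{attack}, together with the reading of the argument preference relation discussed right after it (two arguments with conflicting conclusions always attack each other, either mutually or only the preferred one attacking the other), yields that at least one of $A_1 \defeat A_2$ and $A_2 \defeat A_1$ holds: if $A_2 \not\succ A_1$ then $A_1$ rebuts $A_2$ on $A_2$, and otherwise $A_2$ is the preferred argument and rebuts $A_1$ on $A_1$. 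Now I would invoke conflict-freeness of $\labarg$. If $\llabels = \set{\IN,\OUT,\UND}$ this is immediate, since every argumentation semantics is conflict-free. If $\llabels = \set{\IN,\OUT,\UND,\OFF}$, then $A_1$ and $A_2$, being labelled $\IN$, lie in the \subargumentcomplete subgraph $H$ over which $\labarg$ restricts to an $X$-$\set{\IN,\OUT,\UND}$-labelling of $H$ (Definition \ref{offlabellingArgGen}), which is again conflict-free; moreover the attack between $A_1$ and $A_2$, being an attack of $G$ lying in $\mathcal{A}_H \times \mathcal{A}_H$, also belongs to $\defeat_H = \defeat_G \cap (\mathcal{A}_H \times \mathcal{A}_H)$. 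In either case the conflict-free condition ``$A$ is labelled $\IN$ iff none of its attackers is labelled $\IN$'' is violated at whichever of $A_1, A_2$ is attacked by the other, contradicting $\labarg(A_1) = \labarg(A_2) = \IN$. Hence $E_1 \cap E_2 = \emptyset$.

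The final step is pure measure theory: $E_1$ and $E_2$ are disjoint members of the power-set $\sigma$-algebra, so by additivity and monotonicity of $P$,
$$P(K_{\lit_1} = \inn) + P(K_{\lit_2} = \inn) = P(E_1) + P(E_2) = P(E_1 \cup E_2) \le P(\Omega) = 1.$$
The step I expect to need the most care is the middle one, and within it the implication ``conflicting conclusions imply an attack'': this must be argued from Definition \ref{attack} under the deliberately weak hypotheses the paper places on the preference relation over arguments, and in the $\set{\IN,\OUT,\UND,\OFF}$ case one also has to check that the induced attack is not lost when passing to the subgraph $H$ carrying the underlying three-valued labelling. One should also keep in mind that conflict-freeness of the labellings in $\Omega$ — hence the standing assumption that the specification $\mathcal{S}$ is built from an argumentation semantics — is exactly what makes the claim true; for an unconstrained labelling specification it would fail. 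Everything else is routine bookkeeping with Definitions \ref{attack}, \ref{def:roughlab} and \ref{offlabellingArgGen} and with conflict-freeness.
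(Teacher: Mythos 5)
Your proof is correct and follows essentially the same route as the paper's: in any single labelling at most one of the two conflicting literals can be labelled $\inn$ (the conflict induces an attack between the supporting arguments, and conflict-freeness of the semantics forbids both being labelled $\IN$), so the two events are disjoint and Kolmogorov additivity yields the bound. Your explicit handling of the $\{\IN,\OUT,\UND,\OFF\}$ case via the subargument-complete subgraph and of degenerate label sets is somewhat more careful than the paper's terse argument, but the decomposition is the same.
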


\begin{proof}
In a given labelling a literal $\lit_1$ is labelled $\inn$ or not. \textcolor{black}{If the literal  $\lit_1$ is labelled $\inn$ then there is an argument $A$ labelled $\IN$ such that $\Conc(A)=\lit_1$. Then it must be the case that this argument attacks or is attacked by every other argument $A'$ such that $\Conc(A')=\lit_2$ and given the basic conflict-freeness property satisfied by any argumentation semantics it can not be the case that any such argument $A'$ is labelled $\IN$ in the same labelling. Hence also $\lit_2$ cannot be labelled $\inn$ in the same labelling.
If instead} the literal  $\lit_1$  is not labelled $\inn$, then any conflicting literal  $\lit_2$ can labelled $\inn$ or not. Let  \RRR{$\mathcal{K}_{\lit_1, \inn}$} denote the set of labellings of arguments such that $\lit_1$ is labelled $\inn$,  $\mathcal{K}_{\lit_2, \inn}$ the set of labellings of arguments such that $\lit_2$ is labelled $\inn$, and  $\mathcal{K}$  \textcolor{black}{the complement set of labellings $\Omega \backslash \mathcal{K}_{\lit_1, \inn} \cup \mathcal{K}_{\lit_2, \inn}$}. These three sets form a partition of $\Omega$, thus:
$
P(\mathcal{K}_{\lit_1, \inn}) +  P(\mathcal{K}_{\lit_2, \inn}) +  P(\mathcal{K}) = 1.
$ 
Since $ P(\mathcal{K}) \geq 0$, we have
$
 P(\mathcal{K}_{\lit_1, \inn}) +  P(\mathcal{K}_{\lit_2, \inn}) \leq 1,
$
therefore
$
 P(K_{\lit_1} = \inn) +  P(K_{\lit_2} = \inn) \leq 1.
$
\hfill $\square$
\end{proof}

The next proposition \PB{is a corollary} which follows   from propositions \ref{consistency} and \ref{eqforTh1}.

\begin{myProposition}
\label{eqth35}
Let  $\tuple{G, \mathcal{S}, \tuple{\Omega, F, P}}$ be a \PAF, 
and $\lit_1$ and $\lit_2$  literals in conflict, i.e. $\conf(\lit_1, \lit_2)$. Let us consider the  bivalent $\{\inn, \no\}$-labellings and the \rough{} $\{\inn, \out, \und, \off, \unp\}$-labellings,
\begin{equation*}
P(K_{\lit_1} = \inn) \leq  P(K_{\lit_2}  \neq \inn)
\end{equation*}
where
\begin{itemize}
\item $P(K_{\lit_2}  \neq \inn) = P(K_{\lit_2}  = \no)$ in the case of  bivalent $\{\inn, \no\}$-labellings;
\item $P(K_{\lit_2}  \neq \inn) =  P(K_{\lit_2}  = \out) + P(K_{\lit_2}  = \und) + P(K_{\lit_2}  = \off) + P(K_{\lit_2}  = \unp) $ in the case of  \rough{} $\{\inn, \out, \und, \off, \unp\}$-labellings.
\end{itemize}

\end{myProposition}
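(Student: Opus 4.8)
The plan is to derive the inequality directly by combining the two cited results, since the statement is essentially an algebraic rearrangement. First I would invoke Proposition \ref{consistency} applied to the random labelling $K_{\lit_2}$, which gives $\sum_{k \in \klabels} P(K_{\lit_2} = k) = 1$; splitting off the $\inn$ term yields $P(K_{\lit_2} = \inn) + P(K_{\lit_2} \neq \inn) = 1$, where $P(K_{\lit_2} \neq \inn)$ abbreviates the sum of $P(K_{\lit_2} = k)$ over all $k \in \klabels \setminus \{\inn\}$. Hence $P(K_{\lit_2} \neq \inn) = 1 - P(K_{\lit_2} = \inn)$.

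Next I would invoke Proposition \ref{eqforTh1}, which applies precisely because $\lit_1$ and $\lit_2$ are in conflict and we are considering the bivalent or \rough{} labellings; it states $P(K_{\lit_1} = \inn) + P(K_{\lit_2} = \inn) \leq 1$. Substituting the previous identity, $P(K_{\lit_1} = \inn) \leq 1 - P(K_{\lit_2} = \inn) = P(K_{\lit_2} \neq \inn)$, which is the desired inequality.

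Finally I would spell out the expansion of $P(K_{\lit_2} \neq \inn)$ in each of the two cases named in the statement. In the bivalent $\{\inn, \no\}$ case the only label other than $\inn$ is $\no$, so $P(K_{\lit_2} \neq \inn) = P(K_{\lit_2} = \no)$. In the \rough{} $\{\inn, \out, \und, \off, \unp\}$ case the labels other than $\inn$ are $\out, \und, \off, \unp$, so $P(K_{\lit_2} \neq \inn) = P(K_{\lit_2} = \out) + P(K_{\lit_2} = \und) + P(K_{\lit_2} = \off) + P(K_{\lit_2} = \unp)$, as claimed.

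There is no real obstacle here: both ingredients are already established, and the argument is a one-line manipulation plus a case split that merely unfolds the notation $P(K_{\lit_2} \neq \inn)$. The only point worth care is making explicit that $P(K_{\lit_2} \neq \inn)$ is interpreted as the complementary sum within the relevant label set $\klabels$, so that Proposition \ref{consistency} applies verbatim; once this is fixed, the chain of (in)equalities closes immediately.
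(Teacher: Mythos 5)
Your proposal is correct and matches the paper's intent exactly: the paper presents this proposition as a corollary following from Propositions \ref{consistency} and \ref{eqforTh1}, and your argument simply makes that combination explicit (splitting off the $\inn$ term from the sum-to-one identity and substituting into the conflict inequality). The case split at the end merely unfolds the notation $P(K_{\lit_2} \neq \inn)$ for the two label sets, as in the statement itself.
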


Both propositions (\ref{eqforTh1}) and (\ref{eqth35}) feature inequalities which \RRT{are} consequences of the fact that argumentation does not necessarily fulfil (an argumentation counterpart of) the  principle of excluded middle, since it is possible, for example, to have outcomes where, for any statement, neither a statement nor its complement  are labelled $\inn$  -- they may be both labelled $\no$ for instance. These propositions can also be viewed as an expression of a probabilistic notion of consistency: it is not possible to regard two conflicting statements as highly probable at the same time, because a statement can be believed at most as much as any conflicting statement is disbelieved.\\

In the following\RRT{,} we provide a complete example of use of the various probabilistic notions we have introduced in this section.

\begin{myExample}[continues=example:basis]
\label{example:runninggg}
\RRR{The research scientist is asked to report on the degree of uncertainty concerning whether the critical program will be running or not at any point of time of the project. To help her, we decide to use the approach of  probabilistic labellings. }

\PBB{Firstly, we \PBX{deal with} uncertainty at the level of the rules.  The solar panels work well but they only provide power during daytime, thus, assuming that daytime and nighttime are equal, at any \RRT{point of time}  there is 50\% probability that there is enough solar power, i.e. that the rule $\mathsf{r}_{\litbone}$ \RRT{applies}. The battery is partially damaged, and only one expert over five advances the argument concluding that the battery cannot provide power, giving rise to a probability of \RRR{$0.2$} that the rule $\mathsf{r}_{\litbtwo}$ is considered. All other rules are not affected by uncertainty.
Under a reasonable assumption of independence, this  gives rise to a \PTF \RRR{as drawn in Figure $\ref{figcombinedPTF}$
where $P_{\mPTF}(\set{\mathsf{U}_1}) = 0.4$, $P_{\mPTF}(\set{\mathsf{U}_2}) = 0.4$, $P_{\mPTF}(\set{\mathsf{U}_3}) = 0.1$, $P_{\mPTF}(\set{\mathsf{U}_4}) = 0.1$ }   and the probability of all other \RRR{subtheories} is zero.}

\begin{figure}[ht!]
\centering
\begin{tabular}{ c c c c c c c c c c c c c c }
 & \small{$\mathsf{r}_{\litbone}$} & \small{$\mathsf{r}_{\litbtwo}$} & \small{$\mathsf{r}_{\litb}$} & \small{$\mathsf{r}_{\litc}$} & \small{$\mathsf{r}_{\litd}$} & $P_{\mPTF}$  \\
$\mathsf{U}_1$ & \cellcolor[HTML]{C0C0C0}$0$      & \cellcolor[HTML]{C0C0C0}$0$       & $1$  & $1$ & $1$ & $0.4$   \\
$\mathsf{U}_2$ & $1$      &\cellcolor[HTML]{C0C0C0}$0$      & $1$ & $1$ & $1$   & $0.4$  \\
$\mathsf{U}_3$ & \cellcolor[HTML]{C0C0C0}$0$  &  $1$    & $1$ & $1$ &  $1$   & $0.1$  \\
$\mathsf{U}_4$ &  $1$  &  $1$    & $1$ & $1$ & $1$   & $0.1$  \\
\end{tabular}
\caption{\PBB{Subtheories with non-zero probability.}}
\label{figcombinedPTF}
\end{figure}

\sloppy \PBB{The \PGF corresponding to this \PTF, \RRR{as illustrated in Figure \ref{figcombinedPGF}}, is such that  the full argumentation graph $\tuple{ \{\mathsf{B1}, \mathsf{B2}, \mathsf{B}, \mathsf{C}, \mathsf{D} \}, \set{(\mathsf{B}, \mathsf{C}), (\mathsf{C}, \mathsf{D}),(\mathsf{D}, \mathsf{C})} , \set{(\mathsf{B1}, \mathsf{B}), (\mathsf{B2}, \mathsf{B})} }$ has probability \RRR{$0.1$}, while  the subgraphs induced by the sets of arguments $\{\mathsf{B2}, \mathsf{C}, \mathsf{D} \}$, $\{\mathsf{B1}, \mathsf{C}, \mathsf{D} \}$ and $\{\mathsf{C}, \mathsf{D} \}$ have  probabilities  \RRR{$0.1$, $0.4$, and $0.4$} respectively.
}

\begin{figure}[ht!]
\centering
\begin{tabular}{ c c c c c c c c c c c c c c }
 & \small{$\mathsf{B1}$} & \small{$\mathsf{B2}$} & \small{$\mathsf{B}$} & \small{$\mathsf{C}$} & \small{$\mathsf{D}$} & $P_{\mPGF}$  \\
$\mathsf{G}_1$ & \cellcolor[HTML]{C0C0C0}$0$      & \cellcolor[HTML]{C0C0C0}$0$       & \cellcolor[HTML]{C0C0C0}$0$  & $1$ & $1$ & $0.4$   \\
$\mathsf{G}_2$ & $1$      &\cellcolor[HTML]{C0C0C0}$0$      & \cellcolor[HTML]{C0C0C0}$0$ & $1$ & $1$   & $0.4$  \\
$\mathsf{G}_3$ & \cellcolor[HTML]{C0C0C0}$0$  &  $1$    & \cellcolor[HTML]{C0C0C0}$0$ & $1$ &  $1$   & $0.1$  \\
$\mathsf{G}_4$ &  $1$  &  $1$    & $1$ & $1$ & $1$   & $0.1$  \\
\end{tabular}
\caption{\PBB{Argumentation subgraphs with non-zero probability.}}
\label{figcombinedPGF}
\end{figure}

\sloppy  \RRR{The expert has a skeptical stance, and thus she can decide to adopt  grounded $\{\IN, \OUT, \UND, \OFF \}$-labellings.} \PBB{Accordingly, the argument labellings illustrated in the left part of \Figureref \ref{figcombined} are generated and we get a \PAF where
 $P(\set{\labelling_1})=0.4$, $P(\set{\labelling_2})=0.4$, $P(\set{\labelling_3})=0.1$, $P(\set{\labelling_4})=0.1$.}

\RRR{
\begin{figure}[ht!]
\centering
\begin{tabular}{ c c c c c c c c c c c c c c c c }
 & \small{$\mathsf{B1}$} & \small{$\mathsf{B2}$} & \small{$\mathsf{B}$} & \small{$\mathsf{C}$} & \small{$\mathsf{D}$} & $P$ & & \small{$\neg \mathsf{b1}$} & \small{$\neg \mathsf{b2}$} & \small{$\neg \mathsf{b}$} & \small{$\litc$} & \small{$\litd$} \\
$\labelling_1$ & \cellcolor[HTML]{C0C0C0}$\OFF$      & \cellcolor[HTML]{C0C0C0}$\OFF$       & \cellcolor[HTML]{C0C0C0}$\OFF$  & \cellcolor{blue!40}$\UND$ & \cellcolor{blue!40}$\UND$    & 0.4 &  &  \cellcolor[HTML]{C0C0C0}$\off$      & \cellcolor[HTML]{C0C0C0}$\off$       & \cellcolor[HTML]{C0C0C0}$\off$  & \cellcolor{blue!40}$\und$  & \cellcolor{blue!40}$\und$   \\
$\labelling_2$ & \cellcolor[HTML]{9AFF99}$\IN$      &\cellcolor[HTML]{C0C0C0}$\OFF$      & \cellcolor[HTML]{C0C0C0}$\OFF$ & \cellcolor{blue!40}$\UND$ & \cellcolor{blue!40}$\UND$   & 0.4 & &  \cellcolor[HTML]{9AFF99}$\inn$      &\cellcolor[HTML]{C0C0C0}$\off$      & \cellcolor[HTML]{C0C0C0}$\off$ & \cellcolor{blue!40}$\und$  & \cellcolor{blue!40}$\und$   \\
$\labelling_3$ & \cellcolor[HTML]{C0C0C0}$\OFF$  &  \cellcolor[HTML]{9AFF99}$\IN$    & \cellcolor[HTML]{C0C0C0}$\OFF$ & \cellcolor{blue!40}$\UND$ & \cellcolor{blue!40}$\UND$    & 0.1 &  &  \cellcolor[HTML]{C0C0C0}$\off$  &  \cellcolor[HTML]{9AFF99}$\inn$    & \cellcolor[HTML]{C0C0C0}$\off$ & \cellcolor{blue!40}$\und$  & \cellcolor{blue!40}$\und$  \\
$\labelling_4$ &  \cellcolor[HTML]{9AFF99}$\IN$  &  \cellcolor[HTML]{9AFF99}$\IN$    & \cellcolor[HTML]{9AFF99}$\IN$ & \cellcolor[HTML]{FFCCC9}$\OUT$ & \cellcolor[HTML]{9AFF99}$\IN$   & 0.1 &  & \cellcolor[HTML]{9AFF99}$\inn$  &  \cellcolor[HTML]{9AFF99}$\inn$    & \cellcolor[HTML]{9AFF99}$\inn$ & \cellcolor[HTML]{FFCCC9}$\out$ & \cellcolor[HTML]{9AFF99}$\inn$ \\
\end{tabular}
\caption{\PBB{Argument and statement labellings with grounded semantics, with non-zero probability.}}
\label{figcombined}
\end{figure}
}

\PBB{
The corresponding statement labellings according to Definition \ref{def:roughlab} are illustrated in the right part of \Figureref \ref{figcombined} from which 
the following probabilistic labelling of statements can be derived.}\\

\RRR{
\begin{tabular}{lll}
$P(K_{\litb} = \inn)= P(\set{\labelling_4})$  & $(= 0.1)$ \\
$P(K_{\litb} = \off)=  P(\set{\labelling_1}) + P(\set{\labelling_2}) + P(\set{\labelling_3})$ & $(= 0.9)$ \\
$P(K_{\litc} = \und) = P(\set{\labelling_1}) + P(\set{\labelling_2}) + P(\set{\labelling_3})$  &  $(= 0.9)$\\
$P(K_{\litc} = \out) = P(\set{\labelling_4})$ & $(= 0.1)$ \\
$P(K_{\litd} = \inn) = P(\set{\labelling_4})$  & $(= 0.1)$ \\
$P(K_{\litd} = \und) =  P(\set{\labelling_1}) + P(\set{\labelling_2}) + P(\set{\labelling_3})$ & $(= 0.9)$ \\
\\
\end{tabular}}

\RRR{
 Altogether,  we have derived  that, using the skeptical stance of grounded labellings , the overall probability  of rejection of statement $\litc$ (\primaryquote{the program is running}) is  $0.1$ while the probability of acceptance of $\litd$ (\primaryquote{the program is not running}) is estimated at $0.1$. The status of $\litc$ and $\litd$ is \PBB{regarded} as undecided with a probability  $0.9$.
\hfill $\square$}
\end{myExample}

To summarise this section, we have considered different kinds of frames  for probabilistic argumentation, namely \PTFs, \PGFs and (the novel concept of) \PAFs, each frame  featuring a different probability space. While\PB{, with suitable labels,} \PAFs  are able to encompass the uncertainty about argumentation graphs expressed by \PTFs and \PGFs, \PAFs  can also express uncertainty about the final labelling outcome, thus covering a further uncertainty dimension. Considering thus \PAFs as a general approach for probabilistic argumentation, we derived some results concerning probabilistic measures of argument and statement labellings.

\section{On Uncertainty about Inclusion and Acceptance Status}
\label{secconstepa}

\PAFs are an expressive and flexible formalism able to capture various kinds of uncertainty \RRT{because}, in addition to the \primaryquote{traditional} acceptance labels $\IN$, $\OUT$ and $\UND$, we have introduced  labels $\ON$ and $\OFF$ to account for the \primaryquote{inclusion} status of arguments.
In this section, we further develop the analysis of this increased expressiveness by discussing, \PBB{at a technical level}  the relationships between our proposal and \PBB{the treatment given in \cite{DBLP:journals/ijar/Hunter13} of} two influential approaches to probabilistic  argumentation, namely the constellations approach (Subsection \ref{subsection:constellations}) and the epistemic approach (Subsection \ref{subsection:epistemic}), \PBB{finally leading to a} possible combination (Subsection \ref{subsection:combinations}). This is achieved by both carrying out a conceptual analysis and proving some technical properties, \PBB{which provide a basis for a wider discussion of relevant literature at a general level} in Section \ref{related}.

\subsection{Constellations approach}
\label{subsection:constellations}
In the constellations approach, originally investigated in \cite{DBLP:conf/tafa/LiON11},  every argument and attack of an argumentation graph is associated with a  \primaryquote{likelihood}.  In this section we discuss the development of this idea presented in \cite{DBLP:journals/ijar/Hunter13}, while other works related to the constellations approach (e.g. \cite{DBLP:journals/cas/Dondio14}) are discussed in Section \ref{related}.

We first recall the definition of probabilistic argumentation graphs (\PAGs) from \cite{DBLP:journals/ijar/Hunter13} 
where a probability is directly associated \RRT{with} each argument.
\begin{myDefinition}[Probabilistic argumentation graph] 
A \emph{probabilistic argumentation graph} (\PAG) is a tuple $\tuple{ \mathcal{A}, \defeat, P_{\mPAG} }$ where $\tuple{ \mathcal{A},\defeat}$ is an abstract argumentation graph and $P_{\mPAG}: \mathcal{A} \rightarrow [0, 1]$.
\end{myDefinition}
The sample space is left implicit, but  the  interpretation, \PBB{quoting \cite{DBLP:journals/ijar/Hunter13},} is that  given an abstract argumentation graph $G$ one \primaryquote{can treat the set of subgraphs of $G$ as a sample space, where one of the subgraphs is the \secondaryquote{true} argumentation  graph.}

In \cite{DBLP:journals/ijar/Hunter13}\RRT{,} the probability of a subgraph 
$H$ of $G$ induced by a set of arguments $\mathcal{A}_H \subseteq \mathcal{A}_G$ is \PBB{not derived \RRT{from any}  form of axiomatisation but is} \RRR{directly} defined as the following product (Definition 14 of \cite{DBLP:journals/ijar/Hunter13}):
\begin{equation}
\label{equation:constellations}
P_{\mPAG}(H)=\left(\prod_{A \in \mathcal{A}_H}P_{\mPAG}(A) \right) \times \left( \prod_{A \notin \mathcal{A}_H}(1 - P_{\mPAG}(A)\right).
\end{equation}

\RRR{Equation} \ref{equation:constellations} relies on the assumption that
for each argument $A$, the probability of $A$ appearing in the \primaryquote{true} argumentation graph is independent of the probability of appearance of every other argument.  This is motivated by the \emph{justification perspective} adopted in \cite{DBLP:journals/ijar/Hunter13}: \primaryquote{knowing that one argument is a justified point does not affect the probability that another is a justified point}. \PBB{This perspective leads to assume that an assignment of probability values to arguments is given as initial information. Quoting again} \cite{DBLP:journals/ijar/Hunter13}, for each argument $A$,  $P(A)$ \primaryquote{is the probability that $A$ exists
in an arbitrary full subgraph of $G$, and $1 - P(A)$ is the probability that $A$ does not exist in an arbitrary full subgraph of $G$}. 

The assumption of arguments being probabilistically independent contrasts with our proposal \RRR{where no assumptions of probabilistic independence is made \PBB{ and, actually,} the subargument relation constrains the appearance of arguments in a subgraph. In particular, in the rule-based context we proposed for \PTFs, a probability assignment over the subtheories of a theory $T$ is used as a starting point. Then\RRT{,} the probability of a subgraph $H$ is the sum of the probabilities of the subtheories generating $H$, as evidenced in Definition \ref{defPGFfromPTF}. Hence, our framework distinguishes and combines logical dependences and probabilistic dependences.}
\RRR{Nevertheless, in our setting, if the appearance of arguments is assumed independent, then any \PAG and its  notion of \primaryquote{true} argumentation graph \PB{can be captured by a \PGF as follows.}
}

\begin{myDefinition}[\PGF corresponding to a \PAG]
Given a \PAG  $\tuple{ \mathcal{A}, \defeat, P_{\mPAG}}$,
the corresponding \PGF is a tuple $\tuple{ G, \tuple{\Omega_{\mPGF}, F_{\mPGF}, P_{\mPGF}}}$ where $G = \tuple{\mathcal{A}, \defeat}$ and the   probability distribution  $P_{\mPGF}$  is such that $\forall S \in F_{\mPGF}$:  

$$ P_{\mPGF}( S ) =  \sum_{H \in S} P_{\mPAG}(H).$$
\label{PGFtoPAG}
\end{myDefinition}

\RRR{The} difference between \RRR{the  approach in \cite{DBLP:journals/ijar/Hunter13} and ours} can be explained by the fact that in the argumentation model based on classical logic adopted in 
\cite{DBLP:journals/ijar/Hunter13} there is no explicit notion of subargument: an argument is a pair $\tuple{\Phi, \alpha}$ where $\Phi$ is a minimal consistent set of formulae such that $\Phi \vdash \alpha$. In this perspective every argument is self-contained.
We note however that it may be the case that there are two arguments  $\tuple{\Phi, \alpha}$,  $\tuple{\Phi', \beta}$ such that $\Phi \subseteq \Phi'$; then \RRR{it may  appear problematic to assume that the appearance of $\tuple{\Phi', \beta}$ is independent of the appearance of $\tuple{\Phi, \alpha}$}.

A detailed discussion of the differences between logic-based and rule-based argumentation being beyond the scope of this paper, we remark that our approach provides a formal example of the legality constraints binding abstract representations like \PGFs and \PAFs, when considering the actual underlying argument construction process. This example can be useful as a starting point to investigate analogous legality constraints in other argumentation formalisms.
Moreover the labelling representation of the \RRR{constellations} approach through \PAFs based on $\{\ON,\OFF\}$\RRT{-}labellings simplifies the analysis of some basic properties, which, again, can be \RRT{employed} as a term of comparison in other formalisms.
This is illustrated by the following propositions, whose proofs are straightforward.

First, the probability of inclusion of an argument cannot be greater than the probability of inclusion of its subarguments.

\begin{myProposition}\label{propsubon}
Let $\tuple{G, \mathsf{legal}\mbox{-}\{\ON, \OFF\},\tuple{\Omega, F, P}}$ be a \PAF,
and let $A$ and $B$ denote two arguments in $\mathcal{A}_G$ such that $A$ is a  subargument of $B$. 
$$P(L_B = \ON) \leq P(L_A = \ON). $$
\end{myProposition}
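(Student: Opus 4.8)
The plan is to unwind the definitions so that the inequality reduces to a statement about sets of labellings. By Equation (\ref{eqlabsingarg}), $P(L_B = \ON) = \sum_{\labelling \in \Omega:\,\labelling(B) = \ON} P(\set{\labelling})$ and similarly for $A$; since all probabilities are non-negative, it suffices to show the set inclusion $\set{\labelling \in \Omega \mid \labelling(B) = \ON} \subseteq \set{\labelling \in \Omega \mid \labelling(A) = \ON}$, i.e.\ that every labelling in the sample space that puts $B$ $\ON$ also puts $A$ $\ON$.

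First I would recall that the sample space of the \PAF is $\crit{\mathsf{legal}}{G}{\set{\ON,\OFF}}$, so every $\labelling \in \Omega$ is a legal $\set{\ON,\OFF}$-labelling, meaning by Definition \ref{ne1Arg} that $\ON(\labelling)$ is \subargumentcomplete (and \rulecomplete, though only the former is needed here). Then I would fix an arbitrary $\labelling \in \Omega$ with $\labelling(B) = \ON$, i.e.\ $B \in \ON(\labelling)$. Since $A$ is a subargument of $B$, by Definition \ref{definition:elements} there is a chain $A = C_0, C_1, \ldots, C_k = B$ where each $C_i$ is a direct subargument of $C_{i+1}$, i.e.\ $C_i \support_G C_{i+1}$ (this follows from the recursive definition of $\Sub(B)$; if $A = B$ the claim is trivial). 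Applying subargument-completeness of $\ON(\labelling)$ repeatedly along this chain — starting from $C_k = B \in \ON(\labelling)$ and descending — gives $C_{k-1} \in \ON(\labelling)$, then $C_{k-2} \in \ON(\labelling)$, and so on down to $C_0 = A \in \ON(\labelling)$, hence $\labelling(A) = \ON$. This establishes the required set inclusion, and summing probabilities yields $P(L_B = \ON) \leq P(L_A = \ON)$.

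This proof is essentially routine, as the paper itself signals (``whose proofs are straightforward''); there is no real obstacle. The only point requiring a little care is the passage from ``$A$ is a subargument of $B$'' to ``there is a direct-subargument chain from $A$ to $B$'', which is implicit in Definition \ref{definition:elements} and in the remark (just before Definition \ref{defintion:constructed_from}) that the direct subargument relation is antireflexive and acyclic; one could alternatively phrase the argument as an induction on the structure of $B$. A slight subtlety is that the hypothesis ``$A$ is a subargument of $B$'' in the proposition presumably means $A \in \Sub(B)$, which by convention includes the case $A = B$; handling that degenerate case separately (where the inequality becomes an equality) keeps the argument clean.
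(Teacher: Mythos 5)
Your proof is correct and is exactly the straightforward argument the paper has in mind: the paper states Proposition \ref{propsubon} (and \ref{propsubthon}) without an explicit proof, remarking only that the proofs are straightforward. Reducing the inequality to an inclusion of events via Equation (\ref{eqlabsingarg}) and then descending a direct-subargument chain using the subargument-completeness of $\ON(\labelling)$ guaranteed by legality is precisely the intended reasoning, including the careful handling of the trivial case $A = B$.
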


Further, the probability of appearance of an argument $A$ is determined by the probability of the subtheories including the rules \RRT{utilised} in its \RRT{construction}.

\begin{myProposition}\label{propsubthon} 
\sloppy \qquad Let $\mathfrak{F}_1 = \tuple{T, \tuple{\Omega_{\mPTF}, F_{\mPTF}, P_{\mPTF}}}$ be a \PTF,  and $\mathfrak{F}_2 = \tuple{G_T, \mathcal{S}, \tuple{\Omega, F, P}}$ its corresponding   \PAF, i.e.  
$\mathfrak{F}_2 = \genPAF{\genPGF{\mathfrak{F}_1}}$.
For every argument $A$ in $\mathcal{A}_{G_T}$ it holds that 
$$P_{\mPAF}(L_A = \ON)=\sum_{\set{\subtheory \in \Sub(T) \mid \Rules(A) \subseteq \subtheory}}P_{\mPTF}(\set{\subtheory}).$$
\end{myProposition}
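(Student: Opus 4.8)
The plan is to unwind the three layers of correspondence that connect the \PTF $\mathfrak{F}_1$ to the \PAF $\mathfrak{F}_2$, and to track where the probability mass of the event $L_A = \ON$ comes from at each layer. By construction $\mathfrak{F}_2 = \genPAF{\genPGF{\mathfrak{F}_1}}$, so the composite probability distribution $P_{\mPAF}$ can be written, via definitions \ref{defPGFfromPTF} and \ref{defPAFPGF}, in terms of $P_{\mPTF}$. Since $\mathcal{S}$ here is the $\ALL\mbox{-}\{\ON,\OFF\}$ specification (this is what $\genPAF{\cdot}$ of a \PGF produces, per Definition \ref{defPAFPGF}), every subgraph $H$ of $G_T$ corresponds to exactly one $\{\ON,\OFF\}$-labelling $\labelling_{G_T,H}$, and $P_{\mPAF}(\set{\labelling_{G_T,H}}) = P_{\mPGF}(\set{H})$. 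So the marginal $P_{\mPAF}(L_A = \ON)$ is the sum of $P_{\mPGF}(\set{H})$ over all subgraphs $H$ with $A \in \mathcal{A}_H$, and in turn, by Equation (1) in the excerpt, $P_{\mPGF}(\set{H}) = \sum_{\set{\subtheory \in \Sub(T) \mid \gengraph{\subtheory} = H}} P_{\mPTF}(\set{\subtheory})$.

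Combining these, $P_{\mPAF}(L_A = \ON) = \sum_{\set{\subtheory \in \Sub(T) \mid A \in \mathcal{A}_{\gengraph{\subtheory}}}} P_{\mPTF}(\set{\subtheory})$: the sum ranges over exactly those subtheories $\subtheory$ whose generated argumentation graph contains $A$. The key remaining step is the identity
$$A \in \mathcal{A}_{\gengraph{\subtheory}} \iff \Rules(A) \subseteq \subtheory.$$
For the $(\Leftarrow)$ direction: if all rules appearing in $A$ belong to $\subtheory$ (abusing notation to identify $\subtheory$ with its rule set $\Trule'$), then $A$ is constructible from $\subtheory$ by chaining those very rules — by induction on the structure of $A$ (Definition \ref{def:argument}), each direct subargument $A_i$ has $\Rules(A_i) \subseteq \Rules(A) \subseteq \subtheory$ hence $A_i \in \mathcal{A}_{\gengraph{\subtheory}}$, and $\TopRule(A) \in \Rules(A) \subseteq \subtheory$, so $A$ itself is built from $\subtheory$; since $\gengraph{\subtheory}$ contains \emph{all} arguments constructible from $\subtheory$ (Definition \ref{defintion:constructed_from}), $A \in \mathcal{A}_{\gengraph{\subtheory}}$. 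For $(\Rightarrow)$: if $A \in \mathcal{A}_{\gengraph{\subtheory}}$, then $A$ is constructed from $\subtheory$, so every rule used in building $A$ — again an easy induction through $\DirectSub(A)$ and $\TopRule(A)$, matching the recursive definition of $\Rules(A)$ in Definition \ref{definition:elements} — lies in $\subtheory$, i.e. $\Rules(A) \subseteq \subtheory$. Substituting this equivalence into the summation index yields exactly $P_{\mPAF}(L_A = \ON) = \sum_{\set{\subtheory \in \Sub(T) \mid \Rules(A) \subseteq \subtheory}} P_{\mPTF}(\set{\subtheory})$, as claimed.

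The only real subtlety — and the step I would write most carefully — is the bookkeeping of the double sum collapsing cleanly: one must check that summing $P_{\mPTF}(\set{\subtheory})$ first over subtheories generating a fixed $H$, then over $H$ containing $A$, is the same as summing over all subtheories $\subtheory$ with $A \in \mathcal{A}_{\gengraph{\subtheory}}$ directly. This is just the observation that the map $\subtheory \mapsto \gengraph{\subtheory}$ partitions $\Sub(T)$ by the value $\gengraph{\subtheory}$, so no subtheory is counted twice and none with $A \in \mathcal{A}_{\gengraph{\subtheory}}$ is missed (since $A \in \mathcal{A}_{\gengraph{\subtheory}}$ iff the $H$ that $\subtheory$ maps to satisfies $A \in \mathcal{A}_H$). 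Given that the excerpt already states all the needed correspondence definitions and Equation (1), the proof is essentially a chain of substitutions plus the structural-induction identity above; there is no genuine obstacle, only the need to present the index-set manipulations without sloppiness.
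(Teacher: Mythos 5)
Your proof is correct and matches what the paper intends: the paper gives no written argument for Proposition \ref{propsubthon}, simply declaring its proof straightforward, and your chain of substitutions through Definitions \ref{defPGFfromPTF} and \ref{defPAFPGF} (with the bijection between $\{\ON,\OFF\}$-labellings and subgraphs) is exactly that intended argument. Your structural induction establishing $A \in \mathcal{A}_{\gengraph{\subtheory}} \iff \Rules(A) \subseteq \subtheory$, together with the collapse of the double sum along the fibres of $\subtheory \mapsto \gengraph{\subtheory}$, supplies precisely the details the paper leaves implicit.
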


We suggest that properties analogous to those given  in propositions \ref{propsubon} and \ref{propsubthon} should be regarded as basic requirements in every proposal belonging to the constellations approach. \PB{Simple as they are, such properties are out of the scope of approach focused on abstract argumentation only (see Subsection \ref{subsection:abstractpa}) and, as exemplified by the above discussion of \cite{DBLP:journals/ijar/Hunter13}, they are sometimes overlooked.}\medskip

Turning to issues related to semantics evaluation, in Definition 15 of \cite{DBLP:journals/ijar/Hunter13}
the probability of a set of arguments $\mathcal{A}$ being an extension according to an argumentation semantics $X$, denoted $P^X_{\mPAG}(\mathcal{A})$, is defined as the sum of the probabilities of subgraphs entailing this extension.
The following equation reformulates the original definition in terms of labellings, given that the probability that a set of arguments $\mathcal{A}$ is an extension according to an argumentation semantics $X$ is the probability that all and only arguments in $\mathcal{A}$ are labelled $\IN$ according to an $X$-$\{\IN, \OUT, \UND\}$-labelling specification:
\begin{equation}
\label{equation:weird}
P^X_{\mPAG}(\mathcal{A}) = \sum_{H \in\mathcal{H}^X(\mathcal{A}) } P_{\mPAG}(H)
\end{equation}
where $\mathcal{H}^X(\mathcal{A}) = \{ H | H \in\subgraphs{G}: \mathcal{A} = \IN(\labelling) \mbox{\PB{~for some~}}  \labelling \in \mathcal{L}^X_H  \}$. On the basis of Definition \ref{PGFtoPAG}, $P^X_{\mPAG}(\mathcal{A})$ can be expressed equivalently in the context of  \PGFs given that  of $P_{\mPAG}(H) = P_{\mPGF}( \set{H} )$.\smallskip

\RRR{While any \PAG can be captured by a \PGF or the corresponding \PAF (Definition \ref{defPAFPGF})  with $\set{\ON,\OFF}$-labellings, we have to emphasise that the \selunc uncertainty captured by \PAFs is possibly distinct from the uncertainty captured by $P^X_{\mPAG}$, and provides a more fine-grained probabilistic evaluation of argument acceptance statuses,  as illustrated in Example \ref{exmaple:PAFPLF}. }

\begin{myExample}[continues=ex:neither2]
\label{exmaple:PAFPLF}
\RRR{Suppose the \PAG $\tuple{ \mathcal{A}, \defeat, P_{\mPAG}}$ and the \PAF $ \tuple{ \tuple{\mathcal{A}, \defeat}, \mathsf{preferred}\mbox{-}\set{\IN,\OUT, \UND, \OFF}, \tuple{\Omega_{\mPAF}, F_{\mPAF}, P_{\mPAF}}} $ have probability distributions as given in Figure \ref{figure:mappingxx}.\\
%
}

\begin{figure}[ht!]
\centering
  \begin{tikzpicture}[cell/.style={rectangle,draw=black}, nodes in empty cells]
  \matrix[
  matrix of math nodes,
  ](m)
  {  
  &  \mathsf{B} & \mathsf{C} &  P_{\mPAG} &  & & & & & & &  & & \mathsf{B} & \mathsf{C} & P_{\mPAF} & & \\ 
 \mathsf{H}_{1}&  \ON & \ON &  1 &  &  & & & & & &  & & |[fill=green!30]|\IN & |[fill=red!30]|\OUT 	& 0.4	\\ [-0.5ex]
 \mathsf{H}_{2}&   \ON & |[fill=black!30]|\OFF & 0 &  &   & & & & & &  & & |[fill=red!30]|\OUT & |[fill=green!30]|\IN 	& 0.6 \\ [-0.5ex]
 \mathsf{H}_{3}&   |[fill=black!30]|\OFF & \ON &  0 &  &  & & & & & &  & & |[fill=green!30]|\IN & |[fill=black!30]|\OFF   & 0   \\[-0.5ex]
 \mathsf{H}_4&     |[fill=black!30]|\OFF & |[fill=black!30]|\OFF &  0 &  &  & & & & & &  & & |[fill=black!30]|\OFF & |[fill=green!30]|\IN  & 0  \\[-0.5ex]
  &       &  &   &  &  & & & & & &  & & |[fill=black!30]|\OFF & |[fill=black!30]|\OFF  & 0 \\[-0.5ex]
  };
  
  \draw[-latex] (m-2-6.north west) -- (m-2-13.north east);
  \draw[-latex] (m-2-6.north west) -- (m-3-13.north east);
  \draw[-latex] (m-3-6.north west) -- (m-4-13.north east);  
  \draw[-latex] (m-4-6.north west) -- (m-5-13.north east);  
  \draw[-latex] (m-5-6.north west) -- (m-6-13.north east);  
\end{tikzpicture}
\caption{\PBB{An example of relation and comparison between \PAG and \PAF.}}
\label{figure:mappingxx}
\end{figure}

\noindent \RRR{We have that 
\begin{itemize}
\item  the distribution $P_{\mPAG}$ is such  that $P_{\mPAG}(\mathsf{H1})=1$, reflecting that  the inclusion of $\mathsf{B}$ and  $\mathsf{C}$ have probability $1$; and 
\item  the distribution $P_{\mPAF}$ is such that $P_{\mPAF}(\set{ \tuple{ \set{ \mathsf{B} }, \set{ \mathsf{C}}, \emptyset , \emptyset  }} ) = 0.4$ and $P_{\mPAF}(\set{ \tuple{ \set{ \mathsf{C} }, \set{ \mathsf{B}}, \emptyset, \emptyset   }}) = 0.6$, reflecting that the preferred extension $\set{ \mathsf{C}}$ is more likely  than  the preferred extension $\set{ \mathsf{B}}$. 
\end{itemize}}

\noindent \RRR{The  probability that the arguments $\mathsf{B}$ and $\mathsf{C}$ appear in a preferred extension/labelling is then different in the two approaches:
\begin{itemize}
\item the probability that $\set{ \mathsf{B} }$ is a preferred extension is 1, and equally for the set $\set{ \mathsf{C} }$,  i.e $P^{\mathsf{pr}}_{\mPAG}(\set{ \mathsf{B} })=  1$ and $P^{\mathsf{pr}}_{\mPAG}(\set{ \mathsf{C} })=  1$;
\item\sloppy the probability that $\mathsf{B}$ is labelled $\IN$ is 0.4, while the probability that $\mathsf{C}$ is labelled $\IN$ is 0.6,  i.e $P^{\mathsf{pr}}_{\mPAF}( L_\mathsf{B} = \IN )=  0.4$ and $P^{\mathsf{pr}}_{\mPAF}( L_\mathsf{C} = \IN )=  0.6$. 
\end{itemize}}
This example illustrates  that the \selunc uncertainty addressed with \PAFs is distinct from the uncertainty subject of the constellations approach, \RRR{and \PBB{supports} a more fine-grained probabilistic evaluation of  acceptance statuses of arguments}.
 \hfill $\square$
\end{myExample}

To recap, we have shown that the approach of probabilistic labellings can capture the constellations approach, but the latter cannot capture the former \RRR{since probabilistic labellings  allow a more fine-grained probabilistic evaluation of argument acceptance statuses.} \PB{Some basic properties, e.g. the fact that  the probability of inclusion of an argument cannot be greater than the probability of inclusion of its  subarguments, are directly derived in our framework.} It turns out that the approach of probabilistic labellings can be also related to another approach to probabilistic argumentation, namely the epistemic approach, as we will see next.

\subsection{Epistemic approach}
\label{subsection:epistemic}
In the epistemic approach, \primaryquote{the probability distribution over arguments is used directly to
identify which arguments are believed} \cite{DBLP:journals/ijar/Hunter13}. The idea is that in this case the argumentation framework is fixed but an agent has an \primaryquote{extra epistemic information}  to assign an epistemic probability, denoted in the following as $\EP$, to each argument. Formally, given a set of arguments $\argset$, $\EP: \argset \rightarrow [0, 1]$.
Given that this degree of belief is based on extra information, as exemplified in \cite{DBLP:journals/ijar/Hunter13}  it may be the case that an agent assigns a high degree of belief to an argument which would be rejected according to the acceptance labelling prescribed by the framework. Even more, in \cite{DBLP:journals/ijar/Hunter13}, it is assumed that the degrees of belief of an agent may even be inconsistent i.e. that $\EP$ may violate the probability axioms.
Dealing with inconsistencies is beyond the scope of the present paper, hence we restrict to consistent epistemic probabilities in the following discussion, showing how they can be related to our approach.

First, we \RRT{formalise} the probability space for epistemic probabilities in our context.

\begin{myDefinition}[Probabilistic epistemic frame]
A \emph{probabilistic epistemic frame} (\PEF) is a tuple $\tuple{G, \tuple{\Omega_{\mPEF}, F_{\mPEF}, P_{\mPEF}}}$ where $G=\tuple{\AR, \defeat, \support}$ denotes  an argumentation graph,  and  $\tuple{\Omega_{\mPEF}, F_{\mPEF}, P_{\mPEF}}$ is a probability space such that:
\begin{itemize}
\item the sample space $\Omega_{\mPEF}$ is the set of subsets of $\AR$, i.e. $\Omega =  2^\AR$;   
\item  the $\sigma$-algebra $F_{\mPEF}$ is the power set of $\Omega_{\mPEF}$, i.e. $F_{\mPEF} = 2^{\Omega_{\mPEF}}$;
\item  the  function $P_{\mPEF}$ from $F_{\mPEF}$ to $[0,1]$ is a probability distribution satisfying Kolmogorov axioms.
\end{itemize}
\end{myDefinition}

Each element of the sample space of a \PEF is a set of arguments, an option for the unconstrained choice of which arguments are believed. The epistemic probability, i.e.  the degree of belief, of an argument is the sum of the probabilities of the sets of arguments including it. So given a \PEF tuple  $\tuple{G, \tuple{\Omega_{\mPEF}, F_{\mPEF}, P_{\mPEF}}}$, we have:
\begin{equation}
\label{thimm:1}
\EP(A)=\sum_{\set{\genargset \PB{\in} \Omega_{\mPEF} \mid A \in \genargset}}\EP(\PB{\set{\genargset}}).
\end{equation}

Every PEF can be put in correspondence with a \PAF based on the $\{\IN, \OUT, \UND\}$ labels, where  the label $\UND$ is not used. Basically the epistemic probability of each element of the sample space of a \PEF, i.e. of each set of arguments, is put in correspondence with the probability of a labelling where the members of the set are labelled $\IN$ and the other arguments are labelled $\OUT$.

\begin{myDefinition}[\PAF corresponding to a \PEF]
\sloppy Given a \PEF $\tuple{G, \tuple{\Omega_{\mPEF}, F_{\mPEF}, P_{\mPEF}}}$,
 its corresponding \PAF is  a \PAF tuple  $\tuple{G,  \set{\IN,\OUT,\UND} , \tuple{\Omega_{\mPAF}, F_{\mPAF}, P_{\mPAF}}}$ whose  probability distribution  $P_{\mPAF}$  is such that 
$\forall S \in F_{\mPAF}$:  
$$P_{\mPAF}(S) =  \sum_{ \mathcal{E} \in \mathfrak{E} } P_{\mPEF}(\PB{\set{\genargset}}) $$
where $\mathfrak{E} = \{\genargset \PB{\in} \Omega_{\mPEF} | \exists \labelling \in S: \IN(\labelling) = \genargset,\, \forall A \not \in \genargset: \labelling(A) = \OUT  \} $.
\end{myDefinition}
\vspace{1.5cm}

\begin{myDefinition}[\PEF corresponding to a \PAF]
\item 
Given a \PAF $\tuple{G,\ALL\mbox{-}\set{\IN,\OUT,\UND}, \tuple{\Omega_{\mPAF}, F_{\mPAF}, P_{\mPAF}}}$, its corresponding \PEF is a \PEF tuple  $\tuple{G, \tuple{\Omega_{\mPEF}, F_{\mPEF}, P_{\mPEF}}}$ whose  probability distribution $P_{\mPEF}$ is such that 
$\forall S \in F$: 
$$P_{\mPEF}(S) = \sum_{L \in \mathcal{L}} P_{\mPAF}( \PB{\set{\labelling}} )$$
where $\mathcal{L} = \set{ \labelling \in \setofalllabels{G}{\set{\mathsf{IN},\mathsf{OUT},\mathsf{UN}}} \mid \exists \genargset \in S: \genargset =\IN(\labelling)  }$.
\end{myDefinition}

So, each labelling $\labelling$ in the sample space of the \PAF is put in correspondence with the set of arguments $\IN(\labelling)$ belonging to the sample space of a corresponding \PEF.
In particular, from \Equationref \ref{thimm:1}  and taking into account Proposition (\ref{eqlabsingarg}), we directly obtain for every argument $A$ in $\mathcal{A}$:
\begin{equation}
\EP(A) = P_{\mPAF}(L_A =\IN).
\end{equation}

\begin{myExample}[continues=ex:neither2]
\label{example:PEF}
\sloppy Suppose the \PEF $\tuple{\mathsf{G}, \tuple{\Omega_{\mPEF}, F_{\mPEF}, P_{\mPEF}}}$ and the \PAF $ \tuple{ \mathsf{G}, \mathsf{preferred}\mbox{-}\set{\IN,\OUT,\UND}, \tuple{\Omega_{\mPAF}, F_{\mPAF}, P_{\mPAF}}} $, where the argumentation graph $\mathsf{G}$ is pictured in Figure  \ref{fig:examplesupportcons}.
Suppose that the probability acceptance of $\mathsf{B}$ and  $\mathsf{C}$ are $0.4$ and $0.6$, respectively, then we have $P_{\mPEF}(\mathsf{B}) = P(L_{\mathsf{B}} = \IN) = 0.4$ and $P_{\mPEF}(\mathsf{C}) = P(L_{\mathsf{C}} = \IN ) = 0.6$.
In this example,  values for the probabilities of acceptance are chosen to be consistent with respect to the preferred extension/labellings. However, a distribution $P_{\mPEF}$ does not necessarily satisfy this constraint. For example one may set $P_{\mPEF}(\{\mathsf{B}, \mathsf{C}\}) >0$. \hfill $\square$
\end{myExample}

As mentioned in Example \ref{example:PEF}, the epistemic approach includes the consideration of possibly \PB{anomalous} probability distributions, and for this reason one may/should consider some 
desirable properties  for epistemic probabilities.
In that regard, the proposed correspondences between \PEFs and \PAFs preserve some desirable properties stated in  \cite{DBLP:journals/ijar/Hunter13} for such epistemic probabilities.
In particular, an epistemic probability $\EP$ is called \primaryquote{coherent} if for every pair of arguments $A$ and $B$ such that $A$ attacks $B$ it holds that $\EP(A) + \EP(B) \leq 1$. Coherence implies a weaker property, called \primaryquote{rationality} in \cite{DBLP:journals/ijar/Hunter13}, namely that if $\EP(A) > 0.5$ then $\EP(B) \leq 0.5$.
In our framework, coherence of $P(L_A =\IN)$ is ensured, provided that the labelling satisfies the minimal property of conflict freeness.

\begin{myProposition}
\label{proposition1}
Let $\tuple{G, \CF\mbox{-}\{\IN, \OUT, \UND\},\tuple{\Omega, F, P}}$ be a \PAF.
 Let $A$ and $B$ denote two arguments in $\mathcal{A}_G$ such that $B$ attacks $A$.
$$P(L_A = \IN) + P(L_B = \IN) \leq 1. $$
\end{myProposition}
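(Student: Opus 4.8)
The plan is to reduce the inequality $P(L_A = \IN) + P(L_B = \IN) \leq 1$ to a statement about disjoint events in the sample space $\Omega$, exactly in the spirit of the proof of Proposition~\ref{eqforTh1}. First I would recall from Equation~\ref{eqlabsingarg} that $P(L_A = \IN) = \sum_{\labelling \in \Omega : \labelling(A) = \IN} P(\set{\labelling})$, and similarly for $B$. So it suffices to show that the two sets of labellings $\mathcal{L}_A = \set{\labelling \in \Omega \mid \labelling(A) = \IN}$ and $\mathcal{L}_B = \set{\labelling \in \Omega \mid \labelling(B) = \IN}$ are disjoint; then, since $P$ is a probability measure on $F = 2^\Omega$, additivity gives $P(\mathcal{L}_A) + P(\mathcal{L}_B) = P(\mathcal{L}_A \cup \mathcal{L}_B) \leq P(\Omega) = 1$.

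The heart of the argument is the disjointness claim, which is precisely where the conflict-freeness hypothesis enters. Suppose, for contradiction, that some $\labelling \in \Omega = \crit{\CF}{G}{\set{\IN,\OUT,\UND}}$ satisfies both $\labelling(A) = \IN$ and $\labelling(B) = \IN$. By hypothesis $B$ attacks $A$, so $B$ is an attacker of $A$. By the definition of a conflict-free $\set{\IN, \OUT, \UND\}$-labelling, $A$ is labelled $\IN$ if, and only if, all attackers of $A$ are not labelled $\IN$; in particular, since $\labelling(A) = \IN$, the attacker $B$ must not be labelled $\IN$, contradicting $\labelling(B) = \IN$. Hence no such $\labelling$ exists and $\mathcal{L}_A \cap \mathcal{L}_B = \emptyset$.

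I would then conclude by combining the two observations: $P(L_A = \IN) + P(L_B = \IN) = P(\mathcal{L}_A) + P(\mathcal{L}_B) = P(\mathcal{L}_A \cup \mathcal{L}_B) \leq 1$, using finite additivity of $P$ on the disjoint measurable sets $\mathcal{L}_A, \mathcal{L}_B \in F$ and the Kolmogorov normalisation axiom $P(\Omega) = 1$. No step here is computationally heavy; the only subtlety — and thus the ``main obstacle'', though a mild one — is making sure the conflict-freeness condition is applied in the correct direction, i.e. using that $\labelling(A) = \IN$ forces all of $A$'s attackers (including $B$) to be non-$\IN$, rather than arguing from $B$'s side. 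One could equally run the argument symmetrically from $B$, but since the attack relation is stated only as ``$B$ attacks $A$'', anchoring the deduction at $A$ is the clean route. It is worth remarking that, unlike Proposition~\ref{eqforTh1} which needed a three-way partition because a literal may be the conclusion of several arguments with differing labels, here the two events are already directly disjoint, so no auxiliary ``complement'' set is required.
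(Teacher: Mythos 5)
Your proof is correct and follows essentially the same route as the paper: conflict-freeness makes the events $\{L_A=\IN\}$ and $\{L_B=\IN\}$ disjoint subsets of the sample space of $\CF$-labellings, and additivity plus $P(\Omega)=1$ gives the bound (the paper phrases this via a three-way partition including the complement set, which is the same argument). Your explicit unpacking of why conflict-freeness forces disjointness is a useful elaboration of the paper's one-line appeal to it.
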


\begin{proof}
Let us denote $\mathcal{L}_A$ the set of labellings where $\labelling(A)=\IN$, $\mathcal{L}_B$ the set of labellings where $\labelling(B)=\IN$, and  $\mathcal{L}$ the complement  set of labellings $\Omega \backslash (\mathcal{L}_{A}\cup \mathcal{L}_{B} )$. \PB{By conflict freeness, $\mathcal{L}_{A}$ and $\mathcal{L}_{B}$ are disjoint,} we have thus: $P(\mathcal{L}_A) + P(\mathcal{L}_B) +  P(\mathcal{L}) = 1$. By definition, $P(\mathcal{L}_A) = P(L_A=\IN)$ and $P(\mathcal{L}_B) = P(L_B=\IN)$. We also have that  $P(\mathcal{L}) \geq 0$. Therefore
$P(L_A = \IN) + P(L_B = \IN) \leq 1$. \hfill $\square$
\end{proof}  

Further it is easy to see that $P(L_A =\IN)$ satisfies other properties under additional mild assumptions.
First of all an epistemic probability is said to be \primaryquote{founded} \cite{DBLP:conf/ecai/HunterT14a} if every argument not receiving any attack has probability 1. This property is directly achieved if one assumes complete labellings.

\begin{myProposition}
\label{proposition2}
Let $\tuple{G,\mathsf{complete}\mbox{-}\{\IN, \OUT, \UND\},\tuple{\Omega, F, P}}$ be a \PAF. Let $A$ be any argument in $\mathcal{A}_G$ such that no arguments attack $A$.
$$P(L_A = \IN) = 1.$$
\end{myProposition}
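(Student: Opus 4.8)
The plan is to reduce the probabilistic statement to a purely semantic observation about complete labellings, and then invoke the normalisation axiom of the probability space. First I would recall that the sample space $\Omega$ of the given \PAF is exactly $\crit{\mathsf{complete}}{G}{\{\IN,\OUT,\UND\}}$, i.e. the set of complete $\{\IN,\OUT,\UND\}$-labellings of $G$, and that by Equation \eqref{eqlabsingarg} we have $P(L_A = \IN) = \sum_{\labelling \in \Omega:\ \labelling(A) = \IN} P(\set{\labelling})$.

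The key step is the semantic claim: every complete $\{\IN,\OUT,\UND\}$-labelling $\labelling$ of $G$ assigns $\IN$ to any argument $A$ that receives no attack. This follows directly from the defining condition of a complete labelling (Definition of the complete specification): $A$ is labelled $\IN$ if, and only if, all attackers of $A$ are labelled $\OUT$. Since, by hypothesis, $A$ has no attackers in $\mathcal{A}_G$, the universally quantified condition ``all attackers of $A$ are $\OUT$'' is vacuously satisfied, whence $\labelling(A) = \IN$. This holds for every $\labelling \in \Omega$, so the event $\{\labelling \in \Omega \mid \labelling(A) = \IN\}$ coincides with $\Omega$ itself.

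I would then conclude by computing $P(L_A = \IN) = \sum_{\labelling \in \Omega:\ \labelling(A) = \IN} P(\set{\labelling}) = \sum_{\labelling \in \Omega} P(\set{\labelling}) = P(\Omega) = 1$, where the last equality is Kolmogorov's normalisation axiom for the probability space $\tuple{\Omega, F, P}$ of the \PAF. There is essentially no obstacle here: the only subtlety worth stating explicitly is that the argument requires the full strength of \emph{complete} labellings (the ``only if'' direction alone would not force $\IN$; it is the ``if'' direction, applied to the vacuous premise, that does the work), which is exactly why the hypothesis of the proposition fixes the $\mathsf{complete}$ criterion rather than, say, merely conflict-free labellings. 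If one wanted the analogous claim under a weaker criterion, one would need the labelling specification to include the property that unattacked arguments are $\IN$, so it is worth remarking that complete-based semantics (grounded, preferred, stable) all inherit this conclusion as an immediate corollary.
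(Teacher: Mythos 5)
Your proof is correct and follows essentially the same route as the paper's: every complete $\{\IN,\OUT,\UND\}$-labelling in the sample space must label an unattacked argument $\IN$ (the attacker condition being vacuously satisfied), so the event $L_A=\IN$ is the whole sample space and has probability $1$. You merely make explicit the vacuous-quantification step and the appeal to normalisation, which the paper leaves implicit.
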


\begin{proof}
In the case of the $\mathsf{complete}$-$\{\IN, \OUT, \UND\}$-labelling, for any labelling $\labelling$ in the sample space $\Omega$, we have $\labelling(A)=\IN$, therefore $P(L_A = \IN) = 1 $.
In the case of the $\mathsf{complete}$-$\{\IN, \OUT, \UND,\OFF\}$-labelling, for any labelling $\labelling$ in the sample space $\Omega$, we have $\labelling(A) \neq \IN$ only if $\labelling(A) = \OFF$, therefore $P(L_A = \IN) +  P(L_A = \OFF)= 1 $.\hfill $\square$
\end{proof}

While some properties considered in the literature have a direct counterpart within the proposed framework, as discussed above, for some others the situation is not so clearcut.
Consider for instance \primaryquote{optimistic}  distributions proposed in \cite{DBLP:conf/ecai/HunterT14a}.
Letting $\mathcal{B}$ denote the set of all the attackers of an argument $A$, a distribution is optimistic if for every argument $A$ 
\begin{equation}
\label{equation:optimistic}
P(L_A =\IN) \geq 1 -  \sum_{B \in \mathcal{B}} P(L_B =\IN).
\end{equation}
Considering $\{\IN, \OUT, \UND\}$-labellings,  the constraint may become trivial in the case the probability of some of the attackers of $A$ is high (i.e. when $\sum_{B \in \mathcal{B}} P(L_B =\IN) \geq 1$).
However, the constraint does not always hold.  For instance, if for every attacker $B$ of $A$ the distribution is such that  $P(L_B = \IN) =0$, then the constraint becomes an equality, i.e. $P(L_A =\IN) = 1$ and is not necessarily satisfied.

%
%

So, epistemic probabilities can be put in direct correspondence with \PAFs based on $\{\IN, \OUT, \UND\}$-labellings, where the focus is on arguments labelled $\IN$ only.
The basic properties of coherence and foundedness \PBX{introduced for}  epistemic probabilities have a counterpart in \PAFs in terms of properties of labellings, while other properties of epistemic probabilities like optimism have no counterparts and their conceptual status \PBB{is open to discussion} (see \cite{DBLP:conf/comma/BaroniGV14} for a more extensive analysis on this point).\smallskip

Moreover, our proposal allows one to directly extend the notion of epistemic probability from arguments to argument conclusions: it is rather natural to assume that the final goal of an agent is to express his/her degrees of belief on the statements about which arguments are built, rather than just about arguments.

As a matter of fact, in \cite{DBLP:journals/ijar/Hunter13} the probability of a claim is derived from the probability on the \PBX{underlying} logical models, which is used as input information, independently of the probability of arguments.
Indeed, in this \PBX{context}, the probability of each claim can be computed directly from the input information without  argument construction. The probability of each argument is also computed from the input information and has no effect on the probability of the relevant claim (cf. Proposition 5 of \cite{DBLP:journals/ijar/Hunter13} where it is shown that the probability of a claim is not \RRT{less} than the probability of any argument supporting it).
In our rule-based approach, using the probability on subtheories as input, the probability of claims is evaluated through the argument construction process, which, more coherently with the notion of argument-based reasoning, plays a central role.
Further, our approach allows a fine-grained uncertain evaluation of claims, with probabilities associated \RRT{with} all the relevant labels, while in \cite{DBLP:journals/ijar/Hunter13} a binary evaluation of claims is implicitly adopted.\medskip

To recap, the epistemic approach  can be put in correspondence with the approach of probabilistic labellings \RRT{through}  \PAFs based on $\{\IN, \OUT, \UND\}$-labellings. In such \PAFs, some desirable epistemic properties like \primaryquote{coherence} and \primaryquote{foundedness} \PB{are guaranteed under some basic assumptions, like conflict-freeness}, whereas \PBB{other} properties do not hold in general. Then, while in \cite{DBLP:journals/ijar/Hunter13} the probability of a claim can be computed without argument construction, in our proposal the probability of claims is evaluated through the construction of arguments, an account which appears more coherent with the notion of argument-based reasoning. Eventually,  \RRT{probabilistic labellings} can capture a  combination of the constellations and epistemic approaches, as we will see next.

\subsection{Combining the constellations and the epistemic approach}
\label{subsection:combinations}
The constellations and epistemic approaches  have been \PBX{treated} separately in the previous literature as referring to different types of uncertainty.
It is worth remarking, however, \PBB{that they are, so to say, orthogonal}. \RR{Consequently,} a situation where both types of uncertainty are present cannot be excluded: an agent may be uncertain about which argumentation graph to actually consider and, at the same time, may have different degrees of belief about the arguments accepted \PBB{in the actual (unknown)} argumentation graph.

\PAFs based on $\{\IN, \OUT, \UND, \OFF\}$\RRT{-}labellings provide a way to encompass jointly the two kinds of uncertainty.
In fact, extending to this kind of labelling the reasoning illustrated in the previous sections, the probability of inclusion of an argument $A$ corresponds to $P(L_A \neq \OFF)$ (also denoted as  $P(L_A = \ON)$), while the epistemic probability of $A$ corresponds to $P(L_A =\IN)$.

The correspondences between the different probabilistic approaches preserve  essential properties in the \RRT{generalised} framework. 
%
%
%
%
For example, propositions \ref{proposition1} and \ref{proposition2} can be turned into propositions \ref{proposition1.2} and \ref{proposition2.2} to take into account the inclusion status of  arguments.
\begin{myProposition}
\label{proposition1.2}
Let $\tuple{G, \mathsf{\CF}\mbox{-}\{\IN, \OUT, \UND, \OFF\},\tuple{\Omega, F, P}}$ be a \PAF. 
Let $A$ and $B$ denote two arguments in $\mathcal{A}_G$ such that $B$ attacks $A$.
$$P(L_A = \IN) + P(L_B = \IN) \leq 1. $$
\end{myProposition}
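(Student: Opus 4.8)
The plan is to mirror the proof of Proposition \ref{proposition1}, which treated the $\CF$-$\{\IN, \OUT, \UND\}$-labelling case, and adapt it to accommodate the extra label $\OFF$. The key observation is that the argument hinges only on conflict-freeness, i.e. on the fact that no labelling in the sample space can have both $A$ and $B$ labelled $\IN$ when $B$ attacks $A$; the presence of the additional label $\OFF$ does not interfere with this.

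First I would partition the sample space $\Omega = \crit{\CF}{G}{\{\IN,\OUT,\UND,\OFF\}}$ into three sets: $\mathcal{L}_A = \{\labelling \in \Omega \mid \labelling(A) = \IN\}$, $\mathcal{L}_B = \{\labelling \in \Omega \mid \labelling(B) = \IN\}$, and the complement $\mathcal{L} = \Omega \setminus (\mathcal{L}_A \cup \mathcal{L}_B)$. I would then argue that $\mathcal{L}_A$ and $\mathcal{L}_B$ are disjoint: by the conflict-freeness condition (each argument is labelled $\IN$ if, and only if, all its attackers are not labelled $\IN$), if $\labelling(A) = \IN$ then all attackers of $A$ — in particular $B$ — are not labelled $\IN$, so $\labelling(B) \neq \IN$. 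Hence no labelling belongs to both $\mathcal{L}_A$ and $\mathcal{L}_B$, and together with $\mathcal{L}$ these three sets form a partition of $\Omega$.

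From the partition and the Kolmogorov axioms, $P(\mathcal{L}_A) + P(\mathcal{L}_B) + P(\mathcal{L}) = 1$. By the definition of the random labelling $L_A$ (Equation \ref{eqlabsingarg}), $P(\mathcal{L}_A) = P(L_A = \IN)$ and $P(\mathcal{L}_B) = P(L_B = \IN)$. Since $P(\mathcal{L}) \geq 0$, we obtain $P(L_A = \IN) + P(L_B = \IN) \leq 1$, which is the claim. There is essentially no obstacle here: the only subtlety worth stating explicitly is that conflict-freeness for $\{\IN, \OUT, \UND, \OFF\}$-labellings applies to the arguments that are actually labelled via the underlying $\{\IN, \OUT, \UND\}$-sublabelling, and an argument labelled $\OFF$ is in particular not labelled $\IN$, so disjointness of $\mathcal{L}_A$ and $\mathcal{L}_B$ still goes through unchanged. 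In fact the whole proof of Proposition \ref{proposition1} carries over verbatim once one notes that the definition of a conflict-free $\{\IN, \OUT, \UND, \OFF\}$-labelling inherits the conflict-freeness constraint on its sublabelling, so I would simply write the short partition argument again (or remark that it is identical to the earlier one).
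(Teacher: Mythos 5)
Your proof is correct and matches the paper's approach: the paper gives no separate proof for this proposition, simply stating that the argument for the $\{\IN,\OUT,\UND\}$ case carries over, and your partition-plus-disjointness argument is exactly that proof, with the right observation that an argument labelled $\OFF$ is in particular not labelled $\IN$, so conflict-freeness of the sublabelling still yields disjointness of $\mathcal{L}_A$ and $\mathcal{L}_B$.
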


\begin{myProposition}
\label{proposition2.2}
Let $\tuple{G,  \mathsf{complete}\mbox{-}\{\IN, \OUT, \UND, \OFF\} , \tuple{\Omega, F, P}}$ be a \PAF. Let $A$ be any argument in $\mathcal{A}_G$ such that no arguments attack $A$.
$$P(L_A = \IN) +  P(L_A = \OFF)= 1. $$
\end{myProposition}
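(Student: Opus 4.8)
The plan is to mirror the proof of Proposition \ref{proposition2} essentially verbatim, adapting it to the four-label setting. First I would observe that in a $\mathsf{complete}$-$\{\IN, \OUT, \UND, \OFF\}$-labelling specification the labellings of $G$ are exactly the $\mathsf{complete}$-$\{\IN, \OUT, \UND, \OFF\}$-labellings with respect to some \subargumentcomplete subgraph $H$ of $G$; by Definition \ref{offlabellingArgGen} every argument of $H$ is labelled according to a complete $\{\IN,\OUT,\UND\}$-labelling of $H$, and every argument outside $\mathcal{A}_H$ is labelled $\OFF$.

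Next I would fix the argument $A$ which receives no attack in $G$, and take an arbitrary labelling $\labelling$ in the sample space $\Omega$, say the $\mathsf{complete}$-$\{\IN, \OUT, \UND, \OFF\}$-labelling with respect to some \subargumentcomplete subgraph $H$. I would distinguish two cases. If $A \notin \mathcal{A}_H$, then by definition $\labelling(A) = \OFF$. If instead $A \in \mathcal{A}_H$, then since $A$ has no attacker in $G$ and the subgraph $H$ only inherits attacks from $G$ (Definition \ref{def:subgraphs}), $A$ has no attacker in $H$ either; hence in the complete $\{\IN, \OUT, \UND\}$-labelling of $H$ that governs $\mathcal{A}_H$, all (vacuously none) of the attackers of $A$ are $\OUT$, so $A$ is labelled $\IN$. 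Therefore every labelling $\labelling \in \Omega$ satisfies $\labelling(A) \in \{\IN, \OFF\}$, i.e. $\labelling(A) = \IN$ or $\labelling(A) = \OFF$.

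From this, using Equation \ref{eqlabsingarg} for the marginal probability of an argument's status together with the fact that $\Omega$ is partitioned by the events $\{\labelling \mid \labelling(A) = \IN\}$ and $\{\labelling \mid \labelling(A) = \OFF\}$ (since no labelling in $\Omega$ assigns $A$ either $\OUT$ or $\UND$), we get $P(L_A = \IN) + P(L_A = \OFF) = P(\Omega) = 1$. Alternatively, one can invoke Proposition \ref{consistency0}: $P(L_A = \IN) + P(L_A = \OUT) + P(L_A = \UND) + P(L_A = \OFF) = 1$, and then note $P(L_A = \OUT) = P(L_A = \UND) = 0$ because no labelling in the sample space assigns those labels to $A$.

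The proof is essentially routine; the only subtlety worth being careful about is the case split on whether $A$ belongs to the chosen subgraph, and the observation that ``no attacker in $G$'' implies ``no attacker in $H$'' — this is immediate from the subgraph restricting the attack relation, but it is the step that makes the argument go through, and it is why the statement only needs a complete (rather than, say, stable) sublabelling. I would in fact present the proof as a two-line appeal to the reasoning already given for Proposition \ref{proposition2}, since the $\{\IN, \OUT, \UND, \OFF\}$ case was already sketched there.
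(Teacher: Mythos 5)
Your proof is correct and follows essentially the same route as the paper: the paper's proof of Proposition \ref{proposition2} already handles the $\{\IN, \OUT, \UND, \OFF\}$ case by observing that, for any labelling in the sample space, an unattacked argument can fail to be labelled $\IN$ only by being labelled $\OFF$, which is exactly the content of your case split on whether $A$ belongs to the chosen \subargumentcomplete subgraph. Your version merely makes explicit the steps the paper leaves implicit (the restriction of the attack relation to the subgraph and the appeal to Proposition \ref{consistency0}), so presenting it as an appeal to the reasoning of Proposition \ref{proposition2} is exactly what the paper does.
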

Other propositions can be derived, for example\RRT{,} to relate the probability of acceptance and inclusion of arguments, cf.  \Propositionref \ref{propsubon}.

\begin{myProposition}\label{propsubon2}
\RRR{Let  $\tuple{G,  \mathsf{\CF}\mbox{-}\{\IN, \OUT, \UND, \OFF\} , \tuple{\Omega, F, P}}$  be a \PAF, and let $A$  denote any argument in $\mathcal{A}_G$.
$$P(L_A = \IN) \leq P(L_A = \ON). $$}
\end{myProposition}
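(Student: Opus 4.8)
The statement to prove is Proposition~\ref{propsubon2}: in a \PAF $\tuple{G, \mathsf{\CF}\mbox{-}\{\IN, \OUT, \UND, \OFF\}, \tuple{\Omega, F, P}}$, for any argument $A$ in $\mathcal{A}_G$ we have $P(L_A = \IN) \leq P(L_A = \ON)$.

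The plan is to argue at the level of individual labellings in the sample space and then lift to probabilities via monotonicity of $P$. First I would recall that, by the notational convention introduced just before Proposition~\ref{consistency0}, when the sample space consists of $\{\IN,\OUT,\UND,\OFF\}$-labellings the event $L_A = \ON$ is shorthand for the set of outcomes $\{\labelling \in \Omega \mid \labelling(A) = \IN \text{ or } \labelling(A) = \OUT \text{ or } \labelling(A) = \UND\}$, i.e.\ exactly those labellings in which $A$ is \emph{not} labelled $\OFF$. Meanwhile $L_A = \IN$ denotes $\{\labelling \in \Omega \mid \labelling(A) = \IN\}$. The key observation is then the trivial set inclusion $\{\labelling \in \Omega \mid \labelling(A) = \IN\} \subseteq \{\labelling \in \Omega \mid \labelling(A) \neq \OFF\}$, since any labelling assigning $A$ the label $\IN$ certainly does not assign it $\OFF$.

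The second step is to invoke monotonicity of the probability measure $P$ (a consequence of the Kolmogorov axioms that every \PAF satisfies by Definition~\ref{pafone}): if $S_1 \subseteq S_2$ are events in $F$, then $P(S_1) \leq P(S_2)$. Both events here belong to $F = 2^{\Omega}$, so applying this to the inclusion above yields $P(L_A = \IN) \leq P(L_A = \ON)$ directly. Equivalently, one could write the difference $P(L_A = \ON) - P(L_A = \IN) = P(L_A = \OUT) + P(L_A = \UND) \geq 0$ using Proposition~\ref{consistency0} (which gives $P(L_A=\IN)+P(L_A=\OUT)+P(L_A=\UND)+P(L_A=\OFF)=1$) together with the defining identity $P(L_A=\ON)=P(L_A=\IN)+P(L_A=\OUT)+P(L_A=\UND)$; either route closes the argument.

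I do not expect any genuine obstacle here: the proof is essentially a one-line set-inclusion argument, which is why the paper flags these propositions as having straightforward proofs. The only point requiring a little care is making the $\CF$ hypothesis visible — in fact conflict-freeness is not actually needed for this particular inequality (it holds for any $\{\IN,\OUT,\UND,\OFF\}$-labelling specification), so I would either state it as used merely for uniformity with the companion propositions or note explicitly that the inclusion $\IN \subseteq \ON$ holds for arbitrary such specifications. If one wants to keep the exposition parallel to Proposition~\ref{propsubon}, one can phrase it via the partition of $\Omega$ into the four label-classes of $A$ and observe that dropping the $\OUT$- and $\UND$-classes only decreases total probability.
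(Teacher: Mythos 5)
Your proof is correct and is exactly the straightforward argument the paper intends (it states the proposition without an explicit proof, treating it as directly derivable): since $L_A = \ON$ abbreviates the event that $A$ is not labelled $\OFF$, the event $L_A = \IN$ is contained in it, and monotonicity of $P$ (equivalently, the marginal decomposition $P(L_A = \ON) = P(L_A = \IN) + P(L_A = \OUT) + P(L_A = \UND)$) gives the inequality. Your remark that conflict-freeness is not actually needed here is also accurate; the hypothesis is stated only for uniformity with the surrounding propositions.
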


\begin{myCorollary}\label{propsubon3}
\RRR{Let $\tuple{G,   \mathsf{\CF}\mbox{-}\{\IN, \OUT, \UND, \OFF\} , \tuple{\Omega, F, P}}$ be a \PAF, and let $A$ and $B$ denote two arguments in $\mathcal{A}_G$ such that $A$ is a  subargument of $B$. 
$$P(L_B = \IN) \leq P(L_A = \ON). $$}
\end{myCorollary}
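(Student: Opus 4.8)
The plan is to derive the inequality from a short event-inclusion argument, exploiting the fact that, by \Definitionref \ref{offlabellingArgGen} instantiated with $X = \CF$, a $\CF$-$\{\IN, \OUT, \UND, \OFF\}$-labelling is always built on top of a \subargumentcomplete subgraph. First I would recall the shape of the sample space: every labelling $\labelling \in \Omega$ of the given \PAF is a $\CF$-$\{\IN, \OUT, \UND, \OFF\}$-labelling of $G$ with respect to some \subargumentcomplete subgraph $H_{\labelling}$ of $G$, so that the arguments of $\mathcal{A}_{H_{\labelling}}$ carry a $\CF$-$\{\IN, \OUT, \UND\}$-labelling of $H_{\labelling}$ while all arguments in $\mathcal{A}_G \setminus \mathcal{A}_{H_{\labelling}}$ are labelled $\OFF$. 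In particular $\labelling(C) = \OFF$ if and only if $C \notin \mathcal{A}_{H_{\labelling}}$, hence the event $L_C = \ON$ is exactly the set of labellings $\labelling$ for which $C \in \mathcal{A}_{H_{\labelling}}$.

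Next I would establish the key inclusion of events $\{\labelling \in \Omega \mid \labelling(B) = \IN\} \subseteq \{\labelling \in \Omega \mid \labelling(A) \neq \OFF\}$. Take any $\labelling$ with $\labelling(B) = \IN$; then $B$ is not labelled $\OFF$, so $B \in \mathcal{A}_{H_{\labelling}}$. Since $A$ is a subargument of $B$ and $\mathcal{A}_{H_{\labelling}}$ is \subargumentcomplete, it follows that $A \in \mathcal{A}_{H_{\labelling}}$ as well: \subargumentcompleteness is stated for \emph{direct} subarguments, so one argues by recursion along a finite chain of direct-subargument steps from $B$ down to $A$ (this closure is immediate from the recursive definition of $\Sub(\cdot)$ in \Definitionref \ref{definition:elements}, and is the same recursion as in the remark following the definition of \wfag graphs). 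Therefore $\labelling(A) \neq \OFF$, i.e.\ $L_A = \ON$ holds at $\labelling$.

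Finally, from this event inclusion and monotonicity of the probability measure $P$ I would conclude $P(L_B = \IN) \leq P(L_A = \ON)$. Equivalently, the same argument can be packaged as a two-step chain: $P(L_B = \IN) \leq P(L_B = \ON)$ by \Propositionref \ref{propsubon2} applied to $B$, and $P(L_B = \ON) \leq P(L_A = \ON)$ by the \subargumentcompleteness argument above, the latter being the $\{\IN, \OUT, \UND, \OFF\}$-analogue of \Propositionref \ref{propsubon}.

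Being a corollary, there is no genuinely hard step here. The only point requiring care — and the one I would make explicit — is that the $\CF$-$\{\IN, \OUT, \UND, \OFF\}$-labellings populating the sample space are, by definition, attached to \subargumentcomplete subgraphs, and that \subargumentcompleteness, although phrased in terms of direct subarguments, propagates to all subarguments by recursion; once this is in place the statement is just monotonicity of probability.
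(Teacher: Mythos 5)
Your proof is correct and follows the derivation the paper intends: the corollary is stated without an explicit proof, being meant to follow from \Propositionref \ref{propsubon2} together with the fact that $\CF$-$\{\IN, \OUT, \UND, \OFF\}$-labellings are defined over \subargumentcomplete subgraphs (\Definitionref \ref{offlabellingArgGen}), which is exactly the event-inclusion/two-step chain you give. Your explicit remark that \subargumentcompleteness, though phrased for direct subarguments, propagates to all subarguments by recursion is the right point to make precise and is consistent with the paper's treatment.
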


\RRR{Eventually}, our proposal is able to capture in a formal way the effect of the combination of these different kinds of uncertainty on argument conclusions, \PBB{in particular in the case where a multi-labelling semantics is adopted, as illustrated in the following example.}
\medskip 

\begin{myExample}[continues=example:basis]
\label{example:runninggg}
\RRR{Suppose that the research scientist, in light of some previous experiences, \RRT{estimates} that the acceptance of $\mathsf{C}$ is twice more credible than the acceptance of $\mathsf{D}$, whenever the choice is open, \PBB{namely in the cases where they are both labelled $\UND$ by grounded semantics in the \RR{example} at the end of Section \ref{sec:probSetting}. To \RRT{convey} this, she decides to adopt preferred $\{\IN, \OUT, \UND, \OFF\}$-labellings,}
\PBB{\RRT{and thus} the  labellings for arguments and statements} illustrated in \Figureref \ref{figcombined2}  are now deemed possible.}

\RRR{
\begin{figure}[ht!]
\centering
\begin{tabular}{ c c c c c c c c c c c c c c }
 & \small{$\mathsf{B1}$} & \small{$\mathsf{B2}$} & \small{$\mathsf{B}$} & \small{$\mathsf{C}$} & \small{$\mathsf{D}$} &  & \small{$\neg \mathsf{b1}$} & \small{$\neg \mathsf{b2}$} & \small{$\neg \mathsf{b}$} & \small{$\litc$} & \small{$\litd$} \\
$\labelling_1$ & \cellcolor[HTML]{C0C0C0}$\OFF$      & \cellcolor[HTML]{C0C0C0}$\OFF$       & \cellcolor[HTML]{C0C0C0}$\OFF$  & \cellcolor[HTML]{FFCCC9}$\OUT$ & \cellcolor[HTML]{9AFF99}$\IN$    &  &  \cellcolor[HTML]{C0C0C0}$\off$      & \cellcolor[HTML]{C0C0C0}$\off$       & \cellcolor[HTML]{C0C0C0}$\off$  & \cellcolor[HTML]{FFCCC9}$\out$ & \cellcolor[HTML]{9AFF99}$\inn$  \\
$\labelling_2$ &  \cellcolor[HTML]{C0C0C0}$\OFF$      & \cellcolor[HTML]{C0C0C0}$\OFF$       & \cellcolor[HTML]{C0C0C0}$\OFF$  & \cellcolor[HTML]{9AFF99}$\IN$ &     \cellcolor[HTML]{FFCCC9}$\OUT$  & & \cellcolor[HTML]{C0C0C0}$\off$      & \cellcolor[HTML]{C0C0C0}$\off$       & \cellcolor[HTML]{C0C0C0}$\off$  & \cellcolor[HTML]{9AFF99}$\inn$ &     \cellcolor[HTML]{FFCCC9}$\out$  \\
$\labelling_3$ & \cellcolor[HTML]{9AFF99}$\IN$      &\cellcolor[HTML]{C0C0C0}$\OFF$      & \cellcolor[HTML]{C0C0C0}$\OFF$ & \cellcolor[HTML]{FFCCC9}$\OUT$ & \cellcolor[HTML]{9AFF99}$\IN$   & &  \cellcolor[HTML]{9AFF99}$\inn$      &\cellcolor[HTML]{C0C0C0}$\off$      & \cellcolor[HTML]{C0C0C0}$\off$ & \cellcolor[HTML]{FFCCC9}$\out$ & \cellcolor[HTML]{9AFF99}$\inn$   \\
$\labelling_4$ & \cellcolor[HTML]{9AFF99}$\IN$      &\cellcolor[HTML]{C0C0C0}$\OFF$      & \cellcolor[HTML]{C0C0C0}$\OFF$ &  \cellcolor[HTML]{9AFF99}$\IN$     & \cellcolor[HTML]{FFCCC9}$\OUT$  & & \cellcolor[HTML]{9AFF99}$\inn$      &\cellcolor[HTML]{C0C0C0}$\off$      & \cellcolor[HTML]{C0C0C0}$\off$ &  \cellcolor[HTML]{9AFF99}$\inn$     & \cellcolor[HTML]{FFCCC9}$\out$   \\
$\labelling_5$ & \cellcolor[HTML]{C0C0C0}$\OFF$  &  \cellcolor[HTML]{9AFF99}$\IN$    & \cellcolor[HTML]{C0C0C0}$\OFF$ & \cellcolor[HTML]{FFCCC9}$\OUT$ & \cellcolor[HTML]{9AFF99}$\IN$    & &  \cellcolor[HTML]{C0C0C0}$\off$  &  \cellcolor[HTML]{9AFF99}$\inn$    & \cellcolor[HTML]{C0C0C0}$\off$ & \cellcolor[HTML]{FFCCC9}$\out$ & \cellcolor[HTML]{9AFF99}$\inn$  \\
$\labelling_6$ & \cellcolor[HTML]{C0C0C0}$\OFF$  &  \cellcolor[HTML]{9AFF99}$\IN$    & \cellcolor[HTML]{C0C0C0}$\OFF$ &  \cellcolor[HTML]{9AFF99}$\IN$   &    \cellcolor[HTML]{FFCCC9}$\OUT$  & & \cellcolor[HTML]{C0C0C0}$\off$  &  \cellcolor[HTML]{9AFF99}$\inn$    & \cellcolor[HTML]{C0C0C0}$\off$ &  \cellcolor[HTML]{9AFF99}$\inn$   &    \cellcolor[HTML]{FFCCC9}$\out$   \\
$\labelling_7$ &  \cellcolor[HTML]{9AFF99}$\IN$  &  \cellcolor[HTML]{9AFF99}$\IN$    & \cellcolor[HTML]{9AFF99}$\IN$ & \cellcolor[HTML]{FFCCC9}$\OUT$ & \cellcolor[HTML]{9AFF99}$\IN$   & & \cellcolor[HTML]{9AFF99}$\inn$  &  \cellcolor[HTML]{9AFF99}$\inn$    & \cellcolor[HTML]{9AFF99}$\inn$ & \cellcolor[HTML]{FFCCC9}$\out$ & \cellcolor[HTML]{9AFF99}$\inn$ \\
\end{tabular}
\caption{\PBB{Argument and statement labellings with preferred semantics.}}
\label{figcombined2}
\end{figure}
}

\noindent Then, the different argument labellings are assigned the following probabilities:\\


%

\RRR{
\begin{tabular}{lll}
$P(\set{\labelling_1})= ( 1- \vBone) \times (1 - \vBtwo) \times (1/3)$ & $(\approx 0.13)$ \\ 
$P(\set{\labelling_2})= ( 1- \vBone) \times (1 - \vBtwo) \times (2/3) $ & $(\approx 0.27)$ \\
$P(\set{\labelling_3})= \vBone \times (1 - \vBtwo) \times (1/3)$ & $(\approx 0.13)$\\ 
$P(\set{\labelling_4})= \vBone \times (1 - \vBtwo) \times (2/3)$ & $(\approx 0.27)$\\
$P(\set{\labelling_5})= (1 - \vBone) \times \vBtwo \times (1/3) $ & $(\approx 0.03)$\\
$P(\set{\labelling_6})= (1 - \vBone) \times \vBtwo \times (2/3)$ & $(\approx 0.07)$\\ 
$P(\set{\labelling_7})=  \vBone \times \vBtwo  $ & $(= 0.1)$\\
\\
\end{tabular}}

\noindent The following probabilistic labelling of statements can then be derived.\\

\RRR{
\begin{tabular}{lll}
$P(K_{\litb} = \inn)= P(\set{\labelling_7})$  & $(= 0.1)$ \\
$P(K_{\litb} = \off)= 1-P(\set{\labelling_7})$ & $(= 0.9)$ \\
$P(K_{\litc} = \inn) = P(\set{\labelling_2}) +  P(\set{\labelling_4}) +  P(\set{\labelling_6}) $  & $(\approx 0.61)$\\
$P(K_{\litc} = \out) = 1- P(K_{\litc} = \inn)$ & $(\approx 0.39)$ \\
$P(K_{\litd} = \inn) = P(\set{\labelling_1}) +  P(\set{\labelling_3}) +  P(\set{\labelling_5}) +  P(\set{\labelling_7})$ & $(\approx 0.39)$ \\
$P(K_{\litd} = \out) = 1- P(K_{\litd} = \inn)$ & $(\approx 0.61)$ \\
\\
\end{tabular}}

\RRR{
On the basis of arguments and their approximate probabilities of being included or accepted in an argumentative reasoning,  we have derived  that the overall probability  of acceptance of statement $\litc$ (\primaryquote{the program is running}) is approximately   $0.61$ while the probability of acceptance of $\litd$ (\primaryquote{the program is not running}) is estimated at $0.39$.
\hfill $\square$}
\end{myExample}

\RRR{To sum up this section, probabilistic labellings can capture the constellations approach but \RRT{not vice versa} \RRT{because} probabilistic labellings yield a more fine-grained probabilistic evaluation of argument acceptance statuses. Probabilistic labellings  can also  be put in correspondence with the epistemic approach, and \PBX{provide a formal} tool to back or question its assertions. Finally, the constellations and epistemic approaches can be seamlessly combined using probabilistic labellings. }

\section{\PBX{Discussion of Related Literature}}
\label{related}
 \RRT{We relate
in this section} our proposal with \PBB{other relevant literature on probabilistic argumentation}.
We classify these works into three  groups: probabilistic structured argumentation (Subsection \ref{subsection:structuredpa}), probabilistic abstract argumentation (Subsection \ref{subsection:abstractpa}), and other approaches connecting argumentation and probability at different levels  (Subsection \ref{subsec:others}).

\subsection{Probabilistic structured argumentation}
\label{subsection:structuredpa}

A significant amount of work on probabilistic argumentation concerned structured argumentation where the origin and structure of arguments are explicitly dealt with.  These are discussed next, distinguishing between those which, like in our proposal, use or can be easily related to  abstract argumentation frameworks for argument or statement acceptance evaluation and those which have little or no \PB{relation with this formalism}.

\subsubsection{Probabilistic structured argumentation related to Dung's framework} 

Several structured probabilistic argumentation constructs in the literature use or can be easily related to Dung's framework, as a component for the evaluation of argument or statement acceptance in the context of a staged process analogous to the one presented in this paper.

For instance,  \RRR{the probability of the defeasible status of a conclusion is \RRT{employed} in \cite{Roth:2007} to determine strategies to maximise the chances of winning legal disputes, in the context of a} multi-layer argumentation formalism based on Defeasible Logic, which can be interpreted in terms of Dung's framework  \cite{jlc:argumentation,comma16}. In this work, the probability of the defeasible status of a conclusion is the sum of cases where this status is entailed. A case is a set of premises which are assumed independent; the treatment of uncertainty in this context can be seen as  a restricted account of \PTFs.

\RRR{The probability to win a legal dispute, and thus the probability that arguments and  statements get accepted by an adjudicator is explored  in \cite{Riveret:2007}, on the basis of a probabilistic variant of  a rule-based argumentation framework  akin to the framework in \cite{doi:10.1080/11663081.1997.10510900}.} The proposal has a focus on the computation of probabilities of acceptance reflecting moves in a dialogue, with a particular emphasis on the distinction between the probability of the event of construction of an argument and the probability of acceptance of an argument. 
Premises of  arguments are assumed independent, so that the probability of an argument is the product of the probability of its premises. 
\cite{DBLP:journals/ail/RiveretRS12} further developed the idea of computing the probability of the event of an argument to a rule-based argumentation framework where premises are rules, a treatment which is similar to \PTFs. The setting in \cite{DBLP:journals/ail/RiveretRS12} relaxes the assumption of independent rules by associating   a potential \RRT{with} theories. The potentials are then learned \RRT{through} reinforcement learning to match the softmax policies of reinforced learning agents. 

\RRR{A probabilistic development of a rule-based argumentation system inspired from the ASPIC$^+$ formalism is proposed in \cite{DBLP:conf/at/Rienstra12}}. The framework shares with \cite{DBLP:journals/ail/RiveretRS12} multiple intuitions and similar results, in particular a treatment of probabilistic rules which is close to our \PTFs, \RRT{and} a notion of theory state which corresponds to our notion of subtheory. 

A probabilistic argumentation formalism \RRT{built} on Assumption-Based Argumentation \cite{Bondarenko:1997:10.1016/S0004-3702(97)00015-5} is introduced \RRR{in \cite{Dung:2010:TAJ}} for jury-based dispute resolution.
Arguments are constructed from a set of probabilistic rules and they are  evaluated in the context of probabilistic frameworks \RRT{with} Dung's grounded semantics.
Every juror has a different probability space  to model the fact that they may reach different probabilistic conclusions for the same case.

The present work can be regarded as a systematic generalisation of the ideas presented in these papers at a more initial level, with dedicated formalisms and/or within specific application contexts. It provides a framework where these proposals can be placed, their structure analysed in a principled way and their properties investigated in a domain independent manner.

\subsubsection{Other probabilistic structured  argumentation}

A few other works in the literature \PBX{deal with} probabilistic argumentation within formalisms \RRT{which are} not relying on Dung's framework for the evaluation of arguments.

A probabilistic setting for non-monotonic reasoning is  \RRR{investigated in \cite{DBLP:journals/japll/Haenni09}}.
In this \PBX{proposal}, uncertainty concerns the truth values assumed by some propositional variables called probabilistic variables which are a strict subset of the whole set of variables of interest for an agent. A total assignment of truth values to the probabilistic variables is called a scenario, and basically the agent is uncertain about which scenario to choose.
The agent is also equipped with a knowledge base $\Phi$, which is assumed to be certain, and the chosen scenario must be consistent with $\Phi$, i.e. the scenario and $\Phi$ together must not entail contradictory conclusions.
A consistent scenario is called an argument for a conclusion $\lit$ if, together with $\Phi$, it entails the truth of the conclusion, while it is called counterargument for $\lit$ if, together with the given knowledge base, it entails the falsity of the conclusion. In this sense, an argument can be \RRT{understood} as a consistent set of assumptions, somehow similarly to assumption-based argumentation, but with the difference that this set of assumptions, being a scenario, is exhaustive. 
Note that inconsistent scenarios are ruled out and that a consistent scenario may neither be an argument nor a counterargument for $\lit$, if it does not entail neither the truth nor the falsity of $\lit$.
Moreover, a scenario may be, at the same time, an argument for several conclusions and a \RT{counterargument} for several others. According to this peculiar notion of argument, there is no notion of argument acceptance in this context, since every scenario stands alone and the arguments and counterarguments corresponding to each scenario cannot be in conflict. For this reason the approach focuses on argument conclusions: on the basis of the probability of each scenario, it is possible to define the probability of a conclusion being supported, being rejected and of being neither supported nor rejected. From these values the notions of degree of support, degree of possibility, and degree of ignorance of a conclusion are defined.
While, as  evidenced above, this \PBX{proposal} adopts a quite specific notion of argument with respect to the literature, it may be remarked that some of its elements can be put in correspondence with our approach: uncertainty about the probabilistic variables can be regarded as a quite restricted case of \PTF, \RRT{and} the degrees of support, possibility, and ignorance can be connected  to, respectively, the probability for a statement of being labelled $\inn$, of not being labelled $\out$ and of being labelled $\unp$.

\RRR{ The probability on statement statuses is catered for in \cite{TLP:9011718}}, tackling similar issues as introduced in \cite{Riveret:2007}, but without concerns for reflecting the structure of dialogues and without using Dung's framework. The authors defined  a probabilistic  argumentation logic  and implemented it with  the language CHRiSM \cite{DBLP:journals/corr/abs-1007-3858}, which is a rule-based probabilistic logic programming language based on Constraint Handling Rules (CHR) \cite{Fruhwirth:2009:CHR:1618539} associated \RRT{with} a high-level probabilistic programming language called PRISM \cite{DBLP:journals/jiis/Sato08a}\footnote{This language must not be confused with  the    system PRISM for probabilistic model checking developed by \cite{Hinton:2006:PTA:2182014.2182053}.}. They discussed how it can be seen as a probabilistic generalization of the defeasible logic proposed by \cite{Nute01defeasiblelogic}, and showed how it provides a method to determine the initial probabilities from a given body of precedents. 
Rules in CHRiSM have an attached probability and each rule is applied with respect to its probability. Thus, also in this case\RRT{,} probabilistic uncertainty can \RRT{be related to a sort of \PTF}. 
%

\subsection{Probabilistic abstract argumentation}
\label{subsection:abstractpa}
Several works focus on extending Dung's formalism of abstract argumentation frameworks with probabilistic information, independently of any underlying mechanism of argument construction and of any intended goal of statement evaluation.

\subsubsection{Constellations approach}

\PB{The proposal in \cite{DBLP:conf/tafa/LiON11}, which can be regarded as the starting point of the constellations approach,}  makes a strong assumption of independence \RRT{on} the inclusion of arguments, as already commented in Section \ref{secconstepa}.
It has also to be mentioned that, differently from our proposal,  attacks are also  uncertain  in \cite{DBLP:conf/tafa/LiON11}: given a pair of arguments $A$ and $B$ the fact that $A$ attacks $B$ is subject to a probabilistic evaluation and this attack may be present or not in different frameworks where both $A$ and $B$ are included.
\RRT{In the rule-based construction of arguments} adopted in \RRT{the present paper}, we do not consider uncertainty about attacks, since in our context the fact that an argument attacks another is deterministic.
Encompassing uncertainty about attacks through a \RRT{generalised} labelling approach, where also attacks are labelled, does not appear to pose specific technical difficulties and can be \RRT{exploited in}  future work.

\RRT{In the context of the formal framework of \cite{DBLP:conf/tafa/LiON11},} further work \cite{DBLP:conf/ijcai/FazzingaFP13} \RRT{investigates} the computational complexity of computing the probability of a family of binary predicates, whose truth value depends on the actual structure of the argumentation framework. These predicates concern the fact that a given set of arguments is an extension according to a given semantics, so they are parametric with respect to the chosen semantics. Admissible and stable semantics turn out to be tractable in this respect, while other semantics are not.
This analysis has been extended in \cite{DBLP:journals/tocl/FazzingaFP15} where it is shown that lifting the assumption of independence leads to intractability in all cases. \RRT{The analysis is} complemented in \cite{Fazzinga:2016:EEP:2868292.2868384} by  \RRT{Monte-Carlo simulations} for the efficient estimation of the probability values in the intractable cases.
\RRR{In a parallel line of research,  the semantics of probabilistic argumentation is formulated in \cite{Liao2015,DBLP:journals/corr/LiaoXH16} by characterising subgraphs in relation to an extension, possibly leading  to more efficient algorithms to deal with the complexity of the constellations approach.}

The limits of the independence assumption about the inclusion of arguments is also acknowledged  in \cite{Lietalrelaxing}, \RRR{which utilises evidential argumentation frameworks/systems \cite{DBLP:conf/comma/OrenN08,DBLP:conf/comma/PolbergO14} to lift the assumption.}
 The formalism uses a special argument, denoted as $\eta$, to represent evidence and introduces a relation of support between arguments, which is regarded as uncertain, leading to the notion of Probabilistic Evidential Argumentation Frameworks (PrEAFs), where a probability value is associated \RRT{with} each support. 
Our proposal does resort to neither specific notions like incontrovertible premises nor to a specific evidence-based interpretation of the notion of support.  We rather point out that the generic notion of subargument creates by itself a dependence among arguments and induces some legality constraints, as explained in Section \ref{sec:genlabelling}.

%

The work presented in \cite{DBLP:journals/cas/Dondio14} shows some closer similarities to ours.
First, it introduces a notion of probabilistic argumentation framework where a joint probability distribution $P$ over the set of arguments is given.
For any argument $A$, $P(A)$ \RRT{denotes} the probability that $A$ holds \primaryquote{in isolation, before the dialectical process starts.} This notion is coherent with the \RRR{constellations} approach and with the kind of uncertainty we capture with $\set{\ON,\OFF}$-labellings.
It is worth remarking that, by assuming a joint probability distribution over arguments, this work does not rely on any independence assumption.
Similarly to our approach, \RRT{\cite{DBLP:journals/cas/Dondio14} \RRT{advances} a distinct probability assessment with reference to $\set{\IN,\OUT,\UND}$-labellings for arguments}.
Several differences are however worth pointing out.
First, at a representation level, the label $\OUT$ \RRT{also covers} the cases where an argument is not included in a framework, namely the cases which we separately label as $\OFF$.
Second, only grounded and preferred semantics are explicitly considered \RRT{in \cite{DBLP:journals/cas/Dondio14}}.
Third, and more importantly, in \cite{DBLP:journals/cas/Dondio14}, the probabilities of acceptance, rejection and undecidedness of an argument $A$ (i.e. the probabilities that $A$ is labelled $\IN$, $\OUT$, or $\UND$ respectively) are assumed to be computable from the initial probabilities of the arguments in isolation (and indeed an algorithm is provided to carry out this computation), while this is not the case in our approach.
This point deserves a specific comment also because an analogous assumption is adopted in other works belonging to the \RRR{constellations} approach \cite{DBLP:conf/tafa/LiON11,Fazzinga:2016:EEP:2868292.2868384} in the context of extension-based semantics. 
In our approach,  a semantics can produce multiple labellings for a given (sub) framework. \RR{So,} an agent can assign different probability values to each labelling. This captures uncertainty about \RR{the selection of acceptance labellings} even when there is no uncertainty about the inclusion of arguments in the frameworks; \RR{this type of acceptance uncertainty} is not addressed in \cite{DBLP:journals/cas/Dondio14}.
To exemplify, if for every argument $A$, $P(A)=1$ or $P(A)=0$, i.e. there is only one possible (and actually certain) subframework of the original framework, in \cite{DBLP:journals/cas/Dondio14}  the probability of acceptance, rejection or undecidedness of every arguments is either $1$ or $0$ in turn (this holds for all the evaluation alternatives in the paper, namely grounded, sceptical preferred, and credulous preferred).
As a consequence of the different assumptions on the nature of uncertainty to be represented, this constraint does not hold in our approach as far as multiple status semantics, like preferred semantics, are concerned.
Clearly, these different assumptions may be appropriate in different domains: comparing the suitability of the various existing approaches to the needs of distinct application contexts is left to future work.

\RRR{\PBB{The interest \RRT{in combinations of argumentation and probability} is also witnessed by the MARF (Markov Argumentation Random Field) software system based on the combination of abstract argumentation and Markov random fields presented in \cite{DBLP:conf/aaai/TangOS16}}. In \cite{DBLP:conf/aaai/TangOS16}, \PBB{argument} acceptability status can take four values: accepted (A), rejected (R), undecided (U), and ignored (I).
\PBB{and the system computes a probabilistic acceptability distribution on these values. 
Exploring in detail the relationships between these values and our labels and investigating the integration of our approach with Markov Random Field theory appear to be interesting lines of future development.}}

\subsubsection{Epistemic approach}

\PB{Besides the constellations approach}, other works interpret probabilities associated \RRT{with} abstract arguments according to the epistemic approach \RRT{which is originally} introduced in \cite{DBLP:conf/ecai/Thimm12} and further developed in \cite{DBLP:journals/corr/HunterT14,KRthimm:16}.
An equational approach to probabilistic abstract argumentation which bears some similarity with the epistemic approach of  \cite{DBLP:conf/ecai/Thimm12} is \RRT{studied} in \cite{Gabbay2015}, where a syntactical and a semantical method to the definition of the probabilities of arguments are proposed.
\cite{DBLP:conf/comma/BaroniGV14} explored an alternative setting for epistemic probabilities in abstract argumentation \RRT{by} using De Finetti's theory of subjective probabilities \cite{de1974theory} rather than Kolmogorov's axiomatization. Moreover they preliminarily investigated the extension of the approach to the case of imprecise probabilities \cite{Walley91}.

The variety of \RRT{flavours} in probabilistic argumentation presented in the above cited  papers suggests that the interpretation of the probability of an argument is, in an epistemic sense,  a largely open issue and that a finer classification could  be \RRT{devised}, where the generic notion of epistemic probability considered up to now in the literature is replaced by a more detailed taxonomy.
While this interesting direction of investigation is beyond the scope of this paper, we remark that the above mentioned papers typically resort to an intuitive explanation of the intended meaning of epistemic probability and assume as a starting point that the epistemic probability is given, leaving implicit the underlying probability space.
As discussed in Section  \ref{secconstepa}, our formalism is able to capture, at a formal level, an assignment of epistemic probabilities to arguments as a special case of  probabilistic $\{\IN, \OUT, \UND\}$-labellings based on a well-identified probability space. In the same formal setting, an epistemic probability assignment corresponding to a generic  probabilistic $\{\IN, \OUT, \UND\}$-labelling is identified, showing that we recover, in our context, some of the desirable technical properties introduced in other papers.
In this sense, we do not commit to any specific intuitive interpretation of epistemic probabilities but rather provide a reference formal framework which can be \RRT{employed} for further analyses of this notion not only at the abstract level, on which the above cited papers are focused, but also concerning its origin at the structured level and its impact on the evaluation of argument conclusions.

\subsection{Other connections between argumentation and probability}\label{subsec:others}

While the approach proposed in this paper and those previously overviewed focus on modelling probabilistic uncertainty in the context of the various steps of formal argumentative reasoning, other kinds of connections and interactions between argumentation and \RRT{probabilistic notions} have been \RRT{conceived; we discuss them} in the following.

At a foundational level, some works use basic probabilistic notions to give an interpretation of  defeasible rules  in argumentative reasoning.
In particular Pollock \cite{Pollock:1995} uses the notion of statistical syllogism to define \emph{prima facie reasons}. Briefly, in the simplest form, if $F$ holds and the conditional probability that $G$ holds given that $F$ holds is above a given threshold (typically $0.5$) then there is a prima facie reason to (defeasibly) infer $G$.
In this view, basic probability concepts (in particular the notion of conditioning) are, in a sense, regarded as more fundamental than argumentative notions which are a sort of derived concept, at least from an epistemological point of view.
As discussed by Verheij in \cite{DBLP:conf/comma/Verheij14}, Pollock however did \RRT{criticise} other aspects of probabilistic reasoning leading to a sort of \primaryquote{anti-probabilistic stance}. To reconcile Pollock's foundational \PBX{perspective} with standard probability laws\RRT{,} Verheij proposes a reformulation of the notion of prima facie reason, \RRT{and shows how}  it addresses some of Pollock's criticisms.
In the same foundational strain, Verheij \cite{DBLP:conf/jelia/Verheij12} proposes to combine basic notions from nonmonotonic reasoning and probability theories to \RRT{formalise} ampliative arguments, namely defeasible arguments going \primaryquote{beyond their premises}.
Again, a standpoint is assumed that argumentative notions can be interpreted in terms of more fundamental (or more primitive) concepts, probability being one of them.
Discussing foundational issues is beyond the scope of the present paper, which deals with combining probability and argumentation at a different level: independently of the possible interpretation of defeasible rules in probabilistic terms, there can be uncertainty about which rules are adopted, which arguments are built, which of the built arguments are accepted and so on. In this sense\RRT{,} our study of probabilistic uncertainty inside the process of argumentative reasoning can be \RRT{understood} as complementary to the study of how the defeasible rules \RRT{in} argumentative reasoning can be interpreted in probabilistic terms.

Connections between argumentation and one of the most popular probability-based formalisms, namely Bayesian networks, have also attracted a lot of attention in the recent years, also due to some basic similarities in the graph-based representations adopted in both areas.
For instance, in \cite{Hepler:2007} Wigmore charts (a graphical formalism useful for \primaryquote{describing and organizing the available evidence in a case and in following reasoning processes through sequential steps}) and Bayesian networks are compared by using them to model the infamous Sacco and Vanzetti case. Several pros and cons of both formalisms are discussed and then attention is focused on making the construction of Bayesian networks for complex legal cases more manageable by introducing a hierarchical object-oriented approach \RRT{utilising} \primaryquote{small modular networks (or network fragments) as building blocks}.
This idea is further pursued in \cite{DBLP:journals/cogsci/FentonNL13} where in order to avoid \emph{ad hoc} construction of Bayesian networks several basic structures, called \emph{idioms}, for the representation of legal cases are introduced. Differently from \cite{Hepler:2007}, this work does not assume a specific argument representation.
In the same line, but \PBX{adopting} a different formal background, \cite{Grabmair:2010:PSC:1860828.1860853} proposes a translation from the Carneades argument model into a Bayesian network providing it a probabilistic semantics.

Other approaches follow, \RRT{so to say}, the converse direction: they assume that a Bayesian network representing a case is given and investigate how to produce an argument-based representation for it \cite{Vreeswijk2005,Keppens2012}, under the assumption that this provides a simpler and more intuitive presentation.
For example, given a Bayesian network, \PBB{in
 \cite{DBLP:conf/jurix/TimmerMPRV14,Timmer2015,Timmer:2015:SAC} a support graph is derived} whose nodes can be associated with a numerical support, based on the original probabilistic information \cite{Timmer:2015:SAC}. Arguments can then be identified on the basis of the support graph \cite{Timmer2015}.

Further works \cite{Verheij:2014,Verheij:2016} \RRT{incorporate} a third kind of representation commonly adopted in legal reasoning, namely \emph{scenarios} and explore various kind of (pairwise and threewise) connections and combinations between them, probability and argumentation.

All the above cited works assume as a starting point that argumentation and probability theory provide complementary/competing formalisms for representing the same situation and, investigate, from a general knowledge representation perspective, their relative merits and disadvantages, and various options for connecting, deriving or combining these \RRT{formalisms}. \RRT{In this context,} argumentation formalisms are \RRT{essentially seen}  as descriptive tools (as in the case of Wigmore charts) while the (numeric) assessment of conclusions is typically assigned \RRT{by} Bayesian networks. 

Our approach \RRT{is different in that} it does not \PBX{follow the perspective of two} competing formalisms and involves a simple but full-fledged argumentation-based reasoning process where argument and conclusion assessments are carried out on the basis of argumentation semantics and the related notions of justifications. As already remarked, probability theory is used with the specific purpose of capturing various kinds of uncertainty that can be present in this process.
Addressing knowledge representation issues and inter-formalism comparisons is beyond the scope of the present paper and represents an interesting direction of future work.

\section{Conclusion}
\label{sec:conclusion}

The main contribution of the paper is a labelling-oriented framework for probabilistic argumentation.
The approach builds on an articulated account of an argumentation process, involving the four main stages of rule-based argument construction, argument graph definition, argument evaluation, and  statement evaluation.
We have considered various forms of uncertainty which can be present in these stages, casted them into a formal probabilistic setting and analysed their relationships.


Differently from other proposals in the literature \RRT{employing} the well-established Dung's abstract argumentation frameworks, we have adopted  semi-abstract argumentation graphs where the key role of the subargument relation is taken into account to \RRT{express} dependences between the events of inclusion/\RRT{exclusion} of arguments.
Moreover, uncertainty about the inclusion of arguments can be combined with uncertainty about their acceptance through the  notion of $\{\IN, \OUT, \UND, \OFF\}$-labellings under suitable legality constraints. 
In particular, we have shown that the proposed approach can capture key concepts underlying the so-called constellations and  epistemic approaches or a combination of both, and \RRR{that it can} ease the analysis of their properties. Some of these properties are easily retrieved by derivation in our context, while \PBB{it has been evidenced that the status of other properties is more open to discussion.}

Along the road,  we also evidenced key correspondences between probabilistic settings at different levels. Since different models of probabilistic
argumentation found in the literature are akin to these settings, we believe that these correspondences can be \RRT{helpful} to position novel proposals too within the whole picture of probabilistic argumentation. \smallskip

At a general level, the main lessons learnt along this journey concern diversity and unification in this research field.
Diversity arises from the fact that \primaryquote{probabilistic argumentation} is far from being a univocal term, leading in the literature to statements like \primaryquote{What is meant by the probability of an argument holding is an open question. Indeed, there seem to be various ways that we could answer this.} \cite{DBLP:journals/ijar/Hunter13}.

To account for this diversity while avoiding the risk of confusion and ambiguity, two aspects turned out to be crucial: analysing the argumentation process in its entirety, from argument construction to statement assessment, and specifying in detail the probability space (in particular the sample space) whenever some form of uncertainty \RRT{arises}.
Analysing all phases of the argumentation process allows to situate distinct forms of uncertainty \primaryquote{in the right place}, and detailed specification of the probability space avoids ambiguity about their nature, since the events one is uncertain about are formally identified. Simple as they are, these specifications are most often left implicit or ambiguous in the literature.
We believe therefore that both aspects provide a contribution to the need of conceptual clarity in the field.
In particular, they are essential to carry out an analysis \RRT{of} the relationships and possible dependences between the different forms of uncertainty, which is only partially doable when they are \RRT{held} in isolation or are not clearly specified.
An example of this kind of analysis is the distinction between the probability of a set of arguments being an extension \RRT{as usually conceived}  in the constellations approach and the probability of a labelling in our probabilistic labelling frames.
While, at a superficial level, they could be \RRT{seen} as similar, we have shown that the former can be regarded as a derived notion in the context of probabilistic graph frames, while the latter refers to a completely distinct form of uncertainty in the context of probabilistic labelling frames.
Suggestions for a critical reappraisal of some concepts adopted in non-probabilistic argumentation are another byproduct of this analysis, as exemplified by the discussion about probabilistic argument justification labellings.

While accounting for a diversity of uncertainties is essential, it has also to be acknowledged that they may be present together, leading to the unification problem: how to combine them within a formal framework, able to \RRT{generalise} some existing approaches and to provide a uniform treatment of diverse uncertainties. Probabilistic labellings turned out to be a suitable tool to this purpose.\smallskip

Future directions \RRT{are} multiple. We may \RRT{lay down} compact representations \RRT{through} probabilistic rules or factors,  paving the way to possible integration of probabilistic argumentation with well-established probabilistic graphical models and machine learning techniques to address the parametrisation and valuation of such models. 
In addition, we mentioned possible use of non-classical approaches to probability (e.g. De Finetti's subjective probabilities) in this context as an alternative to the classical definition of probability function based on Kolmogorov axioms we adopted here. 
Moreover, the argumentation process in this paper is rule-based: extending our analysis to other forms of argumentation (e.g. dialogical) is another direction of future work.
Finally, our investigation is entirely formal and independent from any particular domain. However, some domains may require to capture some particular features.
For example, as mentioned in \PBX{Section \ref{related}}, a variety of research proposed \RRT{a combination of different}  argumentation and probabilistic models to \RRT{address} diverse aspects of uncertainty in legal reasoning.
More generally, it would be interesting to see how probabilistic labellings could better help to characterise diverse aspects of uncertainty in specific application domains.

\section*{Acknowledgement}
\noindent  Yang Gao was supported by National Natural Science  Foundation  of  China  (NSFC)  grant 61602453. \RRT{R\'egis Riveret} was supported by the Marie Curie Intra-European Fellowship PIEF-GA-2012-331472. \RRT{Antonino Rotolo was supported by the European Union’s Horizon 2020 research and innovation programme under the Marie Skłodowska-Curie grant agreement No 690974 for the project MIREL: MIning and REasoning with Legal texts. The final publication is available at Springer via http://dx.doi.org/10.1007/s10472-018-9574-1}

\section*{Notation}
\label{section:notation}

\begin{table}[ht!]
\centering
\caption*{Some key notations used in the paper.}
\begin{tabular}{l | p{9.5cm}}
  $T$          & A  defeasible theory \\
 $G$          & An argumentation graph \\
\hline\\[-2ex]
  $\llabels$          & A set of labels for arguments \\
$l$ & A label for arguments  \\
$\labarg$ & A labelling function of arguments  \\

$L_A$ & The random labelling of an argument $A$ \\

\textbf{L} &  A set of random labellings of arguments  \\

\hline\\[-2ex]
$\klabels$          & A set of labels for literals \\
$k$ & A label for literals  \\
$\lablit$ & A labelling function of literals   \\
$K_\varphi$ & The random labelling of literal $\lit$  \\
$\textbf{K}$ &  A set of random labellings of literals \\
\hline\\[-2ex]
  $\setofalllabels{G}{\llabels}$ & The set of $\llabels$-labellings of the argumentation graph $G$\\
  $\mathcal{S}$ &  An   $X$-$\llabels$-labelling specification         \\
$$\crit{X}{G}{\llabels}$$ &The set of $X$-$\llabels$-labellings of the argumentation graph $G$, identified by the $X$-$\llabels$-labelling specification  \\
\hline\\[-2ex]
\PTF & A probabilistic theory frame\\
\PGF & A probabilistic graph frame\\
\PAF & A probabilistic labelling frame\\
\PAG & A probabilistic argumentation graph (from \cite{DBLP:journals/ijar/Hunter13})\\
\PEF & A probabilistic epistemic frame\\
\end{tabular}
\label{notationtable}
\end{table}


\bibliographystyle{spmpsci}       

\begin{thebibliography}{10}
\providecommand{\url}[1]{{#1}}
\providecommand{\urlprefix}{URL }
\expandafter\ifx\csname urlstyle\endcsname\relax
  \providecommand{\doi}[1]{DOI~\discretionary{}{}{}#1}\else
  \providecommand{\doi}{DOI~\discretionary{}{}{}\begingroup
  \urlstyle{rm}\Url}\fi

\bibitem{DBLP:journals/aim/AtkinsonBGHPRST17}
Atkinson, K., Baroni, P., Giacomin, M., Hunter, A., Prakken, H., Reed, C.,
  Simari, G.R., Thimm, M., Villata, S.: Towards artificial argumentation.
\newblock {AI} Magazine \textbf{38}(3), 25--36 (2017)

\bibitem{DBLP:journals/ker/BaroniCG11}
Baroni, P., Caminada, M., Giacomin, M.: An introduction to argumentation
  semantics.
\newblock Knowledge Engineering Review \textbf{26}(4), 365--410 (2011)

\bibitem{DBLP:journals/ijar/BaroniCGG11}
Baroni, P., Cerutti, F., Giacomin, M., Guida, G.: {AFRA:} argumentation
  framework with recursive attacks.
\newblock Int. J. Approx. Reasoning \textbf{52}(1), 19--37 (2011)

\bibitem{DBLP:conf/comma/BaroniGV14}
Baroni, P., Giacomin, M., Vicig, P.: On rationality conditions for epistemic
  probabilities in abstract argumentation.
\newblock In: Proc. of the 5th Inf. Conf. on Computational Models of Argument
  (COMMA 2014), \emph{Frontiers in Artificial Intelligence and Applications},
  vol. 266, pp. 121--132. {IOS} Press (2014)

\bibitem{DBLP:conf/ecai/BaroniGR16}
Baroni, P., Governatori, G., Riveret, R.: On labelling statements in
  multi-labelling argumentation.
\newblock In: Proc. of the 22nd European Conf. on Artificial Intelligence (ECAI
  2016), \emph{Frontiers in Artificial Intelligence and Applications}, vol.
  285, pp. 489--497. {IOS} Press (2016)

\bibitem{BenchCapon2007619}
Bench-Capon, T., Dunne, P.E.: Argumentation in artificial intelligence.
\newblock Artificial Intelligence \textbf{171}(10-15), 619 -- 641 (2007)

\bibitem{Bondarenko:1997:10.1016/S0004-3702(97)00015-5}
Bondarenko, A., Dung, P.M., Kowalski, R.A., Toni, F.: An abstract,
  argumentation-theoretic approach to default reasoning.
\newblock Artificial Intellgence \textbf{93}, 63--101 (1997)

\bibitem{DBLP:journals/ijar/CayrolL13}
Cayrol, C., Lagasquie{-}Schiex, M.: Bipolarity in argumentation graphs: Towards
  a better understanding.
\newblock Int. J. Approx. Reasoning \textbf{54}(7), 876--899 (2013)

\bibitem{Cohen:2015:AAA:2849973.2850374}
Cohen, A., Gottifredi, S., Garc\'{\i}a, A.J., Simari, G.R.: An approach to
  abstract argumentation with recursive attack and support.
\newblock J. Applied Logic \textbf{13}(4), 509--533 (2015)

\bibitem{de1974theory}
De~Finetti, B.: Theory of probability: a critical introductory treatment.
\newblock Wiley (1974)

\bibitem{DBLP:journals/cas/Dondio14}
Dondio, P.: Toward a computational analysis of probabilistic argumentation
  frameworks.
\newblock Cybernetics and Systems \textbf{45}(3), 254--278 (2014)

\bibitem{dung95acceptability}
Dung, P.M.: On the acceptability of arguments and its fundamental role in
  nonmonotonic reasoning, logic programming and n-person games.
\newblock Artificial Intelligence \textbf{77}(2), 321--358 (1995)

\bibitem{Dung:2010:TAJ}
Dung, P.M., Thang, P.M.: Towards (probabilistic) argumentation for jury-based
  dispute resolution.
\newblock In: Proc. of the 3rd Int. Conf. on Computational Models of Argument
  (COMMA 2010), \emph{Frontiers in Artificial Intelligence and Applications},
  vol. 216, pp. 171--182. IOS Press (2010)

\bibitem{Dung:2014:CCL:2655713.2655716}
Dung, P.M., Thang, P.M.: Closure and consistency in logic-associated
  argumentation.
\newblock J. Artificial Inteligence Research \textbf{49}(1), 79--109 (2014)

\bibitem{DBLP:conf/ijcai/FazzingaFP13}
Fazzinga, B., Flesca, S., Parisi, F.: On the complexity of probabilistic
  abstract argumentation.
\newblock In: Proc. of the 23rd Int. Joint Conf. on Artificial Intelligence,
  IJCAI 2013, pp. 898--904. AAAI Press (2013)

\bibitem{DBLP:journals/tocl/FazzingaFP15}
Fazzinga, B., Flesca, S., Parisi, F.: On the complexity of probabilistic
  abstract argumentation frameworks.
\newblock {ACM} Trans. Comput. Log. \textbf{16}(3), 22 (2015)

\bibitem{Fazzinga:2016:EEP:2868292.2868384}
Fazzinga, B., Flesca, S., Parisi, F.: On efficiently estimating the probability
  of extensions in abstract argumentation frameworks.
\newblock Int. J. Approx. Reasoning \textbf{69}, 106--132 (2016)

\bibitem{DBLP:journals/cogsci/FentonNL13}
Fenton, N.E., Neil, M., Lagnado, D.A.: A general structure for legal arguments
  about evidence using bayesian networks.
\newblock Cognitive Science \textbf{37}(1), 61--102 (2013)

\bibitem{Fruhwirth:2009:CHR:1618539}
Fruhwirth, T.: Constraint Handling Rules, 1st edn.
\newblock Cambridge University Press (2009)

\bibitem{Gabbay2015}
Gabbay, D.M., Rodrigues, O.: Probabilistic argumentation: An equational
  approach.
\newblock Logica Universalis \textbf{9}(3), 345--382 (2015)

\bibitem{GarciaSimari2014}
Garcia, A.J., Simari, G.R.: Defeasible logic programming: {DeLP} servers,
  contextual queries, and explanations for answers.
\newblock Argument {\&} Computation \textbf{5}(1), 63 -- 88 (2014)

\bibitem{jlc:argumentation}
Governatori, G., Maher, M.J., Antoniou, G., Billington, D.: Argumentation
  semantics for defeasible logic.
\newblock J. Log. Comput. \textbf{14}(5), 675--702 (2004)

\bibitem{Grabmair:2010:PSC:1860828.1860853}
Grabmair, M., Gordon, T.F., Walton, D.: Probabilistic semantics for the
  {C}arneades argument model using bayesian networks.
\newblock In: Proc. of the 3rd Int. Conf. on Computational Models of Argument
  (COMMA 2010), \emph{Frontiers in Artificial Intelligence and Applications},
  vol. 216, pp. 255--266. IOS Press (2010)

\bibitem{DBLP:journals/japll/Haenni09}
Haenni, R.: Probabilistic argumentation.
\newblock J. Applied Logic \textbf{7}(2), 155--176 (2009)

\bibitem{Hepler:2007}
Hepler, A.B., Dawid, P., Leucari, V.: Object-oriented graphical representations
  of complex patterns of evidence.
\newblock Law, Probability \& Risk \textbf{6}(1-4), 275--293 (2007)

\bibitem{Hinton:2006:PTA:2182014.2182053}
Hinton, A., Kwiatkowska, M., Norman, G., Parker, D.: {PRISM}: A tool for
  automatic verification of probabilistic systems.
\newblock In: Proc. of the 12th Int. Conf. on Tools and Algorithms for the
  Construction and Analysis of Systems (TACAS 2006), \emph{LNCS}, vol. 3920,
  pp. 441--444. Springer-Verlag (2006)

\bibitem{DBLP:journals/ijar/Hunter13}
Hunter, A.: A probabilistic approach to modelling uncertain logical arguments.
\newblock Int. J. Approx. Reasoning \textbf{54}(1), 47--81 (2013)

\bibitem{DBLP:conf/ecai/HunterT14a}
Hunter, A., Thimm, M.: Probabilistic argumentation with epistemic extensions.
\newblock In: Proc. of the Int. Workshop on Defeasible and Ampliative
  Reasoning, vol. 1212. CEUR-WS.org (2014)

\bibitem{DBLP:journals/corr/HunterT14}
Hunter, A., Thimm, M.: Probabilistic argumentation with epistemic extensions
  and incomplete information.
\newblock CoRR \textbf{abs/1405.3376} (2014)

\bibitem{KRthimm:16}
Hunter, A., Thimm., M.: On partial information and contradictions in
  probabilistic abstract argumentation.
\newblock In: Proc. of the 15th Int. Conf. on Principles of Knowledge
  Representation and Reasoning (KR 2016), pp. 53--62 (2016)

\bibitem{DBLP:journals/logcom/JakobovitsV99}
Jakobovits, H., Vermeir, D.: Robust semantics for argumentation frameworks.
\newblock J. Log. Comput. \textbf{9}(2), 215--261 (1999)

\bibitem{Jech}
Jech, T.: {S}et {T}heory, 3rd millennium edition, revised and expanded edn.
\newblock Springer (2003)

\bibitem{Keppens2012}
Keppens, J.: Argument diagram extraction from bayesian networks.
\newblock Artificial Intelligence and Law \textbf{20}(2), 109--143 (2012)

\bibitem{comma16}
Lam, H.P., Governatori, G., Riveret, R.: On {ASPIC+} and {D}efeasible {L}ogic.
\newblock In: Proc. of the 6th Int. Conf. on Computational Models of Argument
  (COMMA 2016), \emph{Frontiers in Artificial Intelligence and Applications},
  vol. 287, pp. 359--370. {IOS Press} (2016)

\bibitem{LiThesis}
Li, H.: Probabilistic argumentation.
\newblock Ph.D. thesis, Department of Computing Science, University of Aberdeen
  (2015)

\bibitem{DBLP:conf/tafa/LiON11}
Li, H., Oren, N., Norman, T.J.: Probabilistic argumentation frameworks.
\newblock In: Revised Selected Papers of the 1st Int. Workshop on Theory and
  Applications of Formal Argumentation (TAFA 2011), \emph{LNAI}, vol. 7132, pp.
  1--16. Springer (2011)

\bibitem{Lietalrelaxing}
Li, H., Oren, N., Norman, T.J.: Relaxing independence assumptions in
  probabilistic argumentation.
\newblock In: Proc. of the 10th Int. Workshop on Argumentation in Multi-Agent
  Systems (2013)

\bibitem{Liao2015}
Liao, B., Huang, H.: Formulating semantics of probabilistic argumentation by
  characterizing subgraphs.
\newblock In: Proc. of the 5th Int. Workshop on Logic, Rationality, and
  Interaction (LORI 2015), \emph{LNCS}, vol. 9394, pp. 243--254. Springer
  (2015)

\bibitem{DBLP:journals/corr/LiaoXH16}
Liao, B., Xu, K., Huang, H.: Formulating semantics of probabilistic
  argumentation by characterizing subgraphs: Theory and empirical results.
\newblock CoRR \textbf{abs/1608.00302} (2016)

\bibitem{DBLP:journals/argcom/ModgilP14}
Modgil, S., Prakken, H.: The \emph{ASPIC}\({}^{\mbox{+}}\) framework for
  structured argumentation: a tutorial.
\newblock Argument {\&} Computation \textbf{5}(1), 31--62 (2014)

\bibitem{Nute01defeasiblelogic}
Nute, D.: Defeasible logic.
\newblock In: Handbook of Logic in Artificial Intelligence and Logic
  Programming, pp. 353--395. Oxford University Press (2001)

\bibitem{DBLP:conf/comma/OrenN08}
Oren, N., Norman, T.J.: Semantics for evidence-based argumentation.
\newblock In: Proc. of the 2nd Int. Conf. on Computational Models of Argument
  (COMMA 2008), \emph{Frontiers in Artificial Intelligence and Applications},
  vol. 172, pp. 276--284. {IOS} Press (2008)

\bibitem{DBLP:conf/comma/PolbergO14}
Polberg, S., Oren, N.: Revisiting support in abstract argumentation systems.
\newblock In: Proc. of the 5th Int. Conf. on Computational Models of Argument
  (COMMA 2014), \emph{Frontiers in Artificial Intelligence and Applications},
  vol. 266, pp. 369--376. {IOS} Press (2014)

\bibitem{Pollock94}
Pollock, J.L.: Justification and defeat.
\newblock Artificial Intelligence \textbf{67}(2), 377--407 (1994)

\bibitem{Pollock:1995}
Pollock, J.L.: Cognitive Carpentry: A Blueprint for How to Build a Person.
\newblock MIT Press (1995)

\bibitem{Prakken:2014:SRA:3006652.3006776}
Prakken, H.: On support relations in abstract argumentation as abstractions of
  inferential relations.
\newblock In: Proc. of the 21st European Conf. on Artificial Intelligence (ECAI
  2014), \emph{Frontiers in Artificial Intelligence and Applications}, vol.
  263, pp. 735--740. IOS Press (2014)

\bibitem{doi:10.1080/11663081.1997.10510900}
Prakken, H., Sartor, G.: Argument-based extended logic programming with
  defeasible priorities.
\newblock Journal of Applied Non-Classical Logics \textbf{7}(1-2), 25--75
  (1997)

\bibitem{DBLP:conf/at/Rienstra12}
Rienstra, T.: Towards a probabilistic {D}ung-style argumentation system.
\newblock In: Proc. of the 1st Int. Conf. on Agreement Technologies, vol. 918,
  pp. 138--152. CEUR (2012)

\bibitem{DBLP:conf/atal/RiveretG16}
Riveret, R., Governatori, G.: On learning attacks in probabilistic abstract
  argumentation.
\newblock In: Proc. of the 15th Int. Conf. on Autonomous Agents {\&} Multiagent
  Systems (AAMAS 2016), pp. 653--661. International Foundation for Autonomous
  Agents and Multiagent Systems (2016)

\bibitem{DBLP:journals/argcom/RiveretKDP15}
Riveret, R., Korkinof, D., Draief, M., Pitt, J.V.: Probabilistic abstract
  argumentation: an investigation with {B}oltzmann machines.
\newblock Argument {\&} Computation \textbf{6}(2), 178--218 (2015)

\bibitem{DBLP:conf/atal/RiveretPKD15}
Riveret, R., Pitt, J.V., Korkinof, D., Draief, M.: Neuro-symbolic agents:
  Boltzmann machines and probabilistic abstract argumentation with
  sub-arguments.
\newblock In: Proc. of the14th Int. Conf. on Autonomous Agents and Multiagent
  Systems (AAMAS 2015), pp. 1481--1489. International Foundation for Autonomous
  Agents and Multiagent Systems (2015)

\bibitem{DBLP:journals/ail/RiveretRS12}
Riveret, R., Rotolo, A., Sartor, G.: Probabilistic rule-based argumentation for
  norm-governed learning agents.
\newblock Artificial Intelligence and Law \textbf{20}(4), 383--420 (2012)

\bibitem{Riveret:2007}
Riveret, R., Rotolo, A., Sartor, G., Prakken, H., Roth, B.: Success chances in
  argument games: a probabilistic approach to legal disputes.
\newblock In: Proc. of the 20th Conf. on Legal Knowledge and Information
  Systems (JURIX 2007), \emph{Frontiers in Artificial Intelligence and
  Applications}, vol. 165, pp. 99--108. IOS Press (2007)

\bibitem{Roth:2007}
Roth, B., Riveret, R., Rotolo, A., Governatori, G.: Strategic argumentation: a
  game theoretical investigation.
\newblock In: Proc. of the 11th Int. Conf. on Artificial Intelligence and Law
  (ICAIL 07), pp. 81--90. ACM (2007)

\bibitem{DBLP:journals/jiis/Sato08a}
Sato, T.: A glimpse of symbolic-statistical modeling by {PRISM}.
\newblock Journal of Intelligent Information Systems \textbf{31}(2), 161--176
  (2008)

\bibitem{DBLP:journals/corr/abs-1007-3858}
Sneyers, J., Meert, W., Vennekens, J., Kameya, Y., Sato, T.: Chr(prism)-based
  probabilistic logic learning.
\newblock CoRR \textbf{abs/1007.3858} (2010)

\bibitem{TLP:9011718}
Sneyers, J., Schreye, D.D., Fruhwirth, T.: Probabilistic legal reasoning in
  {CHRiSM}.
\newblock Theory and Practice of Logic Programming \textbf{13}, 769--781 (2013)

\bibitem{DBLP:conf/aaai/TangOS16}
Tang, Y., Oren, N., Sycara, K.P.: Markov argumentation random fields.
\newblock In: Proc. of the 30th {AAAI} Conference on Artificial Intelligence
  (AAAI 2016), pp. 4397--4398. {AAAI} Press (2016)

\bibitem{DBLP:conf/ecai/Thimm12}
Thimm, M.: A probabilistic semantics for abstract argumentation.
\newblock In: Proc. of the 20th European Conf. on Artificial Intelligence (ECAI
  2012), \emph{Frontiers in Artificial Intelligence and Applications}, vol.
  242, pp. 750--755. {IOS} Press (2012)

\bibitem{DBLP:conf/jurix/TimmerMPRV14}
Timmer, S.T., Meyer, J.J.C., Prakken, H., Renooij, S., Verheij, B.: Extracting
  legal arguments from forensic {B}ayesian networks.
\newblock In: Proc. of the 27th Conf. on Legal Knowledge and Information
  Systems (JURIX 2014), \emph{Frontiers in Artificial Intelligence and
  Applications}, vol. 271, pp. 71--80. {IOS} Press (2014)

\bibitem{Timmer2015}
Timmer, S.T., Meyer, J.J.C., Prakken, H., Renooij, S., Verheij, B.: Explaining
  {B}ayesian networks using argumentation.
\newblock In: Proc. of the 13th European Conf. on Symbolic and Quantitative
  Approaches to Reasoning with Uncertainty (ECSQARU 2015), \emph{LNAI}, vol.
  9161, pp. 83--92. Springer (2015)

\bibitem{Timmer:2015:SAC}
Timmer, S.T., Meyer, J.J.C., Prakken, H., Renooij, S., Verheij, B.: A
  structure-guided approach to capturing {B}ayesian reasoning about legal
  evidence in argumentation.
\newblock In: Proc. of the 15th Int. Conf. on Artificial Intelligence and Law
  (ICAIL 2015), pp. 109--118 (2015)

\bibitem{Toni2014}
Toni, F.: A tutorial on assumption-based argumentation.
\newblock Argument {\&} Computation \textbf{5}(1), 89 -- 117 (2014)

\bibitem{Verheij:2016}
Verheij, B., ~, F., Timmer, S.T., Vlek, C.S., Meyer, J.J.C., Renooij, S.,
  Prakken, H.: Arguments, scenarios and probabilities: connections between
  three normative frameworks for evidential reasoning.
\newblock Law, Probability \& Risk \textbf{15}(1), 35--70 (2015)

\bibitem{Verheij96twoapproaches}
Verheij, B.: Two approaches to dialectical argumentation: Admissible sets and
  argumentation stages.
\newblock In: Proc. of the Int. Conf. on Formal and Applied Practical Reasoning
  (FAPR 1996), \emph{LNAI}, vol. 1085, pp. 357--368. Springer (1996)

\bibitem{DBLP:conf/jelia/Verheij12}
Verheij, B.: Jumping to conclusions - {A} logico-probabilistic foundation for
  defeasible rule-based arguments.
\newblock In: Proc. of the 13th European Conf. on Logics in Artificial
  Intelligence ({JELIA} 2012), \emph{LNCS}, vol. 7519, pp. 411--423. Springer
  (2012)

\bibitem{DBLP:conf/comma/Verheij14}
Verheij, B.: Arguments and their strength: Revisiting {P}ollock's
  anti-probabilistic starting points.
\newblock In: Proc. of the 5th Int. Conf. on Computational Models of Argument
  (COMMA 2014), \emph{Frontiers in Artificial Intelligence and Applications},
  vol. 266, pp. 433--444. {IOS} Press (2014)

\bibitem{Verheij:2014}
Verheij, B.: To catch a thief with and without numbers: arguments, scenarios
  and probabilities in evidential reasoning.
\newblock Law, Probability \& Risk \textbf{13}(3-4), 307--325 (2014)

\bibitem{Vreeswijk2005}
Vreeswijk, G.A.W.: Argumentation in {B}ayesian belief networks.
\newblock In: Revised Selected and Invited Papers of the 1st Int.Workshop on
  Argumentation in Multi-Agent Systems (ARGMAS 2004), \emph{LNCS}, vol. 3366,
  pp. 111--129. Springer (2005)

\bibitem{Vreeswijk:2006:ACM:1565233.1565247}
Vreeswijk, G.A.W.: An algorithm to compute minimally grounded and admissible
  defence sets in argument systems.
\newblock In: Proc. of the 1st Int. Conf. on Computational Models of Argument
  (COMMA 2006), \emph{Frontiers in Artificial Intelligence and Applications},
  vol. 144, pp. 109--120. IOS Press (2006)

\bibitem{Walley91}
Walley, P.: Statistical Reasoning with Imprecise Probabilities.
\newblock Chapman and Hall (1991)

\end{thebibliography}

\end{document}